\newcommand{\sknn}{\ensuremath{\text{s}k\text{NN}}}
\newcommand{\asym}{A^{\text{sym}}}
\newcommand{\lsym}{L^{\text{sym}}}
\theoremstyle{plain}
\newtheorem{theorem}{Theorem}[section]
\newtheorem{proposition}[theorem]{Proposition}
\newtheorem{lemma}[theorem]{Lemma}
\newtheorem{corollary}[theorem]{Corollary}
\theoremstyle{definition}
\newtheorem{definition}[theorem]{Definition}
\theoremstyle{remark}
\title{Persistent Homology for High-dimensional Data \\Based on Spectral Methods}
\author{Sebastian Damrich\footnotemark[2] \hspace{1cm}
Philipp Berens\footnotemark[2]\hspace{0.15cm}\footnotemark[3]\hspace{1cm}
Dmitry Kobak\footnotemark[2]\hspace{0.15cm}\footnotemark[4]\\
\footnotemark[2]\hspace{0.2cm}Hertie Institute for AI in Brain Health, University of T\"{u}bingen, Germany\\
\footnotemark[3]\hspace{0.2cm}T\"{u}bingen AI Center, Germany\\
\footnotemark[4]\hspace{0.2cm}IWR, Heidelberg University, Germany\\
\texttt{\{sebastian.damrich,philipp.berens,dmitry.kobak\}@uni-tuebingen.de}
}
\begin{document}

\maketitle

\begin{abstract}
    Persistent homology is a popular computational tool for analyzing the topology of point clouds, such as the presence of loops or voids. However, many real-world datasets with low intrinsic dimensionality reside in an ambient space of much higher dimensionality. We show that in this case traditional persistent homology becomes very sensitive to noise and fails to detect the correct topology. The same holds true for existing refinements of persistent homology. As a remedy, we find that spectral distances on the $k$-nearest-neighbor graph of the data, such as diffusion distance and effective resistance, allow to detect the correct topology even in the presence of high-dimensional noise. Moreover, we derive a novel closed-form formula for effective resistance, and describe its relation to diffusion distances. Finally, we apply these methods to high-dimensional single-cell RNA-sequencing data and show that spectral distances allow robust detection of cell cycle loops.
\end{abstract}

\section{Introduction}

Algebraic topology can describe the shape of a continuous manifold. In particular, it can detect if a manifold has holes, using its so-called homology groups~\citep{hatcher2002algebraic}. For example, a cup has a single one-dimensional hole, or \textit{loop} (its handle), whereas a football has a single two-dimensional hole, or \textit{void} (its hollow interior). These global topological properties are often helpful for understanding an object's overall structure. However, real-world datasets are typically given as point clouds, a discrete set of points sampled from an underlying manifold. In this setting, true homologies are trivial, as there is one connected component per point and no holes whatsoever; instead, \textit{persistent homology} can be used to find holes in point clouds and to assign an importance score called \textit{persistence} to each~\citep{edelsbrunner2002topological, zomorodian2004computing}. Holes with high persistence are indicative of holes in the underlying manifold. Persistence homology has been successfully applied in machine learning pipelines, for instance for gait recognition~\citep{lamar2012human}, instance segmentation~\citep{hu2019topology}, and protein binding~\citep{wang2020topology}, as well as for neural network analysis~\citep{rieck2018neural}.

Persistent homology works well for low-dimensional data~\citep{turkes2022effectiveness} but we find that it has difficulties in high dimensionality. If data points are sampled from a low-dimensional manifold embedded in a high-dimensional ambient space (\textit{manifold hypothesis}), then the measurement noise typically affects all ambient dimensions. In this setting, traditional persistent homology is not robust against even low levels of noise. On a dataset as simple as a circle in $\mathbb R^{50}$, persistent homology based on the Euclidean distance between noisy points can fail to identify the correct loop as a clear outlier in the persistence diagram (Figure~\ref{fig:1}). The aim of our work is to find alternatives to traditional persistent homology that can robustly detect the correct topology despite high-dimensional noise.

We were inspired by visualization methods $t$-SNE~\citep{van2008visualizing} and UMAP~\citep{mcinnes2018umap} that are able to depict the loop in the same noisy dataset (Figure~\ref{fig:1}d,e). They approximate the data manifold by the $k$-nearest-neighbor ($k$NN) graph~\citep{tenenbaum2000global, roweis2000nonlinear,belkin2001laplacian, hinton2002stochastic,moon2019visualizing}. Therefore, we suggest to use persistent homology with spectral distances on this $k$NN graph, such as the effective resistance~\citep{doyle1984random} and the diffusion distance~\citep{coifman2006diffusion}. Effective resistance successfully identified the correct loop in the above toy example (Figure~\ref{fig:1}). We also found spectral distances to outperform other distances in detecting the correct topology of several synthetic datasets as well as finding the cell cycles in single-cell RNA-sequencing data.

Our contributions are:

\begin{enumerate}[nosep, leftmargin=0.5\leftmargin]%
    \item an analysis of the failure modes of persistent homology for noisy high-dimensional data;
    \item a closed-form expression for effective resistance, explaining its relation to diffusion distances;
    \item a synthetic benchmark, with spectral distances outperforming state-of-the-art alternatives;
    \item an application to a range of single-cell RNA-sequencing datasets with ground-truth cycles.
\end{enumerate}

Our code is available at \url{https://github.com/berenslab/eff-ph/tree/neurips2024}.

\begin{figure}[tb]
    \centering
    \includegraphics[width=\textwidth]{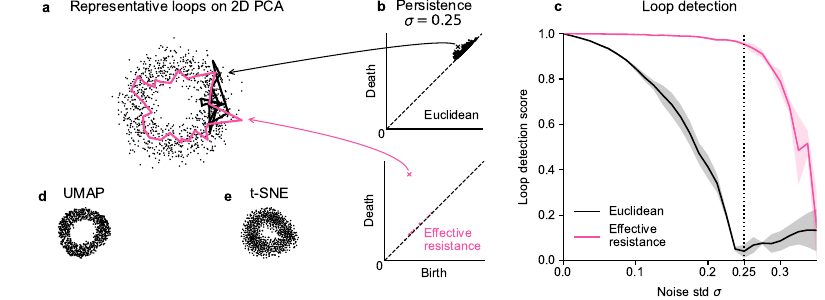}
    \caption{\textbf{a.} 2D PCA of a noisy circle ($\sigma=0.25$, radius 1) in $\mathbb R^{50}$. Overlaid are representative cycles of the most persistent loops. \textbf{b.} Persistence diagrams using Euclidean distance and the effective resistance.
    \textbf{c.} Loop detection scores of persistent homology using effective resistance and Euclidean distance. \textbf{d, e.} UMAP and $t$-SNE embeddings of the same data, showing the loop structure in 2D.}
    \label{fig:1}
\end{figure}

\section{Related work}

Persistent homology has long been known to be sensitive to outliers~\citep{chazal2011geometric} and several extensions have been proposed to make it more robust. One recurring idea is to replace the Euclidean distance with a different distance matrix, before running persistent homology. \citet{bendich2011improving} suggested to use diffusion distances~\citep{coifman2006diffusion}, but their empirical validation was limited to a single dataset in 2D. \citet{anai2020dtm} suggested to use the distance-to-measure (DTM)~\citep{chazal2011geometric} and \citet{fernandez2023intrinsic} proposed to use Fermat distances~\citep{groisman2022nonhomogeneous}. \citet{vishwanath2020robust} introduced persistent homology based on robust kernel density estimation, an approach that itself becomes challenging in high dimensionality. All of these works focused on low-dimensional datasets ($<$10D, mostly 2D or 3D), while our work specifically addresses the challenges of persistent homology in high dimensionality.

The concurrent work of~\citet{hiraoka2024curse} is the most relevant related work. Their treatment of the curse of dimensionality of persistent homology is mostly theoretical, while ours has an empirical focus. The two works are thus complementary to each other. \citeauthor{hiraoka2024curse}'s theoretical description of the curse of dimensionality is similar to ours (Appendix~\ref{app:dist_by_dim}) in that it analyses how distance concentration with high-dimensional noise impairs persistent homology, but is more general. Practically, \citeauthor{hiraoka2024curse} propose normalized PCA to mitigate the curse of dimensionality. However, this approach assumes the true dimensionality of the data to be known, which is not realistic in real-world applications, and performs worse than our suggestions~(Appendix~\ref{app:pca}).

Below, we recommend using effective resistance and diffusion distances for persistent homology in high-dimensional spaces. Both of these distances, as well as the shortest path distance, have been used in combination with persistent homology to analyze the topology of graph data~\citep{petri2013networks, hajij2018visual, aktas2019persistence,tran2019scale, carriere2020perslay, memoli2022persistent, davies2023persistent}. Shortest paths on the $k$NN graph were also used by~\citet{naitzat2020topology} and \citet{fernandez2023intrinsic}. Motivated by the performance of UMAP~\citep{mcinnes2018umap} for dimensionality reduction, ~\citet{gardner2022toroidal} and \citet{hermansen2024uncovering} used UMAP affinities to define distances for persistent homology.

Effective resistance is a well-established graph distance~\citep{doyle1984random, fouss2016algorithms}. A correction, more appropriate for large graphs, was suggested by~\citet{luxburg2010getting, von2014hitting}. When speaking of \textit{effective resistance}, we mean this corrected version, if not otherwise stated. It has not yet been combined with persistent homology. Conceptually similar diffusion distances~\citep{coifman2006diffusion} have been used in single-cell RNA-sequencing data analysis, for dimensionality reduction~\citep{moon2019visualizing}, trajectory inference~\citep{haghverdi2015diffusion}, feature extraction~\citep{chew2022manifold}, and hierarchical clustering, similar to 0D persistent homology~\citep{brugnone2019coarse, kuchroo2023single}. 

Persistent homology has been applied to single-cell RNA-sequencing data, but only the concurrent work of~\citet{flores2023unraveling} applies it directly to the high-dimensional data. \citet{wang2023cannot} used a Witness complex on a PCA of the data. Other works applied persistent homology to a derived graph, e.g., a gene regulator network~\citep{masoomy2021topological} or a Mapper graph~\citep{singh2007topological, rizvi2017single}. In other biological contexts, persistent homology has also been applied to a low-dimensional representation of the data: 3D projection of cytometry data~\citep{mukherjee2022determining}, 6D PCA of hippocampal spiking data~\citep{gardner2022toroidal}, and 3D PHATE embedding of calcium signaling~\citep{moore2023cell}. Several recent applications of persistent homology only computed 0D features (i.e. clusters)~\citep{hajij2018visual, jia2022single, petenkaya2022identifying}, which amounts to doing single linkage clustering~\citep{gower1969minimum}. Here we only investigate the detection of higher-dimensional (1D and 2D) holes with persistent homology. The dimensionality of the data itself, however, is typically much higher.

\begin{wrapfigure}{R}{0.6\textwidth}
    \centering
    \vspace*{-0.75cm}
    \includegraphics[width=\linewidth]{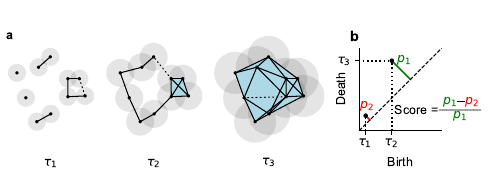}
    \caption{\textbf{a.} Persistent homology applied to a noisy circle ($n=10$) in 2D tracks appearing and disappearing holes as balls grow around each datapoint. 
    Dotted lines show the graph edges that lead to the birth / death of two loops (Section~\ref{sec:PH}). \textbf{b.} The corresponding persistence diagram with two detected 1D holes (loops). Our \textit{hole detection score} measures the gap in persistence between the first and the second detected holes (Section~\ref{para:performance_metric}).}
    \label{fig:ph}
\end{wrapfigure}

\section{Background: persistent homology}
\label{sec:PH}

Persistent homology computes topological invariants of a space at different scales. For point clouds, the different scales are typically given by growing a ball around each point (Figure~\ref{fig:ph}a), and letting the radius $\tau$ grow from $0$ to infinity. For each value of $\tau$, homology groups of the union of all balls are computed to find the holes, and holes that \textit{persist} for longer time periods are considered more prominent. At $\tau\approx 0$, there are no holes as the balls are non-overlapping, while at $\tau\to\infty$ there are no holes as all the balls merge together.

To keep the computation tractable, instead of the union of growing balls, persistent homology operates on a so-called \textit{filtered simplicial complex} (Figure~\ref{fig:ph}a). A simplicial complex is a hypergraph containing points as nodes, edges between nodes, triangles bounded by edges, and so forth. These building blocks are called \textit{simplices}. At each time $\tau$, the complex encodes all intersections between the balls and suffices to find the holes. The complexes at smaller $\tau$ values are nested within the complexes at larger $\tau$ values, and together form a filtered simplicial complex, with $\tau$ being the filtration time. In this work, we only use the Vietoris--Rips complex, which includes an $n$-simplex $(v_0, v_1, \dots, v_n)$ at filtration time $\tau$ if the distances between all pairs $v_i, v_j$ are at most $\tau$. Therefore, to build a Vietoris--Rips complex, it suffices to provide pairwise distances between all pairs of points. We compute persistent homology via the \texttt{ripser} package~\citep{bauer2021ripser} to which we pass a distance matrix.

Persistent homology consists of a set of holes for each dimension. We limit ourselves to loops and voids. Each hole has associated birth and death times $(\tau_b, \tau_d)$, i.e., the first and last filtration value $\tau$ at which that hole exists. Their difference $p = \tau_d - \tau_b$ is called the \textit{persistence} of the hole and quantifies its prominence. The birth and death times can be visualized as a scatter plot (Figure~\ref{fig:ph}b), known as the \textit{persistence diagram}. Points far from the diagonal have high persistence. This process is illustrated in Figure~\ref{fig:ph} for a noisy sample of $n=10$ points from a circle $S^1 \subset \mathbb R^2$.  At $\tau_1$, a small spurious loop is formed thanks to the inclusion of the dotted edge, but it dies soon afterwards.  The ground-truth loop is formed at $\tau_2$ and dies at $\tau_3$, once the hole is completely filled in by triangles. Both loops (one-dimensional holes) found in this dataset are shown in the persistence diagram.

\section{The curse of dimensionality for persistent homology
}
\label{sec:curse}

\begin{figure*}[tb]
    \centering\includegraphics[width=5.5in]{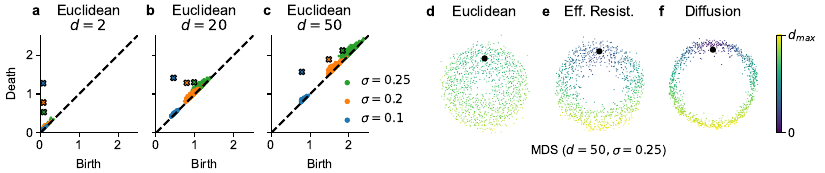}
    \caption{\textbf{a\,--\,c.} Persistence diagrams of a noisy circle in different ambient dimensionality and with different amount of noise. Ideally, there should be one feature (point) with high persistence, corresponding to the circle. But for high noise and dimensionality that feature vanishes into the noise cloud near the diagonal. \textbf{d\,--\,f.} Multidimensional scaling of Euclidean, effective resistance, and  diffusion distances for a noisy circle in $\mathbb R^{50}$. Color indicates the distance to the highlighted point.
    }
    \label{fig:curse}
\end{figure*}

While persistent homology is robust to small changes in point positions~\citep{cohen2005stability}, the curse of dimensionality can still severely hurt its performance. To illustrate, we consider the same toy setting as in Figure~\ref{fig:1}: we sample points from $S^1 \subset \mathbb R^d$, and add Gaussian noise of standard deviation $\sigma$ to each ambient coordinate. When $d=2$, higher noise does not affect the birth times but leads to lower death times (Figure~\ref{fig:curse}a), because some points get distorted to the middle of the circle and the hole fills up at earlier $\tau$. When we increase the ambient dimensionality to $d=20$, higher noise leads to later birth times (Figure~\ref{fig:curse}b) because in higher dimensionality distances get dominated by the noise dimensions rather than by the circular structure. Indeed, in Corollary~\ref{cor:noise_dominates_n_pts} we prove that for any two points $x_i, x_j\in \mathbb{R}^d$ and two isotropic, multivariate normal noise vectors $\varepsilon_1, \varepsilon_2\sim\mathcal{N}(\mathbf{0}, \sigma^2\mathbf{I}_d)$ the ratio $\|\varepsilon_1 - \varepsilon_2\| / \|x_i+ \varepsilon_1 - (x_2 +\varepsilon_2)\| \to 1$ in probability as $d\to \infty$. Finally, for $d=50$ both the birth \textit{and} the death times increase with $\sigma$ (Figure~\ref{fig:curse}c, Corollary~\ref{cor:noise_dominates_ph}) and the ground-truth hole disappears in the cloud of spurious holes. Applying MDS to the Euclidean distances obtained with $d=50$ and $\sigma=0.25$ yields a 2D layout with almost no visible hole, because all distances have become similar (Figure~\ref{fig:curse}d). 
See the concurrent work of~\citet{hiraoka2024curse} for a more detailed treatment.

Therefore, the failure modes of persistent homology differ between low- and  high-dimensional spaces. While in low dimensions, persistent homology is susceptible to outlier points in the middle of the circle, in high dimensions, there are no points in the middle of the circle; instead, all distances become too similar, hiding the true loops. See Appendix~\ref{sec:outliers} for more details on the effect of outliers.

\section{Spectral distances are more robust}
\label{sec:background-spectral}

Many modern manifold learning and dimensionality reduction methods rely on the $k$-nearest-neighbor ($k$NN) graph of the data. This works well because, although distances become increasingly similar in high-dimensional spaces, nearest neighbors still carry information about the data manifold. To make persistent homology overcome high-dimensional noise, we therefore suggest to rely on the symmetric $k$NN graph, which contains edge $ij$ if node $i$ is among the $k$ nearest neighbors of $j$ or vice versa. A natural choice is to use its geodesics, but, as we show below, this does not work well, likely because a single graph edge across a circle can destroy the corresponding feature too early. Instead, we propose to use spectral methods, such as the effective resistance or diffusion distance. Both methods rely on random walks and thus incorporate information from all edges. 

For a connected graph $G$ with $n$ nodes, e.g., the symmetric $k$NN graph, let $A$ be its symmetric, $n\times n$ adjacency matrix with elements $a_{ij}=1$ if edge $ij$ exits in $G$ and $a_{ij}=0$ otherwise. The degree matrix $D$ is defined by $D = \text{diag}\{d_i\}$, where $d_i = \sum_{j=1}^n a_{ij}$ are the node degrees. We define $\text{vol}(G) = \sum_{i=1}^n d_i$. Let $H_{ij}$ be the \textit{hitting time} from node $i$ to $j$, i.e., the average number of edges it takes a random walker, that starts at node $i$ randomly moving along edges, to reach node $j$. The naive effective resistance is defined as $\tilde{d}^\text{eff}_{ij} = (H_{ij} + H_{ji}) / \text{vol}(G)$. This version is known to be unsuitable for large graphs (Figure~\ref{fig:eff_res_comparison}) because it reduces to $\tilde{d}^\text{eff}_{ij} \approx 1/d_i + 1/d_j$ ~\citep{luxburg2010getting}. Therefore, we used~\citet{luxburg2010getting}'s corrected version
\begin{equation}
    d^\text{eff}_{ij} = \tilde{d}^\text{eff}_{ij} - 1/d_i - 1/d_j + 2 a_{ij}/(d_id_j) - a_{ii}/d_i^2 - a_{jj}/d_j^2.
\end{equation}

Diffusion distances also rely on random walks. The random walk transition matrix is given by $P = D^{-1}A$. Then $P^t_{i, :}$, the $i$-th row of $P^t$, holds the probability distribution over nodes after $t$ steps of a random walker starting at node $i$. The diffusion distance is then defined as 
\begin{equation}
    d_{ij}(t) = \sqrt{\text{vol}(G)}\| (P^t_{i, :} - P^t_{j,:})D^{-\frac{1}{2}}\|. \label{eq:diff}
\end{equation}
There are many possible random walks between nodes $i$ and $j$ if they both reside in the same densely connected region of the graph, while it is unlikely for a random walker to cross between sparsely connected regions. As a result, both effective resistance and diffusion distance are small between parts of the graph that are densely connected and are robust against single stray edges (Figure~\ref{fig:eff_res_geo_intuition}). This makes spectral distances on the $k$NN graph the ideal input to persistent homology for detecting the topology of data in high-dimensional spaces. Indeed, the MDS embedding of the effective resistance and of the diffusion distance of the circle in ambient $\mathbb{R}^{50}$ both clearly show the circular structure (Figure~\ref{fig:curse}e,f).

\begin{figure}[tb]
    \centering
    \includegraphics[width=\textwidth]{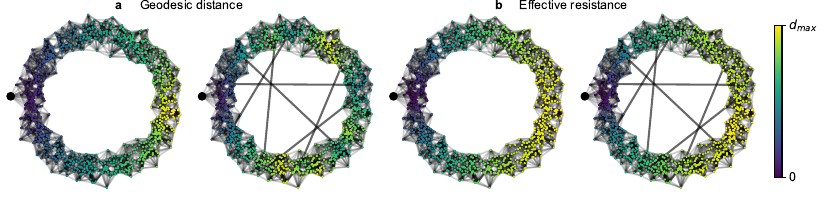}
    \caption{Robustness of effective resistance. We sampled $n=1\,000$ points from a noisy circle in 2D with Gaussian noise of standard deviation $\sigma=0.1$, constructed the unweighted symmetric $15$-NN graph, and optionally added 10 random edges (thick lines). Node colors indicate the graph distance from the fat black dot. \textbf{a.} The geodesic distance is severely affected by the random edges. \textbf{b.}~The effective resistance distance is robust to them.}
    \label{fig:eff_res_geo_intuition}
\end{figure}

\section{Relation between spectral distances}
\label{sec:spectral}

We show in Section~\ref{sec:benchmarks} that spectral methods excel as input distances to persistent homology for high-dimensional data. But first, we explain the relationships between them. 
Laplacian Eigenmaps distance and diffusion distance can be written as Euclidean distances in data representations given by appropriately scaled eigenvectors of the graph Laplacian. In this section, we derive a similar closed-form formula for effective resistance and show that effective resistance aggregates all but the most local diffusion distances.

Let $\asym = D^{-\frac{1}{2}} A D^{-\frac{1}{2}}$ and $\lsym = I - \asym$ be the symmetrically normalized adjacency and Laplacian matrix. We denote the eigenvectors of $\lsym$ by $u_1, \dots, u_n$ and their eigenvalues by $\mu_1, \dots, \mu_n$ in increasing order. 
For a connected graph, $\mu_1 = 0$ and $u_1 = D^{\frac{1}{2}} (1, \ldots, 1)^\top / \sqrt{\text{vol}(G)}$.

The $\tilde{d}$-dimensional Laplacian Eigenmaps embedding is given by the first $\tilde{d}$ nontrivial eigenvectors: 
\begin{align}
    d_{ij}^\text{LE}(\tilde{d}) 
    &= \| e_i^{\text{LE}}(\tilde{d}) - e_j^\text{LE}(\tilde{d})\|,\text{ where } 
    e^{\text{LE}}_i(\tilde{d}) = (u_{2,i},\ldots,u_{(\tilde{d}+1), i}).
\end{align}
The diffusion distance after $t$ diffusion steps is given by~\citep{coifman2006diffusion}
\begin{align}\label{eq:spectral_diff}
    d_{ij}^{\text{diff}}(t) = \sqrt{\text{vol}(G)}\| e_i^{\text{diff}}(t) - e_j^{\text{diff}}(t)\|, \text{ where }
    e_i^{\text{diff}}(t) = \frac{\big((1-\mu_2)^t u_{2,i}, \ldots, (1-\mu_n)^t u_{n,i}\big)}{\sqrt{d_i}}.
\end{align}
The original uncorrected version of effective resistance is given by~\citep{lovasz1993random}
\begin{align}
    \tilde{d}^{\text{eff}}_{ij} = \| \tilde{e}_i^{\text{eff}} - \tilde{e}_j^{\text{eff}}\|^2, \text{ where }
    \tilde{e}_i^{\text{eff}} = \big(u_{2,i} /\sqrt{\mu_2},\ldots, u_{n,i}/\sqrt{\mu_n}\big)/\sqrt{d_i}.\label{eq:eff_res_naive}
\end{align}

\begin{figure}[t]
    \centering
    \includegraphics[width=5.5in]{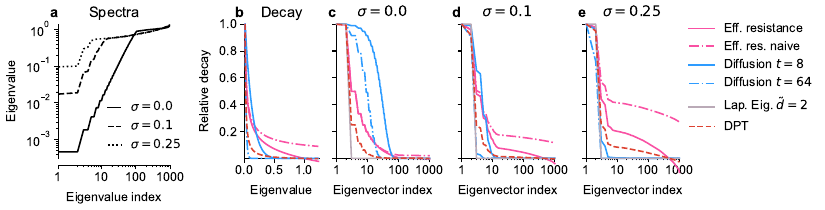}
    \caption{\textbf{a.} Eigenvalue spectra of the $k$NN graph Laplacian for the noisy circle in ambient $\mathbb{R}^{50}$ for noise levels \mbox{$\sigma=\{0.0, 0.1, 0.25\}$.} \textbf{b.} Decay of eigenvector contribution based on the eigenvalue for effective resistance, diffusion distances and DPT. \textbf{c\,--\,e.} Relative contribution of each eigenvector for eff. resistance, diffusion distance, Laplacian Eigenmaps, and DPT for various noise levels (Section~\ref{sec:spectral}).}
    \label{fig:eigenvalues}
\end{figure}

In Appendix~\ref{app:eff_res} we prove that the corrected effective resistance \citep{luxburg2010getting} can also be written in this form:
\begin{proposition} \label{prop:corr_eff_res_main}
The corrected effective resistance distance can be computed by 
\begin{align}
    d_{ij}^\textnormal{eff}= \|e_i^\textnormal{eff} -e_j^\textnormal{eff}\|^2, \text{ where } 
    e_i^\textnormal{eff} = \left(\frac{1-\mu_2}{\sqrt{\mu_2} }u_{2,i}, \ldots, \frac{1-\mu_n}{\sqrt{\mu_n}}u_{n,i}\right) \Big/ \sqrt{d_i}. \label{eq:eff_res_corr}
\end{align}
\end{proposition}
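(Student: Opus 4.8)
The plan is to expand both the known closed form~\eqref{eq:eff_res_naive} for the uncorrected effective resistance and the claimed expression~\eqref{eq:eff_res_corr} as finite sums over the nontrivial eigenvectors of $\lsym$, and to verify that their difference equals exactly the correction term in \citet{luxburg2010getting}'s formula. Writing $v_{\ell,i} = u_{\ell,i}/\sqrt{d_i}$, we have $\tilde d^{\text{eff}}_{ij} = \sum_{\ell=2}^n \mu_\ell^{-1}(v_{\ell,i}-v_{\ell,j})^2$ and $\|e_i^{\text{eff}}-e_j^{\text{eff}}\|^2 = \sum_{\ell=2}^n \mu_\ell^{-1}(1-\mu_\ell)^2(v_{\ell,i}-v_{\ell,j})^2$, so their difference is $\sum_{\ell=2}^n \mu_\ell^{-1}\bigl(1-(1-\mu_\ell)^2\bigr)(v_{\ell,i}-v_{\ell,j})^2$. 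Since $1-(1-\mu_\ell)^2 = \mu_\ell(2-\mu_\ell)$, this collapses to $S_{ij} := \sum_{\ell=2}^n (2-\mu_\ell)(v_{\ell,i}-v_{\ell,j})^2$, and it then remains to show $S_{ij} = 1/d_i + 1/d_j - 2a_{ij}/(d_id_j) + a_{ii}/d_i^2 + a_{jj}/d_j^2$, which is precisely $\tilde d^{\text{eff}}_{ij} - d^{\text{eff}}_{ij}$.

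To evaluate $S_{ij}$ I would use two spectral identities for the full orthonormal eigenbasis of $\lsym$: completeness, $\sum_{\ell=1}^n u_{\ell,i}u_{\ell,j} = \delta_{ij}$, and the spectral decomposition of $\asym = I - \lsym$, namely $\sum_{\ell=1}^n (1-\mu_\ell)u_{\ell,i}u_{\ell,j} = (\asym)_{ij} = a_{ij}/\sqrt{d_id_j}$. Subtracting the explicitly known trivial contribution ($\mu_1 = 0$, $u_{1,i} = \sqrt{d_i/\text{vol}(G)}$) from each gives closed forms for $\sum_{\ell=2}^n u_{\ell,i}u_{\ell,j}$ and $\sum_{\ell=2}^n (1-\mu_\ell)u_{\ell,i}u_{\ell,j}$; adding them yields $\sum_{\ell=2}^n (2-\mu_\ell)u_{\ell,i}u_{\ell,j} = \delta_{ij} + a_{ij}/\sqrt{d_id_j} - 2\sqrt{d_id_j}/\text{vol}(G)$. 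Dividing by $\sqrt{d_id_j}$, and specializing to $i=j$ for the squared terms, turns the expansion $S_{ij} = \sum_{\ell=2}^n(2-\mu_\ell)\bigl(v_{\ell,i}^2 + v_{\ell,j}^2 - 2v_{\ell,i}v_{\ell,j}\bigr)$ into an explicit expression in $d_i$, $d_j$, $a_{ij}$, $a_{ii}$, $a_{jj}$ and $\text{vol}(G)$.

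The final step is bookkeeping: for $i\ne j$ the $\text{vol}(G)$ contributions enter as $-2/\text{vol}(G)$ from the $v_{\ell,i}^2$ sum, $-2/\text{vol}(G)$ from the $v_{\ell,j}^2$ sum, and $+4/\text{vol}(G)$ from $-2$ times the cross term (here $\delta_{ij}=0$), and they cancel; what survives is exactly $1/d_i + 1/d_j + a_{ii}/d_i^2 + a_{jj}/d_j^2 - 2a_{ij}/(d_id_j)$, as required (the $i=j$ case is trivial, since both sides vanish). I expect the only genuine subtleties to be (i) correctly peeling off the $\ell=1$ term in each identity and tracking the $\text{vol}(G)$ terms through to their cancellation, and (ii) noting that connectedness of $G$ guarantees $\mu_\ell > 0$ for $\ell\ge 2$, so all divisions by $\mu_\ell$ and $\sqrt{\mu_\ell}$ are legitimate; the Lov\'asz closed form~\eqref{eq:eff_res_naive} is taken as the starting point.
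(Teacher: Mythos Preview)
Your argument is correct and complete: the algebraic identity $1-(1-\mu_\ell)^2 = \mu_\ell(2-\mu_\ell)$ reduces the difference $\tilde d^{\text{eff}}_{ij} - \|e_i^{\text{eff}}-e_j^{\text{eff}}\|^2$ to $S_{ij}$, and the two spectral identities you invoke (completeness of the $u_\ell$ and the eigendecomposition of $\asym$) indeed yield $S_{ij} = 1/d_i + 1/d_j + a_{ii}/d_i^2 + a_{jj}/d_j^2 - 2a_{ij}/(d_id_j)$ after the $\text{vol}(G)$ terms cancel, exactly as you outline.

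Your route differs from the paper's. The paper does not start from the Lov\'asz closed form~\eqref{eq:eff_res_naive}; instead it goes further upstream, beginning with an intermediate expression for the one-sided hitting time $H_{ij}/\text{vol}(G)$ taken from the proof of Proposition~2 in~\citep{von2010hitting}, symmetrizing it, and then splitting the result into a ``local'' term $\langle b_j-b_i, \asym(b_j-b_i)\rangle$ (which produces the $a_{ij}/(d_id_j)$ pieces) and a spectral sum $\sum_{r\ge 2}\mu_r^{-1}\langle \asym(b_j-b_i), u_ru_r^\top \asym(b_j-b_i)\rangle$ (which, via $\asym u_r = (1-\mu_r)u_r$, becomes $\|e_i^{\text{eff}}-e_j^{\text{eff}}\|^2$). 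Your approach is more self-contained within the paper's Section~\ref{sec:spectral}: it takes~\eqref{eq:eff_res_naive} as a black box and needs only the orthonormality of the $u_\ell$ and the definition of $\asym$, avoiding any appeal to the hitting-time machinery. The paper's approach, by contrast, makes the connection to~\citep{von2010hitting} explicit and would still go through even without citing~\eqref{eq:eff_res_naive}. Both are short; yours is arguably the cleaner bookkeeping exercise.
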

It has been known~\cite{memoli} that the  uncorrected effective resistance can be written in terms of diffusion distances as \mbox{$\tilde{d}^{\textnormal{eff}}_{ij} = \sum_{t=0}^\infty d^{\textnormal{diff}}_{ij}(t/2)^2 / \text{vol}(G)$,} see Proposition~\ref{prop:diff_naive_eff_res}.
Here, based on Proposition~\ref{prop:corr_eff_res_main}, we derive a similar result for the corrected effective resistance (proof in Appendix~\ref{app:diff_eff_res}): 
\begin{corollary}\label{cor:diff_corr_eff_res_short}
    If $G$ is connected and not bipartite, we have
    \begin{align}
        d^{\textnormal{eff}}_{ij} = 
        \sum_{t=2}^\infty d^{\textnormal{diff}}_{ij}(t/2)^2 / \textnormal{vol}(G) \label{eq:diff_corr_eff_res_main} 
        \quad \text{ and hence } \quad       \tilde{d}^{\textnormal{eff}}_{ij} - d^{\textnormal{eff}}_{ij} = \Big(d^{\textnormal{diff}}_{ij}(0)^2 + d^{\textnormal{diff}}_{ij}(1/2)^2\Big)/\textnormal{vol}(G).
    \end{align}
\end{corollary}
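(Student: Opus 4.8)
The plan is to reduce both sides to their spectral expansions and to recognize the right-hand side as a geometric series in the normalized Laplacian eigenvalues. Writing $c_\ell := \bigl(u_{\ell,i}/\sqrt{d_i} - u_{\ell,j}/\sqrt{d_j}\bigr)^2 \ge 0$, equation~\eqref{eq:spectral_diff} gives $d^{\textnormal{diff}}_{ij}(s)^2 = \textnormal{vol}(G)\sum_{\ell=2}^n (1-\mu_\ell)^{2s}\,c_\ell$ for any $s\ge 0$, while Proposition~\ref{prop:corr_eff_res_main} gives $d^{\textnormal{eff}}_{ij} = \sum_{\ell=2}^n \tfrac{(1-\mu_\ell)^2}{\mu_\ell}\,c_\ell$. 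Setting $s=t/2$ and summing over $t\ge 2$, the claimed identity~\eqref{eq:diff_corr_eff_res_main} is therefore equivalent to the scalar identity $\sum_{t=2}^\infty (1-\mu_\ell)^t = (1-\mu_\ell)^2/\mu_\ell$ for each $\ell\in\{2,\dots,n\}$.

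First I would record the spectral bounds on $\lsym$: since $G$ is connected, $\mu_1=0$ is simple and $\mu_\ell>0$ for $\ell\ge 2$; since $G$ is not bipartite, $\mu_n<2$ (a standard fact about the normalized Laplacian). Hence $|1-\mu_\ell|<1$ for all $\ell\ge 2$, so the geometric series $\sum_{t\ge 2}(1-\mu_\ell)^t$ converges and equals $(1-\mu_\ell)^2/\bigl(1-(1-\mu_\ell)\bigr) = (1-\mu_\ell)^2/\mu_\ell$, which is exactly the $\ell$-th coefficient in the formula for $d^{\textnormal{eff}}_{ij}$. Note also that the eigenvector $u_1 = D^{1/2}(1,\dots,1)^\top/\sqrt{\textnormal{vol}(G)}$ contributes $u_{1,i}/\sqrt{d_i} = 1/\sqrt{\textnormal{vol}(G)}$ independently of $i$, so the $\ell=1$ term would vanish anyway; this is why both sums may start at $\ell=2$.

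Next I would assemble the sum over $t$. Because every summand $(1-\mu_\ell)^{2(t/2)}c_\ell = (1-\mu_\ell)^t c_\ell$ is nonnegative, Tonelli's theorem justifies interchanging the sums over $t$ and $\ell$:
\[
\sum_{t=2}^\infty \frac{d^{\textnormal{diff}}_{ij}(t/2)^2}{\textnormal{vol}(G)} \;=\; \sum_{t=2}^\infty \sum_{\ell=2}^n (1-\mu_\ell)^t c_\ell \;=\; \sum_{\ell=2}^n c_\ell \sum_{t=2}^\infty (1-\mu_\ell)^t \;=\; \sum_{\ell=2}^n \frac{(1-\mu_\ell)^2}{\mu_\ell} c_\ell \;=\; d^{\textnormal{eff}}_{ij},
\]
which is the first claim in~\eqref{eq:diff_corr_eff_res_main}. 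For the second claim I would invoke the known expansion of the uncorrected effective resistance from Proposition~\ref{prop:diff_naive_eff_res}, namely $\tilde{d}^{\textnormal{eff}}_{ij} = \sum_{t=0}^\infty d^{\textnormal{diff}}_{ij}(t/2)^2/\textnormal{vol}(G)$ (whose convergence rests on the same bound $\mu_n<2$), and subtract~\eqref{eq:diff_corr_eff_res_main}: only the $t=0$ and $t=1$ terms survive, giving $\tilde{d}^{\textnormal{eff}}_{ij} - d^{\textnormal{eff}}_{ij} = \bigl(d^{\textnormal{diff}}_{ij}(0)^2 + d^{\textnormal{diff}}_{ij}(1/2)^2\bigr)/\textnormal{vol}(G)$.

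The only genuine subtlety — and the reason the non-bipartiteness hypothesis is needed — is the eigenvalue bound $\mu_n<2$: without it, $1-\mu_n$ could equal $-1$, and the series $\sum_{t\ge 2}(1-\mu_n)^t$ would oscillate without converging, even though its Abel/Cesàro value $(1-\mu_n)^2/\mu_n = 1/2$ is perfectly finite. Everything else is a direct diagonalization argument, and the interchange of summations is the only analytic step, which is immediate from nonnegativity.
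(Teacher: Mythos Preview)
Your approach is essentially the same as the paper's: expand both sides spectrally via Proposition~\ref{prop:corr_eff_res_main} and equation~\eqref{eq:spectral_diff}, recognize the coefficient $(1-\mu_\ell)^2/\mu_\ell$ as the geometric tail $\sum_{t\ge 2}(1-\mu_\ell)^t$ (using $|1-\mu_\ell|<1$ from connectedness and non-bipartiteness), and subtract the naive-resistance series of Proposition~\ref{prop:diff_naive_eff_res} for the second claim. One small slip worth fixing: your Tonelli justification does not hold, since $(1-\mu_\ell)^t$ is \emph{negative} whenever $\mu_\ell>1$ and $t$ is odd, so the summands are not all nonnegative; the interchange of sums is nonetheless trivial because the inner sum over $\ell\in\{2,\dots,n\}$ is finite and each individual series $\sum_{t\ge 2}(1-\mu_\ell)^t c_\ell$ converges absolutely.
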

In words, the corrected effective resistance combines all diffusion distances, save for those with the shortest diffusion time. These most local diffusion distances form exactly the correction from naive to corrected effective resistance. While the effective resistance is a \textit{squared} Euclidean distance, omitting the square amounts to taking the square root of all birth and death times, maintaining the loop detection performance of effective resistance~(Figure~\ref{fig:eff_res_comparison}). Therefore, the main difference between the spectral methods is in to how they decay eigenvectors based on the corresponding eigenvalues.

The naive effective resistance decays the eigenvectors with $1/\sqrt{\mu_i}$, which is much slower than diffusion distances' $(1-\mu_i)^t$ for \mbox{$t\in[8,64]$}. Corrected effective resistance shows intermediate behavior (Figure~\ref{fig:eigenvalues}b). When represented as a sum over diffusion distances, it contains all diffusion distances with $t\ge 1$, making it decay slower than diffusion distances with $t=8$ or $64$, but does not contain the non-decaying $t=0$ term, so it decays faster than its naive version. The correction matters little for $S^1\subset \mathbb R^{50}$ in the absence of noise, when the first eigenvalues are much smaller than the rest and dominate the embedding (Figure~\ref{fig:eigenvalues}a,c) but becomes important as the noise and consequently the low eigenvalues increase (Figure~\ref{fig:eigenvalues}a,d,e). As the noise increases, the decay for diffusion distances gets closer to a step function preserving only the first two non-constant eigenvectors, sufficient for the circular structure. In contrast, Laplacian Eigenmaps needs the number of components as input (Figure~\ref{fig:eigenvalues}c\,--\,e).\footnote{Diffusion pseudotime (DPT)~\cite{haghverdi2016diffusion} has a very similar expression as corrected effective resistance, using the scaling $(1-\mu_i)/\mu_i$, see Appendix~\ref{app:dpt}. This means that DPT decays eigenvalues faster than both versions of effective resistance (Figure~\ref{fig:eigenvalues}). We prove an analogous statement to Corollary~\ref{cor:diff_corr_eff_res_short} for DPT in Proposition ~\ref{prop:dpt_diff}.}

\section{Spectral distances find holes in high-dimensional spaces}
\label{sec:benchmarks}

High-dimensional data is ubiquitous, but traditional persistent homology can fail to detect its topology. Here, we benchmark the performance of various distances as input to persistent homology.

\begin{figure}[tb]
    \centering
    \includegraphics[width=5.5in]{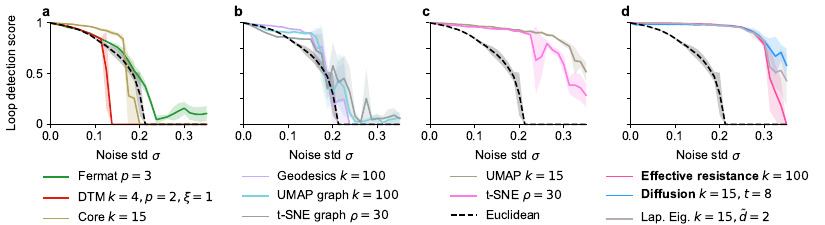}
    \caption{Loop detection score for persistent homology with various distances on a noisy circle in~$\mathbb R^{50}$. The best hyperparameter setting for each distance is shown. Methods are grouped into panels for visual clarity. Recommended methods in \textbf{bold}.}
    \label{fig:circle-benchmark}
\end{figure}

\paragraph{Distance measures}

We examined twelve distances as input to persistent homology, beyond the Euclidean distance. Full definitions are given in Appendix~\ref{app:dist}. First, there are some state-of-the-art approaches for persistent homology in the presence of noise and outliers. Fermat distances~\citep{fernandez2023intrinsic} aim to exaggerate large over small distances to incorporate the density of the data. Distance-to-measure (DTM)~\citep{anai2020dtm} aims for outlier robustness by combining the Euclidean distance with the distances from each point to its $k$ nearest neighbors, which are high for outliers. Similarly, the core distance used in the HDBSCAN algorithm~\citep{campello2015hierarchical, damm2022core} raises each Euclidean distance at least to the distance between incident points and their $k$-th nearest neighbors. We evaluate these methods here with respect to Gaussian noise in high-dimensional ambient space, a different noise model than the one for which these methods were designed. Second, we consider some non-spectral graph distances. The geodesic distance on the $k$NN graph was popularized by Isomap~\citep{tenenbaum2000global} and used for persistent homology by~\citet{naitzat2020topology}. Following~\citet{gardner2022toroidal} we used distances based on UMAP affinities, and also experimented with $t$-SNE affinities. Third, we computed $t$-SNE and UMAP embeddings and used distances in the 2D embedding space. Finally, we explored methods using the spectral decomposition of the $k$NN graph Laplacian, see Section~\ref{sec:spectral}: effective resistance, diffusion distance, and the distance in Laplacian Eigenmaps' embedding space. 

All methods come with hyperparameters. We report the results for the best hyperparameter setting on each dataset (Appendix~\ref{sec:hyperparameters}) but found spectral methods to be robust to these choices (Appendix~\ref{sec:hyperparam_sensitivity}).

\paragraph{Performance score}\label{para:performance_metric}

The output of persistent homology is a persistence diagram showing birth and death times for all detected holes. It may be difficult to decide whether this procedure has actually detected a hole in the data, or not. Ideally, for a dataset with $m$ ground-truth holes, the persistence diagram should have $m$ points with high persistence while all other points should have low persistence and lie close to the diagonal. Therefore, for $m$ ground-truth features, our \textit{hole detection score} $s_m\in[0,1]$ is the relative gap between the persistences $p_m$ and $p_{m+1}$ of the $m$-th and $(m+1)$-th most persistent features: $s_m  = (p_m-p_{m+1})/p_m$. This corresponds to the visual gap between them in the persistence diagram (Figure~\ref{fig:ph}b). \citet{rieck2017agreement} as well as \citet{smith2021skeletonisation} used similar quantities to find important features. We prove a continuity property of $s_m$ in Appendix~\ref{app:score_cont} and consider alternative scores in Appendix~\ref{app:alt_scores}.

In addition, we set $s_m = 0$ if all features in the persistence diagram have very low death-to-birth ratios $\tau_d / \tau_b < 1.25$. This handles situations with very few detected holes that die very quickly after being born, which otherwise can have spuriously high $s_m$ values. This was done everywhere apart from the qualitative Figures~\ref{fig:1}, \ref{fig:malaria} and in Figure~\ref{fig:circle_many_methods_d_50_no_filter}. We call this heuristic \textit{thresholding}.

Note that the number of ground-truth topological features was used only for evaluation. We report the mean over three random seeds; shading and error bars indicate the standard deviation.

\subsection{Synthetic benchmark}
\label{sec:benchmark-synthetic}

\begin{figure}[tb]
    \centering
    \includegraphics[width=5.5in]{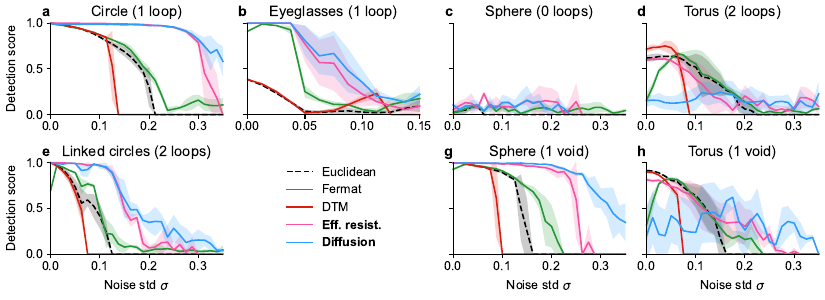}
    \caption{Loop detection score for selected methods on synthetic datasets in ambient $\mathbb{R}^{50}$. More experimental results can be found in Figures~\ref{fig:circle_many_methods_d_50}\,--\,\ref{fig:torus_many_methods_d_50_2D}. Recommended methods in \textbf{bold}.
    }
    \label{fig:datasets-benchmark}
\end{figure}

\paragraph{Benchmark setup}
\label{subsec:benchmark_setup}

In our synthetic benchmark, we evaluated the performance of various distance measures in conjunction with persistent homology on five manifolds: a circle, a pair of linked circles, the eyeglasses dataset (a circle squeezed nearly to a figure eight)~\citep{fernandez2023intrinsic}, the sphere, and the torus. The radii of the circles, the sphere, and the torus' tube were set to $1$, the bottleneck of the eyeglasses was $0.7$, and the torus' tube followed a circle of radius $2$. In each case, we uniformly sampled $n=1\,000$ points from the manifold, mapped them isometrically to $\mathbb{R}^d$ for $d\in[2, 50]$, and then added isotropic Gaussian noise sampled from $\mathcal{N}(\mathbf 0, \sigma^2 \mathbf I_d)$ for $\sigma \in [0, 0.35]$. More details can be found in Appendix~\ref{app:datasets}. For each resulting dataset, we computed persistent homology for loops and, for the sphere and the torus, also for voids. We never computed holes of dimension $3$ or higher.

\paragraph{Results on synthetic data}

On the circle dataset in $\mathbb R^{50}$, persistent homology with all distance metrics found the correct hole when the noise level $\sigma$ was very low (Figure~\ref{fig:circle-benchmark}). However, as the amount of noise increased, the performance of Euclidean distance quickly deteriorated, reaching zero score at $\sigma\approx0.2$. Most other distances outperformed the Euclidean distance, at least in the low noise regime. Fermat distance did not have any effect, and neither did DTM distance, which collapsed at $\sigma\approx0.15$ due to our thresholding (Figure~\ref{fig:circle-benchmark}a). Geodesics, UMAP/$t$-SNE graph, and core distance offered only a modest improvement over Euclidean (Figure~\ref{fig:circle-benchmark}b) highlighting that many $k$NN-graph-based distances cannot handle high-dimensional noise. In contrast, embedding-based distances performed very well on the circle (Figure~\ref{fig:circle-benchmark}c), but have obvious limitations: for example, a 2D embedding cannot possibly have a void. UMAP with higher embedding dimension struggled with loop detection on surfaces and the torus' void (Appendix~\ref{sec:umap_high_embd_dim}). Finally, all spectral methods (effective resistance, diffusion, and Laplacian Eigenmaps) showed similarly excellent performance (Figure~\ref{fig:circle-benchmark}d).

In line with these results, spectral methods outperformed other methods across most synthetic datasets in $\mathbb R^{50}$ (Figure~\ref{fig:datasets-benchmark}).
DTM collapsed earlier than Euclidean but detected loops on the torus for low noise levels best by a small margin. Fermat distance typically had little effect and provided a benefit over Euclidean only on the eyeglasses and the sphere. Spectral distances outperformed all other methods on all datasets apart from the torus, where effective resistance was on par with Euclidean but diffusion performed poorly. On a more densely sampled torus, all methods performed better and the spectral methods again outperformed the others (Figure~\ref{fig:torus_vary_n}).
On all other datasets diffusion distance slightly outperformed  effective resistance for large $\sigma$. Reassuringly, all methods passed the negative control and did not find any persistent loops on the sphere (Figure~\ref{fig:datasets-benchmark}c).

As discussed in Section~\ref{sec:curse}, persistent homology with Euclidean distances deteriorates with increasing ambient dimensionality. Using the circle data in $\mathbb R^d$, we found that if the noise level was fixed at $\sigma=0.25$, no persistent loop was found using Euclidean distances for $d\gtrsim 30$ (Figure~\ref{fig:circle-dims}). In the same setting, DTM deteriorated even more quickly than Euclidean distances. In contrast, effective resistance and diffusion distance were robust against both the high noise level and the large ambient dimension (Figure~\ref{fig:circle-dims}a,c\,--\,e). See Figure~\ref{fig:circle-high-dims} for an extended analysis.

\begin{figure*}[b]
    \includegraphics[width=\linewidth]{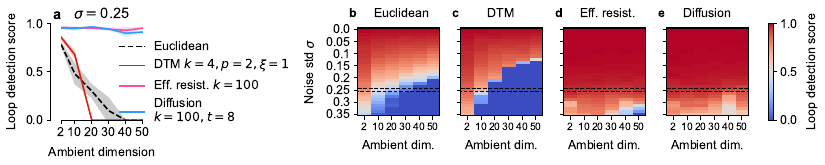}
    \caption{\textbf{a.} Loop detection score of various methods on a noisy circle depending on the ambient dimensionality. Noise $\sigma=0.25$. 
    \textbf{b\,--\,e.} Heat maps for $\sigma\in[0, 0.35]$ and $d\in[2,50]$.}
    \label{fig:circle-dims} 
\end{figure*}

\subsection{Detecting cycles in single-cell data}
\label{sec:benchmark-scRNAseq}

We applied our methods to six single-cell RNA-sequencing datasets: Malaria~\citep{howick2019malaria}, Neurosphere and Hippocampus from~\citep{zheng2022universal}, HeLa2~\citep{schwabe2020transcriptome}, Neural IPCs~\citep{braun2023comprehensive}, and Pancreas~\citep{bastidas2019comprehensive}. Single-cell RNA-sequencing data consists of expression levels for thousands of genes in individual cells, so the data is high-dimensional and notoriously noisy. Importantly, all selected datasets are known to contain circular structures, usually corresponding to the cell division cycle during which gene expression levels cyclically change. As a result, we know how many loops to expect in each dataset and can therefore use them as a real-world benchmark of various distances for persistent homology.
In each case, we followed preprocessing pipelines from prior publications 
leading to representations with $10$ to $5\,156$ dimensions. We downsampled datasets with more than $4\,000$ cells  to $n=1\,000$ (Appendix~\ref{app:datasets}). 

\begin{wrapfigure}{R}{0.6\linewidth}
    \centering
    \includegraphics[width=\linewidth]{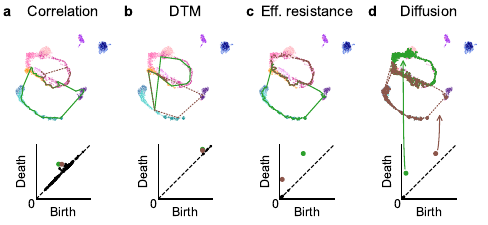}
    \caption{Malaria dataset. \textbf{a\,--\,d.} Representatives of the two most persistent loops overlaid on UMAP embedding (top) and persistence diagrams (bottom) using four methods. Biology dictates that there should be two loops (in warm colors and in cold colors) connected as in a figure eight.
    }
    \label{fig:malaria}
\end{wrapfigure}

The Malaria dataset is expected to contain two cycles: the parasite replication cycle in red blood cells, and the parasite transmission cycle between human and mosquito hosts. Following~\citet{howick2019malaria}, we based all computations for this dataset (and all derived distances) on the correlation distance instead of the Euclidean distance. Persistent homology based on the correlation distance itself failed to correctly identify the two ground-truth cycles and DTM produced representatives that only roughly approximate the two ground truth cycles (Figure~\ref{fig:malaria}a,b). Both effective resistance and diffusion distance successfully uncovered both cycles with $s_2 > 0.9$ (Figure~\ref{fig:malaria}c,d).

Across all six datasets, the detection scores were higher for spectral methods than for their competitors (Figure~\ref{fig:scrnaseq}). Furthermore, we manually investigated representative loops for all considered methods on all datasets and found several cases where the most persistent loop(s) was/were likely not correct (hatched bars in Figure~\ref{fig:scrnaseq}). Overall, we found that the spectral methods, and in particular effective resistance, could reliably find the correct loops with high detection score. Persistent homology based on the $t$-SNE and UMAP embeddings worked on average better than traditional persistent homology, Fermat distances, and DTM, but worse than the spectral methods.

\begin{figure}[b]
    \includegraphics[width=\linewidth]{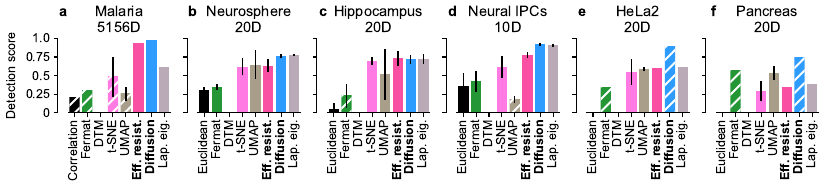}
    \caption{Loop detection scores on six high-dimensional scRNA-seq datasets. Hatched bars indicate implausible representatives. See Figure~\ref{fig:scrnaseq_all} for detection scores for different hyperparameter values. Recommended methods in \textbf{bold}.
    }
    \label{fig:scrnaseq}
\end{figure}

\section{Limitations and future work}\label{sec:limitations}

In the real-world applications, it was important to look at representatives of detected holes as some holes were persistent, but arguably incorrect. That said, each homology class has many different representative cycles, making interpretation difficult. Given ground-truth cycles, an automatic procedure for evaluating cycle correctness remains an interesting research question.

Persistent homology can only detect topology, which is often a useful global level of abstraction. However, it may therefore fail to distinguish some non-isomorphic point clouds~\citep{smith2024generic}. There exist dedicated measures for detecting isometry~\citep{boutin2004reconstructing, widdowson2023recognizing, kurlin2024polynomial}.

Dimensionality reduction methods are designed to handle high-dimensional data. \mbox{$t$-SNE} and UMAP indeed performed well on many datasets, but on average worse than spectral distances on the real data. Moreover, UMAP struggled with the surface of the 3D toy datasets and the torus' void (Appendix~\ref{sec:umap_high_embd_dim}). Finally, they require the choice of an embedding dimension and are known to produce artifacts~\citep{landweber2016fiber,chari2023specious, wang2023cannot}, e.g., leading to poor scores in the noiseless setting in Figures~\ref{fig:highD_umap},~\ref{fig:circle_tsne_embd}. In contrast, spectral distances on the symmetric $k$NN graph worked well without a low-dimensional embedding (Section~\ref{sec:benchmarks}).

Using effective resistance or diffusion distances is easy in practice as their computation time $O(n^3$) is dwarfed by that of the persistent homology  (Table~\ref{tab:run_times}), which scales as $\mathcal{O}(n^{3(\delta+1)})$ for $n$ points and topological holes of dimension $\delta$~\citep{myers2023persistent}. This high complexity of persistent homology aggravates other problems of high-dimensional datasets as dense sampling in high-dimensional space would require a prohibitively large sample size (recall that spectral methods needed a high sampling density for good performance on some of our datasets such as the torus). Combining persistent homology with non-Euclidean distance measures could mitigate this problem via the approach of~\citet{bendich2011improving}, who performed subsampling after computation of the distance matrix. This is a particularly attractive avenue for future research. 

Both effective resistance and diffusion distances require the choice of hyperparameters. However, effective resistance only needs a single hyperparameter: the number of $k$NN neighbors. For this reason and due to its greater outlier resistance (Appendix~\ref{sec:outliers}), we tend to recommend effective resistance over diffusion distances, but a principled criterion when to use which of the two is still missing.

Moreover, we do not have a theoretical proof that spectral distances mitigate the curse of dimensionality. Such a proof may be achieved in the future taking inspiration from the stability results in~\citep{carriere2020perslay, hiraoka2024curse, tran2019scale} and, more generally, spectral perturbation theory.

Our empirical results focus on benchmarking which distances identify the correct topology in the presence of high-dimensional noise. Therefore, we only considered datasets with known ground-truth topology. The next step will be to use spectral distances to detect non-trivial topology in real-world exploratory contexts.

High-dimensional data and thus application areas for our improved topology detection pipeline are becoming ubiquitous. Within biology, we see possible applications for our method in other single-cell omics modalities, population genomics, or neural activity data \citep{gardner2022toroidal, hermansen2024uncovering}. Beyond biology, we believe that our approach can improve the topological analysis of artificial neural network activations~\citep{naitzat2020topology}, and in general be used to detect topology of any high-dimensional data, e.g. in the climate sciences, in astronomical measurements, or wearable sensor data.

\section{Conclusion}

In this work we asked how to use persistent homology on high-dimensional noisy datasets which are very common in real-world applications even if the intrinsic data dimensionality is low. We found spectral methods to be  the optimal approach.
We demonstrated that, as the dimensionality of the data increases, the main problem for persistent homology shifts from handling outliers to handling noise dimensions (Section~\ref{sec:curse}). We used a synthetic benchmark to show that traditional persistent homology and many of its existing extensions struggle to find the correct topology in this setting. Our main finding is that spectral methods based on the $k$NN graph, such as the effective resistance and diffusion distances, still work well (Section~\ref{sec:benchmark-synthetic}). Furthermore, we view it as an advantage that we found existing methods that are able to handle the important problem of high-dimensional noise. We derived an expression for effective resistance based on the eigendecomposition of the graph Laplacian, and demonstrated that it combines all but the most local diffusion distances (Section~\ref{sec:spectral}).  Finally, we showed that spectral distances outperform all competitors on single-cell data (Section~\ref{sec:benchmark-scRNAseq}).

\clearpage

\begin{ack}
We thank Enrique Fita Sanmartin and Ulrike von Luxburg for productive discussions on effective resistance and persistent homology, and Benjamin Dunn, Erik Hermansen, and David Klindt for helpful discussions on combining persistent homology with other dissimilarities than Euclidean distance. Moreover, we thank Sten Linnarsson and Miri Danan Gotthold for sharing the scVI representation of their pallium data. Final thanks go to Bastian Rieck for feedback on the writing.

This work was funded by the Deutsche Forschungsgemeinschaft (DFG, German Research Foundation) via Germany’s Excellence Strategy (Excellence cluster 2064 ``Machine Learning --- New Perspectives for Science'', EXC 390727645; Excellence cluster 2181 ``STRUCTURES'', EXC 390900948), the German Ministry of Science and Education (BMBF) via the T\"{u}bingen AI Center (01IS18039A), the Gemeinn\"{u}tzige Hertie-Stiftung, and the National Institutes of Health (UM1MH130981). The content is solely the responsibility of the authors and does not necessarily represent the official views of the National Institutes of Health. 
\end{ack}

\bibliography{references}

\begin{thebibliography}{90}
\providecommand{\natexlab}[1]{#1}
\providecommand{\url}[1]{\texttt{#1}}
\expandafter\ifx\csname urlstyle\endcsname\relax
  \providecommand{\doi}[1]{doi: #1}\else
  \providecommand{\doi}{doi: \begingroup \urlstyle{rm}\Url}\fi

\bibitem[Aktas et~al.(2019)Aktas, Akbas, and Fatmaoui]{aktas2019persistence}
M.~E. Aktas, E.~Akbas, and A.~E. Fatmaoui.
\newblock Persistence homology of networks: methods and applications.
\newblock \emph{Applied Network Science}, 4\penalty0 (1):\penalty0 1--28, 2019.

\bibitem[Anai et~al.(2020)Anai, Chazal, Glisse, Ike, Inakoshi, Tinarrage, and
  Umeda]{anai2020dtm}
H.~Anai, F.~Chazal, M.~Glisse, Y.~Ike, H.~Inakoshi, R.~Tinarrage, and Y.~Umeda.
\newblock {DTM}-based filtrations.
\newblock In \emph{Topological Data Analysis: The Abel Symposium 2018}, pages
  33--66. Springer, 2020.

\bibitem[Bastidas-Ponce et~al.(2019)Bastidas-Ponce, Tritschler, Dony,
  Scheibner, Tarquis-Medina, Salinno, Schirge, Burtscher, B{\"o}ttcher, Theis,
  et~al.]{bastidas2019comprehensive}
A.~Bastidas-Ponce, S.~Tritschler, L.~Dony, K.~Scheibner, M.~Tarquis-Medina,
  C.~Salinno, S.~Schirge, I.~Burtscher, A.~B{\"o}ttcher, F.~J. Theis, et~al.
\newblock Comprehensive single cell m{RNA} profiling reveals a detailed roadmap
  for pancreatic endocrinogenesis.
\newblock \emph{Development}, 146\penalty0 (12):\penalty0 dev173849, 2019.

\bibitem[Bauer(2021)]{bauer2021ripser}
U.~Bauer.
\newblock Ripser: efficient computation of {V}ietoris--{R}ips persistence
  barcodes.
\newblock \emph{Journal of Applied and Computational Topology}, 5\penalty0 (3),
  2021.

\bibitem[Belkin and Niyogi(2002)]{belkin2001laplacian}
M.~Belkin and P.~Niyogi.
\newblock Laplacian eigenmaps and spectral techniques for embedding and
  clustering.
\newblock In \emph{Advances in Neural Information Processing Systems},
  volume~14, pages 585--591, 2002.

\bibitem[Bendich et~al.(2011)Bendich, Galkovskyi, and
  Harer]{bendich2011improving}
P.~Bendich, T.~Galkovskyi, and J.~Harer.
\newblock Improving homology estimates with random walks.
\newblock \emph{Inverse Problems}, 27\penalty0 (12):\penalty0 124002, 2011.

\bibitem[Boutin and Kemper(2004)]{boutin2004reconstructing}
M.~Boutin and G.~Kemper.
\newblock On reconstructing n-point configurations from the distribution of
  distances or areas.
\newblock \emph{Advances in Applied Mathematics}, 32\penalty0 (4):\penalty0
  709--735, 2004.

\bibitem[Braun et~al.(2023)Braun, Danan-Gotthold, Borm, Lee, Vinsland,
  L{\"o}nnerberg, Hu, Li, He, Andrusivov{\'a}, et~al.]{braun2023comprehensive}
E.~Braun, M.~Danan-Gotthold, L.~E. Borm, K.~W. Lee, E.~Vinsland,
  P.~L{\"o}nnerberg, L.~Hu, X.~Li, X.~He, {\v{Z}}.~Andrusivov{\'a}, et~al.
\newblock Comprehensive cell atlas of the first-trimester developing human
  brain.
\newblock \emph{Science}, 382\penalty0 (6667):\penalty0 eadf1226, 2023.

\bibitem[Brugnone et~al.(2019)Brugnone, Gonopolskiy, Moyle, Kuchroo, van Dijk,
  Moon, Colon-Ramos, Wolf, Hirn, and Krishnaswamy]{brugnone2019coarse}
N.~Brugnone, A.~Gonopolskiy, M.~W. Moyle, M.~Kuchroo, D.~van Dijk, K.~R. Moon,
  D.~Colon-Ramos, G.~Wolf, M.~J. Hirn, and S.~Krishnaswamy.
\newblock Coarse graining of data via inhomogeneous diffusion condensation.
\newblock In \emph{2019 IEEE International Conference on Big Data (Big Data)},
  pages 2624--2633. IEEE, 2019.

\bibitem[Campello et~al.(2015)Campello, Moulavi, Zimek, and
  Sander]{campello2015hierarchical}
R.~J. Campello, D.~Moulavi, A.~Zimek, and J.~Sander.
\newblock Hierarchical density estimates for data clustering, visualization,
  and outlier detection.
\newblock \emph{ACM Transactions on Knowledge Discovery from Data (TKDD)},
  10\penalty0 (1):\penalty0 1--51, 2015.

\bibitem[Carri{\`e}re et~al.(2020)Carri{\`e}re, Chazal, Ike, Lacombe, Royer,
  and Umeda]{carriere2020perslay}
M.~Carri{\`e}re, F.~Chazal, Y.~Ike, T.~Lacombe, M.~Royer, and Y.~Umeda.
\newblock Perslay: A neural network layer for persistence diagrams and new
  graph topological signatures.
\newblock In \emph{International Conference on Artificial Intelligence and
  Statistics}, pages 2786--2796. PMLR, 2020.

\bibitem[Chari and Pachter(2023)]{chari2023specious}
T.~Chari and L.~Pachter.
\newblock The specious art of single-cell genomics.
\newblock \emph{PLOS Computational Biology}, 19\penalty0 (8):\penalty0
  e1011288, 2023.

\bibitem[Charlier et~al.(2021)Charlier, Feydy, Glaunès, Collin, and
  Durif]{pykeops}
B.~Charlier, J.~Feydy, J.~A. Glaunès, F.-D. Collin, and G.~Durif.
\newblock Kernel operations on the {GPU}, with autodiff, without memory
  overflows.
\newblock \emph{Journal of Machine Learning Research}, 22\penalty0
  (74):\penalty0 1--6, 2021.

\bibitem[Chazal and Michel(2021)]{chazal2021introduction}
F.~Chazal and B.~Michel.
\newblock An introduction to topological data analysis: fundamental and
  practical aspects for data scientists.
\newblock \emph{Frontiers in Artificial Intelligence}, 4:\penalty0 108, 2021.

\bibitem[Chazal et~al.(2011)Chazal, Cohen-Steiner, and
  M{\'e}rigot]{chazal2011geometric}
F.~Chazal, D.~Cohen-Steiner, and Q.~M{\'e}rigot.
\newblock Geometric inference for measures based on distance functions.
\newblock \emph{Foundations of Computational Mathematics}, 11\penalty0
  (6):\penalty0 733--751, 2011.

\bibitem[Chew et~al.(2022)Chew, Steach, Viswanath, Wu, Hirn, Needell, Vesely,
  Krishnaswamy, and Perlmutter]{chew2022manifold}
J.~Chew, H.~Steach, S.~Viswanath, H.-T. Wu, M.~Hirn, D.~Needell, M.~D. Vesely,
  S.~Krishnaswamy, and M.~Perlmutter.
\newblock The manifold scattering transform for high-dimensional point cloud
  data.
\newblock In \emph{Topological, Algebraic and Geometric Learning Workshops
  2022}, pages 67--78. PMLR, 2022.

\bibitem[Chung(1997)]{chung1997spectral}
F.~R. Chung.
\newblock \emph{Spectral graph theory}, volume~92.
\newblock American Mathematical Soc., 1997.

\bibitem[Clart{\'e} et~al.(2024)Clart{\'e}, Vandenbroucque, Dalle, Loureiro,
  Krzakala, and Zdeborov{\'a}]{clarte2024analysis}
L.~Clart{\'e}, A.~Vandenbroucque, G.~Dalle, B.~Loureiro, F.~Krzakala, and
  L.~Zdeborov{\'a}.
\newblock Analysis of bootstrap and subsampling in high-dimensional regularized
  regression.
\newblock \emph{arXiv preprint arXiv:2402.13622}, 2024.

\bibitem[Cohen-Steiner et~al.(2005)Cohen-Steiner, Edelsbrunner, and
  Harer]{cohen2005stability}
D.~Cohen-Steiner, H.~Edelsbrunner, and J.~Harer.
\newblock Stability of persistence diagrams.
\newblock In \emph{Proceedings of the twenty-first annual symposium on
  Computational geometry}, pages 263--271, 2005.

\bibitem[Coifman and Lafon(2006)]{coifman2006diffusion}
R.~R. Coifman and S.~Lafon.
\newblock Diffusion maps.
\newblock \emph{Applied and Computational Harmonic Analysis}, 21\penalty0
  (1):\penalty0 5--30, 2006.

\bibitem[Damm(2022)]{damm2022core}
D.~Damm.
\newblock {C}ore-distance-weighted persistent homology and its behavior on
  tree-shaped data.
\newblock Master's thesis, Heidelberg University, 2022.

\bibitem[Damrich and Hamprecht(2021)]{damrich2021umap}
S.~Damrich and F.~A. Hamprecht.
\newblock On {UMAP}'s true loss function.
\newblock In \emph{Advances in Neural Information Processing Systems},
  volume~34, pages 5798--5809, 2021.

\bibitem[Davies et~al.(2023)Davies, Wan, and
  Sanchez-Garcia]{davies2023persistent}
T.~Davies, Z.~Wan, and R.~J. Sanchez-Garcia.
\newblock The persistent {L}aplacian for data science: Evaluating higher-order
  persistent spectral representations of data.
\newblock In \emph{International Conference on Machine Learning}, pages
  7249--7263. PMLR, 2023.

\bibitem[Doyle and Snell(1984)]{doyle1984random}
P.~G. Doyle and J.~L. Snell.
\newblock \emph{Random Walks and Electric Networks}, volume~22.
\newblock American Mathematical Soc., 1984.

\bibitem[Edelsbrunner et~al.(2002)Edelsbrunner, Letscher, and
  Zomorodian]{edelsbrunner2002topological}
Edelsbrunner, Letscher, and Zomorodian.
\newblock Topological persistence and simplification.
\newblock \emph{Discrete \& Computational Geometry}, 28:\penalty0 511--533,
  2002.

\bibitem[Fern{\'a}ndez et~al.(2023)Fern{\'a}ndez, Borghini, Mindlin, and
  Groisman]{fernandez2023intrinsic}
X.~Fern{\'a}ndez, E.~Borghini, G.~Mindlin, and P.~Groisman.
\newblock Intrinsic persistent homology via density-based metric learning.
\newblock \emph{Journal of Machine Learning Research}, 24\penalty0
  (75):\penalty0 1--42, 2023.

\bibitem[Flores-Bautista and Thomson(2023)]{flores2023unraveling}
E.~Flores-Bautista and M.~Thomson.
\newblock Unraveling cell differentiation mechanisms through topological
  exploration of single-cell developmental trajectories.
\newblock \emph{bioRxiv}, pages 2023--07, 2023.

\bibitem[Fouss et~al.(2007)Fouss, Pirotte, Renders, and
  Saerens]{fouss2007random}
F.~Fouss, A.~Pirotte, J.-M. Renders, and M.~Saerens.
\newblock Random-walk computation of similarities between nodes of a graph with
  application to collaborative recommendation.
\newblock \emph{IEEE Transactions on knowledge and data engineering},
  19\penalty0 (3):\penalty0 355--369, 2007.

\bibitem[Fouss et~al.(2016)Fouss, Saerens, and Shimbo]{fouss2016algorithms}
F.~Fouss, M.~Saerens, and M.~Shimbo.
\newblock \emph{Algorithms and Models for Network Data and Link Analysis}.
\newblock Cambridge University Press, 2016.

\bibitem[Garc{\'\i}a-Redondo et~al.(2024)Garc{\'\i}a-Redondo, Monod, and
  Song]{garcia2024fast}
I.~Garc{\'\i}a-Redondo, A.~Monod, and A.~Song.
\newblock Fast topological signal identification and persistent cohomological
  cycle matching.
\newblock \emph{Journal of Applied and Computational Topology}, pages 1--32,
  2024.

\bibitem[Gardner et~al.(2022)Gardner, Hermansen, Pachitariu, Burak, Baas, Dunn,
  Moser, and Moser]{gardner2022toroidal}
R.~J. Gardner, E.~Hermansen, M.~Pachitariu, Y.~Burak, N.~A. Baas, B.~A. Dunn,
  M.-B. Moser, and E.~I. Moser.
\newblock Toroidal topology of population activity in grid cells.
\newblock \emph{Nature}, 602\penalty0 (7895):\penalty0 123--128, 2022.

\bibitem[G{\"o}bel and Jagers(1974)]{gobel1974random}
F.~G{\"o}bel and A.~Jagers.
\newblock Random walks on graphs.
\newblock \emph{Stochastic processes and their applications}, 2\penalty0
  (4):\penalty0 311--336, 1974.

\bibitem[Gower and Ross(1969)]{gower1969minimum}
J.~C. Gower and G.~J. Ross.
\newblock Minimum spanning trees and single linkage cluster analysis.
\newblock \emph{Journal of the Royal Statistical Society: Series C (Applied
  Statistics)}, 18\penalty0 (1):\penalty0 54--64, 1969.

\bibitem[Groisman et~al.(2022)Groisman, Jonckheere, and
  Sapienza]{groisman2022nonhomogeneous}
P.~Groisman, M.~Jonckheere, and F.~Sapienza.
\newblock Nonhomogeneous {E}uclidean first-passage percolation and distance
  learning.
\newblock \emph{Bernoulli}, 28\penalty0 (1):\penalty0 255--276, 2022.

\bibitem[Haghverdi et~al.(2015)Haghverdi, Buettner, and
  Theis]{haghverdi2015diffusion}
L.~Haghverdi, F.~Buettner, and F.~J. Theis.
\newblock Diffusion maps for high-dimensional single-cell analysis of
  differentiation data.
\newblock \emph{Bioinformatics}, 31\penalty0 (18):\penalty0 2989--2998, 2015.

\bibitem[Haghverdi et~al.(2016)Haghverdi, B{\"u}ttner, Wolf, Buettner, and
  Theis]{haghverdi2016diffusion}
L.~Haghverdi, M.~B{\"u}ttner, F.~A. Wolf, F.~Buettner, and F.~J. Theis.
\newblock Diffusion pseudotime robustly reconstructs lineage branching.
\newblock \emph{Nature Methods}, 13\penalty0 (10):\penalty0 845--848, 2016.

\bibitem[Hajij et~al.(2018)Hajij, Wang, Scheidegger, and
  Rosen]{hajij2018visual}
M.~Hajij, B.~Wang, C.~Scheidegger, and P.~Rosen.
\newblock Visual detection of structural changes in time-varying graphs using
  persistent homology.
\newblock In \emph{2018 {IEEE} {P}acific {V}isualization {S}ymposium
  (pacificvis)}, pages 125--134. IEEE, 2018.

\bibitem[Hall et~al.(2005)Hall, Marron, and Neeman]{hall2005geometric}
P.~Hall, J.~S. Marron, and A.~Neeman.
\newblock Geometric representation of high dimension, low sample size data.
\newblock \emph{Journal of the Royal Statistical Society Series B: Statistical
  Methodology}, 67\penalty0 (3):\penalty0 427--444, 2005.

\bibitem[Hatcher(2002)]{hatcher2002algebraic}
A.~Hatcher.
\newblock \emph{Algebraic Topology}.
\newblock Cambridge University Press, 2002.

\bibitem[Hermansen et~al.(2024)Hermansen, Klindt, and
  Dunn]{hermansen2024uncovering}
E.~Hermansen, D.~A. Klindt, and B.~A. Dunn.
\newblock Uncovering 2-d toroidal representations in grid cell ensemble
  activity during 1-d behavior.
\newblock \emph{Nature Communications}, 15\penalty0 (1):\penalty0 5429, 2024.

\bibitem[Hinton and Roweis(2002)]{hinton2002stochastic}
G.~E. Hinton and S.~Roweis.
\newblock Stochastic neighbor embedding.
\newblock In \emph{Advances in {N}eural {I}nformation {P}rocessing {S}ystems},
  volume~15, pages 857--864, 2002.

\bibitem[Hiraoka et~al.(2024)Hiraoka, Imoto, Kanazawa, and
  Liu]{hiraoka2024curse}
Y.~Hiraoka, Y.~Imoto, S.~Kanazawa, and E.~Liu.
\newblock Curse of dimensionality on persistence diagrams.
\newblock \emph{arXiv preprint arXiv:2404.18194}, 2024.

\bibitem[Howick et~al.(2019)Howick, Russell, Andrews, Heaton, Reid, Natarajan,
  Butungi, Metcalf, Verzier, Rayner, et~al.]{howick2019malaria}
V.~M. Howick, A.~J. Russell, T.~Andrews, H.~Heaton, A.~J. Reid, K.~Natarajan,
  H.~Butungi, T.~Metcalf, L.~H. Verzier, J.~C. Rayner, et~al.
\newblock The malaria cell atlas: Single parasite transcriptomes across the
  complete {P}lasmodium life cycle.
\newblock \emph{Science}, 365\penalty0 (6455):\penalty0 eaaw2619, 2019.

\bibitem[Hu et~al.(2019)Hu, Li, Samaras, and Chen]{hu2019topology}
X.~Hu, F.~Li, D.~Samaras, and C.~Chen.
\newblock Topology-preserving deep image segmentation.
\newblock \emph{Advances in Neural Information Processing Systems}, 32, 2019.

\bibitem[Jia and Chen(2022)]{jia2022single}
J.~Jia and L.~Chen.
\newblock Single-cell {RNA} sequencing data analysis based on non-uniform
  $\varepsilon$-neighborhood network.
\newblock \emph{Bioinformatics}, 38\penalty0 (9):\penalty0 2459--2465, 2022.

\bibitem[Kuchroo et~al.(2023)Kuchroo, DiStasio, Song, Calapkulu, Zhang, Ige,
  Sheth, Majdoubi, Menon, Tong, et~al.]{kuchroo2023single}
M.~Kuchroo, M.~DiStasio, E.~Song, E.~Calapkulu, L.~Zhang, M.~Ige, A.~H. Sheth,
  A.~Majdoubi, M.~Menon, A.~Tong, et~al.
\newblock Single-cell analysis reveals inflammatory interactions driving
  macular degeneration.
\newblock \emph{Nature Communications}, 14\penalty0 (1):\penalty0 2589, 2023.

\bibitem[Kurlin(2024)]{kurlin2024polynomial}
V.~Kurlin.
\newblock Polynomial-time algorithms for continuous metrics on atomic clouds of
  unordered points.
\newblock \emph{MATCH Commun. Math. Comput. Chem.}, 91:\penalty0 79--108, 2024.

\bibitem[{Lamar-Le{\'o}n} et~al.(2012){Lamar-Le{\'o}n}, Garcia-Reyes, and
  Gonzalez-Diaz]{lamar2012human}
J.~{Lamar-Le{\'o}n}, E.~B. Garcia-Reyes, and R.~Gonzalez-Diaz.
\newblock Human gait identification using persistent homology.
\newblock In \emph{Progress in Pattern Recognition, Image Analysis, Computer
  Vision, and Applications: 17th Iberoamerican Congress, CIARP 2012, Buenos
  Aires, Argentina, September 3-6, 2012. Proceedings 17}, pages 244--251.
  Springer, 2012.

\bibitem[Landweber et~al.(2016)Landweber, Lazar, and Patel]{landweber2016fiber}
P.~S. Landweber, E.~A. Lazar, and N.~Patel.
\newblock On fiber diameters of continuous maps.
\newblock \emph{The American Mathematical Monthly}, 123\penalty0 (4):\penalty0
  392--397, 2016.

\bibitem[Lopez et~al.(2018)Lopez, Regier, Cole, Jordan, and
  Yosef]{lopez2018deep}
R.~Lopez, J.~Regier, M.~B. Cole, M.~I. Jordan, and N.~Yosef.
\newblock Deep generative modeling for single-cell transcriptomics.
\newblock \emph{Nature Methods}, 15\penalty0 (12):\penalty0 1053--1058, 2018.

\bibitem[Lov{\'a}sz(1993)]{lovasz1993random}
L.~Lov{\'a}sz.
\newblock Random walks on graphs.
\newblock \emph{Combinatorics, Paul Erd\H{o}s is Eighty}, 2\penalty0
  (1-46):\penalty0 4, 1993.

\bibitem[Masoomy et~al.(2021)Masoomy, Askari, Tajik, Rizi, and
  Jafari]{masoomy2021topological}
H.~Masoomy, B.~Askari, S.~Tajik, A.~K. Rizi, and G.~R. Jafari.
\newblock Topological analysis of interaction patterns in cancer-specific gene
  regulatory network: Persistent homology approach.
\newblock \emph{Scientific Reports}, 11\penalty0 (1):\penalty0 16414, 2021.

\bibitem[McInnes et~al.(2018)McInnes, Healy, and Melville]{mcinnes2018umap}
L.~McInnes, J.~Healy, and J.~Melville.
\newblock {UMAP}: Uniform manifold approximation and projection for dimension
  reduction.
\newblock \emph{arXiv preprint arXiv:1802.03426}, 2018.

\bibitem[M{\'e}moli(2023)]{memoli}
F.~M{\'e}moli.
\newblock personal communication, 2023.

\bibitem[M{\'e}moli et~al.(2022)M{\'e}moli, Wan, and
  Wang]{memoli2022persistent}
F.~M{\'e}moli, Z.~Wan, and Y.~Wang.
\newblock Persistent {L}aplacians: properties, algorithms and implications.
\newblock \emph{SIAM Journal on Mathematics of Data Science}, 4\penalty0
  (2):\penalty0 858--884, 2022.

\bibitem[Moon et~al.(2019)Moon, van Dijk, Wang, Gigante, Burkhardt, Chen, Yim,
  Elzen, Hirn, Coifman, et~al.]{moon2019visualizing}
K.~R. Moon, D.~van Dijk, Z.~Wang, S.~Gigante, D.~B. Burkhardt, W.~S. Chen,
  K.~Yim, A.~v.~d. Elzen, M.~J. Hirn, R.~R. Coifman, et~al.
\newblock Visualizing structure and transitions in high-dimensional biological
  data.
\newblock \emph{Nature Biotechnology}, 37\penalty0 (12):\penalty0 1482--1492,
  2019.

\bibitem[Moore et~al.(2023)Moore, Bhaskar, Gao, Matte-Martone, Du, Lathrop,
  Ganesan, Shao, Norris, Campam{\`a}~Sanz, et~al.]{moore2023cell}
J.~L. Moore, D.~Bhaskar, F.~Gao, C.~Matte-Martone, S.~Du, E.~Lathrop,
  S.~Ganesan, L.~Shao, R.~Norris, N.~Campam{\`a}~Sanz, et~al.
\newblock Cell cycle controls long-range calcium signaling in the regenerating
  epidermis.
\newblock \emph{Journal of Cell Biology}, 222\penalty0 (7):\penalty0
  e202302095, 2023.

\bibitem[Mukherjee et~al.(2022)Mukherjee, Wethington, Dey, and
  Das]{mukherjee2022determining}
S.~Mukherjee, D.~Wethington, T.~K. Dey, and J.~Das.
\newblock Determining clinically relevant features in cytometry data using
  persistent homology.
\newblock \emph{PLoS Computational Biology}, 18\penalty0 (3):\penalty0
  e1009931, 2022.

\bibitem[Myers et~al.(2023)Myers, Chumley, Khasawneh, and
  Munch]{myers2023persistent}
A.~D. Myers, M.~M. Chumley, F.~A. Khasawneh, and E.~Munch.
\newblock Persistent homology of coarse-grained state-space networks.
\newblock \emph{Physical Review E}, 107\penalty0 (3):\penalty0 034303, 2023.

\bibitem[Naitzat et~al.(2020)Naitzat, Zhitnikov, and Lim]{naitzat2020topology}
G.~Naitzat, A.~Zhitnikov, and L.-H. Lim.
\newblock Topology of deep neural networks.
\newblock \emph{Journal of Machine Learning Research}, 21\penalty0
  (1):\penalty0 7503--7542, 2020.

\bibitem[Petenkaya et~al.(2022)Petenkaya, Manuchehrfar, Chronis, and
  Liang]{petenkaya2022identifying}
A.~Petenkaya, F.~Manuchehrfar, C.~Chronis, and J.~Liang.
\newblock Identifying transient cells during reprogramming via persistent
  homology.
\newblock In \emph{2022 44th Annual International Conference of the IEEE
  Engineering in Medicine \& Biology Society (EMBC)}, pages 2920--2923. IEEE,
  2022.

\bibitem[Petri et~al.(2013)Petri, Scolamiero, Donato, and
  Vaccarino]{petri2013networks}
G.~Petri, M.~Scolamiero, I.~Donato, and F.~Vaccarino.
\newblock Networks and cycles: a persistent homology approach to complex
  networks.
\newblock In \emph{Proceedings of the European Conference on Complex Systems
  2012}, pages 93--99. Springer, 2013.

\bibitem[Poli{\v{c}}ar et~al.(2024)Poli{\v{c}}ar, Stra{\v{z}}ar, and
  Zupan]{polivcar2024opentsne}
P.~G. Poli{\v{c}}ar, M.~Stra{\v{z}}ar, and B.~Zupan.
\newblock open{TSNE}: A modular python library for t-sne dimensionality
  reduction and embedding.
\newblock \emph{Journal of Statistical Software}, 109:\penalty0 1--30, 2024.

\bibitem[Reani and Bobrowski(2022)]{reani2022cycle}
Y.~Reani and O.~Bobrowski.
\newblock Cycle registration in persistent homology with applications in
  topological bootstrap.
\newblock \emph{IEEE Transactions on Pattern Analysis and Machine
  Intelligence}, 45\penalty0 (5):\penalty0 5579--5593, 2022.

\bibitem[Reid et~al.(2018)Reid, Talman, Bennett, Gomes, Sanders, Illingworth,
  Billker, Berriman, and Lawniczak]{reid2018single}
A.~J. Reid, A.~M. Talman, H.~M. Bennett, A.~R. Gomes, M.~J. Sanders, C.~J.
  Illingworth, O.~Billker, M.~Berriman, and M.~K. Lawniczak.
\newblock Single-cell {RNA}-seq reveals hidden transcriptional variation in
  malaria parasites.
\newblock \emph{eLife}, 7:\penalty0 e33105, 2018.

\bibitem[Rieck and Leitte(2017)]{rieck2017agreement}
B.~Rieck and H.~Leitte.
\newblock Agreement analysis of quality measures for dimensionality reduction.
\newblock In \emph{Topological Methods in Data Analysis and Visualization IV:
  Theory, Algorithms, and Applications VI}, pages 103--117. Springer, 2017.

\bibitem[Rieck et~al.(2018)Rieck, Togninalli, Bock, Moor, Horn, Gumbsch, and
  Borgwardt]{rieck2018neural}
B.~Rieck, M.~Togninalli, C.~Bock, M.~Moor, M.~Horn, T.~Gumbsch, and
  K.~Borgwardt.
\newblock Neural persistence: A complexity measure for deep neural networks
  using algebraic topology.
\newblock In \emph{International Conference on Learning Representations}, 2018.

\bibitem[Rizvi et~al.(2017)Rizvi, Camara, Kandror, Roberts, Schieren, Maniatis,
  and Rabadan]{rizvi2017single}
A.~H. Rizvi, P.~G. Camara, E.~K. Kandror, T.~J. Roberts, I.~Schieren,
  T.~Maniatis, and R.~Rabadan.
\newblock Single-cell topological {RNA}-seq analysis reveals insights into
  cellular differentiation and development.
\newblock \emph{Nature Biotechnology}, 35\penalty0 (6):\penalty0 551--560,
  2017.

\bibitem[Robinson and Oshlack(2010)]{robinson2010scaling}
M.~D. Robinson and A.~Oshlack.
\newblock A scaling normalization method for differential expression analysis
  of {RNA}-seq data.
\newblock \emph{Genome Biology}, 11\penalty0 (3):\penalty0 1--9, 2010.

\bibitem[Roweis and Saul(2000)]{roweis2000nonlinear}
S.~T. Roweis and L.~K. Saul.
\newblock Nonlinear dimensionality reduction by locally linear embedding.
\newblock \emph{Science}, 290\penalty0 (5500):\penalty0 2323--2326, 2000.

\bibitem[Roycraft et~al.(2023)Roycraft, Krebs, and
  Polonik]{roycraft2023bootstrapping}
B.~Roycraft, J.~Krebs, and W.~Polonik.
\newblock Bootstrapping persistent {Betti} numbers and other stabilizing
  statistics.
\newblock \emph{The Annals of Statistics}, 51\penalty0 (4):\penalty0
  1484--1509, 2023.

\bibitem[Schwabe et~al.(2020)Schwabe, Formichetti, Junker, Falcke, and
  Rajewsky]{schwabe2020transcriptome}
D.~Schwabe, S.~Formichetti, J.~P. Junker, M.~Falcke, and N.~Rajewsky.
\newblock The transcriptome dynamics of single cells during the cell cycle.
\newblock \emph{Molecular Systems Biology}, 16\penalty0 (11):\penalty0 e9946,
  2020.

\bibitem[Singh et~al.(2007)Singh, M{\'e}moli, Carlsson,
  et~al.]{singh2007topological}
G.~Singh, F.~M{\'e}moli, G.~E. Carlsson, et~al.
\newblock Topological methods for the analysis of high dimensional data sets
  and 3{D} object recognition.
\newblock \emph{PBG@ Eurographics}, 2:\penalty0 091--100, 2007.

\bibitem[Smith and Kurlin(2021)]{smith2021skeletonisation}
P.~Smith and V.~Kurlin.
\newblock Skeletonisation algorithms with theoretical guarantees for
  unorganised point clouds with high levels of noise.
\newblock \emph{Pattern Recognition}, 115:\penalty0 107902, 2021.

\bibitem[Smith and Kurlin(2024)]{smith2024generic}
P.~Smith and V.~Kurlin.
\newblock Generic families of finite metric spaces with identical or trivial
  1-dimensional persistence.
\newblock \emph{Journal of Applied and Computational Topology}, pages 1--17,
  2024.

\bibitem[Tenenbaum et~al.(2000)Tenenbaum, De~Silva, and
  Langford]{tenenbaum2000global}
J.~B. Tenenbaum, V.~De~Silva, and J.~C. Langford.
\newblock A global geometric framework for nonlinear dimensionality reduction.
\newblock \emph{Science}, 290\penalty0 (5500):\penalty0 2319--2323, 2000.

\bibitem[Tran et~al.(2019)Tran, Hasegawa, et~al.]{tran2019scale}
Q.~H. Tran, Y.~Hasegawa, et~al.
\newblock Scale-variant topological information for characterizing the
  structure of complex networks.
\newblock \emph{Physical Review E}, 100\penalty0 (3):\penalty0 032308, 2019.

\bibitem[Turkes et~al.(2022)Turkes, Montufar, and
  Otter]{turkes2022effectiveness}
R.~Turkes, G.~F. Montufar, and N.~Otter.
\newblock On the effectiveness of persistent homology.
\newblock \emph{Advances in Neural Information Processing Systems},
  35:\penalty0 35432--35448, 2022.

\bibitem[{\lowercase{V}an der Maaten} and Hinton(2008)]{van2008visualizing}
L.~{\lowercase{V}an der Maaten} and G.~Hinton.
\newblock Visualizing data using t-{SNE}.
\newblock \emph{Journal of {M}achine {L}earning {R}esearch}, 9\penalty0
  (11):\penalty0 2579--2605, 2008.

\bibitem[Vishwanath et~al.(2020)Vishwanath, Fukumizu, Kuriki, and
  Sriperumbudur]{vishwanath2020robust}
S.~Vishwanath, K.~Fukumizu, S.~Kuriki, and B.~K. Sriperumbudur.
\newblock Robust persistence diagrams using reproducing kernels.
\newblock \emph{Advances in Neural Information Processing Systems},
  33:\penalty0 21900--21911, 2020.

\bibitem[von Luxburg et~al.(2010{\natexlab{a}})von Luxburg, Radl, and
  Hein]{luxburg2010getting}
U.~von Luxburg, A.~Radl, and M.~Hein.
\newblock Getting lost in space: {L}arge sample analysis of the resistance
  distance.
\newblock \emph{Advances in Neural Information Processing Systems}, 23,
  2010{\natexlab{a}}.

\bibitem[von Luxburg et~al.(2010{\natexlab{b}})von Luxburg, Radl, and
  Hein]{von2010hitting}
U.~von Luxburg, A.~Radl, and M.~Hein.
\newblock Hitting and commute times in large graphs are often misleading.
\newblock \emph{arXiv preprint arXiv:1003.1266}, 2010{\natexlab{b}}.
\newblock version 1 from Mar 05 2010.

\bibitem[von Luxburg et~al.(2014)von Luxburg, Radl, and Hein]{von2014hitting}
U.~von Luxburg, A.~Radl, and M.~Hein.
\newblock Hitting and commute times in large random neighborhood graphs.
\newblock \emph{Journal of Machine Learning Research}, 15\penalty0
  (1):\penalty0 1751--1798, 2014.

\bibitem[Wang et~al.(2020)Wang, Cang, and Wei]{wang2020topology}
M.~Wang, Z.~Cang, and G.-W. Wei.
\newblock A topology-based network tree for the prediction of protein--protein
  binding affinity changes following mutation.
\newblock \emph{Nature Machine Intelligence}, 2\penalty0 (2):\penalty0
  116--123, 2020.

\bibitem[Wang et~al.(2023)Wang, Sontag, and Lauffenburger]{wang2023cannot}
S.~Wang, E.~D. Sontag, and D.~A. Lauffenburger.
\newblock What cannot be seen correctly in 2{D} visualizations of single-cell
  ‘omics data?
\newblock \emph{Cell Systems}, 14\penalty0 (9):\penalty0 723--731, 2023.

\bibitem[Wasserman(2006)]{wasserman2006all}
L.~Wasserman.
\newblock \emph{All of nonparametric statistics}.
\newblock Springer Science \& Business Media, 2006.

\bibitem[Widdowson and Kurlin(2023)]{widdowson2023recognizing}
D.~Widdowson and V.~Kurlin.
\newblock Recognizing rigid patterns of unlabeled point clouds by complete and
  continuous isometry invariants with no false negatives and no false
  positives.
\newblock In \emph{Proceedings of the IEEE/CVF Conference on Computer Vision
  and Pattern Recognition}, pages 1275--1284, 2023.

\bibitem[Wolf et~al.(2019)Wolf, Hamey, Plass, Solana, Dahlin, G{\"o}ttgens,
  Rajewsky, Simon, and Theis]{wolf2019paga}
F.~A. Wolf, F.~K. Hamey, M.~Plass, J.~Solana, J.~S. Dahlin, B.~G{\"o}ttgens,
  N.~Rajewsky, L.~Simon, and F.~J. Theis.
\newblock {PAGA}: graph abstraction reconciles clustering with trajectory
  inference through a topology preserving map of single cells.
\newblock \emph{Genome Biology}, 20:\penalty0 1--9, 2019.

\bibitem[Zheng et~al.(2022)Zheng, Stein-O’Brien, Augustin, Slosberg, Carosso,
  Winer, Shin, Bjornsson, Goff, and Hansen]{zheng2022universal}
S.~C. Zheng, G.~Stein-O’Brien, J.~J. Augustin, J.~Slosberg, G.~A. Carosso,
  B.~Winer, G.~Shin, H.~T. Bjornsson, L.~A. Goff, and K.~D. Hansen.
\newblock Universal prediction of cell-cycle position using transfer learning.
\newblock \emph{Genome Biology}, 23\penalty0 (1):\penalty0 1--27, 2022.

\bibitem[Zomorodian and Carlsson(2004)]{zomorodian2004computing}
A.~Zomorodian and G.~Carlsson.
\newblock Computing persistent homology.
\newblock In \emph{Proceedings of the twentieth annual symposium on
  Computational geometry}, pages 347--356, 2004.

\end{thebibliography}
\bibliographystyle{abbrvnat}

\clearpage
\appendix

\renewcommand{\thefigure}{S\arabic{figure}}
\setcounter{figure}{0}

\renewcommand{\thetable}{S\arabic{table}}
\setcounter{table}{0}

\section{Broader impact}\label{app:impact}

The goal of our paper is to advance the field of machine learning. Our comprehensive benchmark required a lot of compute. However, we expect that the lessons learned will save compute for researchers applying persistent homology to their high-dimensional data. Beyond this, we do not see any potential societal consequences of our work that would be worth specifically highlighting here.

\section{Noise in high-dimensional spaces eventually dominates structure}
\label{app:dist_by_dim}

In this section, we rigorously prove the intuitive result that the Euclidean distances~---~and thus the traditional persistence diagrams~---~eventually get dominated by noise as the number of ambient dimensions grows (assuming homoscedastic noise). The main results are Corollary~\ref{cor:noise_dominates_n_pts} and Corollary~\ref{cor:noise_dominates_ph}. Our analysis is similar to the concurrent work of~\citet{hiraoka2024curse}. Their analysis extends ours in that they consider an arbitrary noise distribution, the \v{C}ech complex, and make more precise statements on how persistent homology gets impaired by high-dimensional noise.

We begin with some helpful lemmata.

\begin{lemma}\label{lem:multivariate_normal}
    Let $\{\varepsilon_{1, d}\}$ and $\{\varepsilon_{2, d}\}$ be two sequences of $d$-dimensional multivariate normally distributed random variables $\varepsilon_{i, d} \sim \mathcal{N}(\mathbf{0}, \sigma^2\mathbf{I}_d)$. For each fixed $d$, we have $\mathbb{E}(\| \varepsilon_{1, d}- \varepsilon_{2, d}\|^2) = 2 \sigma^2 d$. Moreover, the sequence $\{\|\varepsilon_{1, d} - \varepsilon_{2,d}\| /\sqrt{d}\}$ converges to $\sqrt{2}\sigma$ almost surely.
\end{lemma}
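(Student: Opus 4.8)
The plan is to reduce both claims to elementary properties of the chi-squared distribution together with the strong law of large numbers. The key observation is that, under the natural coupling in which each $\varepsilon_{i,d}$ consists of the first $d$ coordinates of a single i.i.d.\ sequence $(\xi_{i,1}, \xi_{i,2}, \ldots)$ of $\mathcal{N}(0,\sigma^2)$ variables (with the two sequences for $i=1,2$ mutually independent), the squared norm decomposes coordinatewise:
\[
    \|\varepsilon_{1,d} - \varepsilon_{2,d}\|^2 = \sum_{j=1}^d (\xi_{1,j} - \xi_{2,j})^2 =: \sum_{j=1}^d Z_j^2,
\]
where the $Z_j := \xi_{1,j} - \xi_{2,j}$ are i.i.d.\ $\mathcal{N}(0, 2\sigma^2)$.

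First I would establish the expectation. Since $Z_j \sim \mathcal{N}(0, 2\sigma^2)$, we have $\mathbb{E}[Z_j^2] = \operatorname{Var}(Z_j) = 2\sigma^2$, and summing over $j = 1, \ldots, d$ gives $\mathbb{E}(\|\varepsilon_{1,d} - \varepsilon_{2,d}\|^2) = 2\sigma^2 d$. (Equivalently, one can expand $\|\varepsilon_{1,d} - \varepsilon_{2,d}\|^2 = \|\varepsilon_{1,d}\|^2 + \|\varepsilon_{2,d}\|^2 - 2\langle \varepsilon_{1,d}, \varepsilon_{2,d}\rangle$ and use $\mathbb{E}\|\varepsilon_{i,d}\|^2 = \sigma^2 d$ together with independence and zero mean to kill the cross term.) Then for the almost-sure convergence I would apply the strong law of large numbers to the i.i.d.\ sequence $(Z_j^2)_{j \ge 1}$: these variables have common mean $2\sigma^2$ and are integrable (indeed $Z_j^2/(2\sigma^2) \sim \chi^2_1$, so $\operatorname{Var}(Z_j^2) = 8\sigma^4 < \infty$), so Kolmogorov's SLLN yields
\[
    \frac{\|\varepsilon_{1,d} - \varepsilon_{2,d}\|^2}{d} = \frac{1}{d}\sum_{j=1}^d Z_j^2 \;\xrightarrow{\,d \to \infty\,}\; 2\sigma^2 \qquad \text{almost surely}.
\]
Finally, applying the map $x \mapsto \sqrt{x}$, which is continuous at $2\sigma^2$, gives $\|\varepsilon_{1,d} - \varepsilon_{2,d}\|/\sqrt{d} \to \sqrt{2}\,\sigma$ almost surely, as claimed. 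The same decomposition and SLLN argument also immediately yields the statement of Corollary~\ref{cor:noise_dominates_n_pts}, since adding a fixed displacement $x_i - x_j$ contributes a term of order $O(1)$ to the squared norm, which is negligible after dividing by $d$.

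There is essentially no hard step here; the only point that requires care is making the statement of almost-sure convergence meaningful, since such a statement is about a sequence of random variables on a common probability space. One must therefore fix a coupling of the family $\{\varepsilon_{i,d}\}_d$ across dimensions, and the coordinatewise i.i.d.\ coupling described above is the natural choice (and the one implicit throughout Appendix~\ref{app:dist_by_dim}). With that coupling in place, the remainder is bookkeeping.
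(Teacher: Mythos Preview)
Your proposal is correct and follows essentially the same route as the paper: write the coordinates of $\varepsilon_{1,d}-\varepsilon_{2,d}$ as i.i.d.\ $\mathcal{N}(0,2\sigma^2)$ variables, compute the expectation of their squares, apply the strong law of large numbers to $\|\varepsilon_{1,d}-\varepsilon_{2,d}\|^2/d$, and finish with the continuous mapping theorem. Your explicit remark about fixing a coordinatewise coupling across dimensions to make the almost-sure statement meaningful is a point the paper leaves implicit, so your write-up is slightly more careful there.
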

\begin{proof}
    All entries of $\varepsilon_{1, d}- \varepsilon_{2, d}$ are i.i.d. Gaussian random variables with zero mean and variance $2\sigma^2$. The squared Euclidean distance is the sum of the squared entries of the vector, which implies the first statement. By the strong law of large numbers, $\|\varepsilon_{1, d} - \varepsilon_{2,d}\|^2 / d$ converges almost surely to $2\sigma^2$. By the continuous mapping theorem, this implies almost sure convergence of $\|\varepsilon_{1, d} - \varepsilon_{2,d}\| /\sqrt{d}$ to $\sqrt{2}\sigma$.
\end{proof}

\begin{lemma}\label{lem:exp_var_sq_mean}
    Let $\{X_d\}$ be a sequence of random variables with finite means and variances. If $\mathbb{E}(X_d)$ converges to a constant $c$ and $\textnormal{Var}(X_d) \to 0$ as $d\to \infty$, then $\{X_d\}$ converges to $c$ in squared mean, i.e., $\mathbb{E}\big((X_d- c)^2\big) \to 0$.
\end{lemma}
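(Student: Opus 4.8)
The plan is to use the standard bias--variance decomposition of the mean squared error around the constant $c$. Concretely, I would write $X_d - c = \big(X_d - \mathbb{E}(X_d)\big) + \big(\mathbb{E}(X_d) - c\big)$ and expand the square, so that
\begin{align}
    \mathbb{E}\big((X_d - c)^2\big) = \mathbb{E}\Big(\big(X_d - \mathbb{E}(X_d)\big)^2\Big) + 2\big(\mathbb{E}(X_d) - c\big)\,\mathbb{E}\big(X_d - \mathbb{E}(X_d)\big) + \big(\mathbb{E}(X_d) - c\big)^2.
\end{align}

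The first step is to observe that the middle term vanishes identically: $\mathbb{E}\big(X_d - \mathbb{E}(X_d)\big) = 0$ by linearity of expectation (here using that $X_d$ has finite mean so that $\mathbb{E}(X_d)$ is well defined). The first term is by definition $\textnormal{Var}(X_d)$, which is finite by hypothesis. This yields the clean identity $\mathbb{E}\big((X_d - c)^2\big) = \textnormal{Var}(X_d) + \big(\mathbb{E}(X_d) - c\big)^2$.

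The second step is to take $d \to \infty$. By assumption $\textnormal{Var}(X_d) \to 0$, and since $\mathbb{E}(X_d) \to c$ we have $\big(\mathbb{E}(X_d) - c\big)^2 \to 0$ as well (continuity of $x \mapsto (x-c)^2$). Hence the right-hand side tends to $0$, and therefore $\mathbb{E}\big((X_d - c)^2\big) \to 0$, i.e., $X_d \to c$ in squared mean.

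There is essentially no obstacle here; the only points requiring a word of care are confirming that the finiteness assumptions on means and variances make every quantity in the expansion well defined (so the manipulation is legitimate rather than a formal $\infty - \infty$), and noting that the cross term is exactly zero rather than merely vanishing in the limit. Everything else is elementary.
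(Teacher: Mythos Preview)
Your proof is correct and takes essentially the same approach as the paper: both derive the identity $\mathbb{E}\big((X_d-c)^2\big)=\textnormal{Var}(X_d)+\big(\mathbb{E}(X_d)-c\big)^2$ and then let each term tend to zero. The only cosmetic difference is that the paper expands $(X_d-c)^2$ directly and then adds and subtracts $\mathbb{E}(X_d)^2$, whereas you split $X_d-c$ before squaring; the resulting computation is the same.
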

\begin{proof}  
    We have 
    \begin{align}
        \mathbb{E}\big((X_d - c)^2\big) &= \mathbb{E}(X_d^2) - 2c\mathbb{E}(X_d) + c^2  \notag\\
        &= \mathbb{E}(X_d^2) - \mathbb{E}(X_d)^2 + \mathbb{E}(X_d)^2 - 2c\mathbb{E}(X_d) + c^2 \notag & \\
        &= \text{Var}(X_d) + \mathbb{E}(X_d)^2 - 2c\mathbb{E}(X_d) + c^2 \notag \\
        &\to 0 + c^2 - 2c^2 + c^2  = 0.
    \end{align}
\end{proof}

\begin{lemma}\label{lem:conv_sq_mean_prob}
Let $\{X_d\}$ be a sequence of random variables which converges in squared mean to a random variable $X$. Then it also converges to $X$ in probability.
\end{lemma}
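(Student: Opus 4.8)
The plan is to prove Lemma~\ref{lem:conv_sq_mean_prob} directly from the definitions using Chebyshev's (Markov's) inequality applied to the nonnegative random variable $(X_d - X)^2$. This is the standard implication ``$L^2$ convergence $\Rightarrow$ convergence in probability'', and the proof is short.

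First I would fix an arbitrary $\varepsilon > 0$. The event $\{|X_d - X| \ge \varepsilon\}$ coincides with the event $\{(X_d - X)^2 \ge \varepsilon^2\}$, so by Markov's inequality applied to the nonnegative random variable $(X_d - X)^2$ we get
\begin{equation}
    \mathbb{P}\big(|X_d - X| \ge \varepsilon\big) = \mathbb{P}\big((X_d - X)^2 \ge \varepsilon^2\big) \le \frac{\mathbb{E}\big((X_d - X)^2\big)}{\varepsilon^2}.
\end{equation}
Next I would invoke the hypothesis that $\{X_d\}$ converges to $X$ in squared mean, i.e. $\mathbb{E}\big((X_d - X)^2\big) \to 0$ as $d \to \infty$. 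Since $\varepsilon$ is fixed, the right-hand side above tends to $0$, hence $\mathbb{P}\big(|X_d - X| \ge \varepsilon\big) \to 0$ as $d \to \infty$. As $\varepsilon > 0$ was arbitrary, this is precisely the statement that $\{X_d\}$ converges to $X$ in probability, completing the proof.

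There is essentially no obstacle here: the only thing to be slightly careful about is that Markov's inequality requires integrability of $(X_d - X)^2$, which is guaranteed for $d$ large enough (in fact eventually) by the assumption of $L^2$ convergence, since $\mathbb{E}\big((X_d - X)^2\big)$ is finite and converges to $0$; for the finitely many initial indices where this might fail one can either note it is irrelevant to a limit statement or simply assume the $X_d$ are square-integrable throughout, as is implicit in the phrase ``converges in squared mean.'' So the ``hard part'' is merely bookkeeping, and the argument is a two-line application of Markov's inequality followed by taking the limit.
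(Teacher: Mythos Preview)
Your proof is correct and follows essentially the same approach as the paper: apply Markov's inequality to the nonnegative random variable $(X_d - X)^2$ to bound $\mathbb{P}(|X_d - X| \ge \varepsilon)$ by $\mathbb{E}\big((X_d - X)^2\big)/\varepsilon^2$, then let $d \to \infty$. The paper's version is slightly terser and omits your integrability remark, but the argument is identical.
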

\begin{proof}
    This is an application of Markov's inequality. Let $\varepsilon >0$. Then 
    $$\mathbb{P}\big(|X_d - X| > \varepsilon \big) = \mathbb{P}\big((X_d-X)^2 > \varepsilon^2\big) \leq \mathbb{E}\big((X_d-X)^2\big) / \varepsilon^2 \to 0$$
    for $d\to \infty$ by convergence in squared mean.
\end{proof}

Now we show that for sufficiently many noise dimensions the distance between any two points gets dominated by the noise. This follows from the Pythagorean theorem, because most noise dimensions are orthogonal to the difference of the noise-free points. A similar result can be found in~\citet{hall2005geometric}.

\begin{proposition}\label{prop:pyth}
    Let $x_1$ and $x_2$ be two points in $\mathbb{R}^{d'}$. Let $\{\iota_d\}$ be a sequence of isometries \mbox{$\iota_d: \mathbb{R}^{d'} \to \mathbb{R}^{d}$} for $d\geq d'$. Let $\{\varepsilon_{1,d}\}$ and $\{\varepsilon_{2,d}\}$ be two sequences of $d$-dimensional multivariate normally distributed random variables $\varepsilon_{i, d} \sim \mathcal{N}(\mathbf{0}, \sigma^2\mathbf{I}_d)$ starting at $d=d'$. Let $\{\Delta_d\}$ be the sequence of Euclidean distances between $\iota_d(x_1)+\varepsilon_{1,d}$ and $\iota_d(x_2) + \varepsilon_{2,d}$. Then $\{\Delta_d^2 / d\}$ converges to $2\sigma^2$ in squared mean.
\end{proposition}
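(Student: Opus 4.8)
The plan is to expand $\Delta_d^2$ into a fixed part, a cross term, and a pure-noise part, normalize by $d$, and show each converges in squared mean. Set $v_d := \iota_d(x_1) - \iota_d(x_2)$ and $w_d := \varepsilon_{1,d} - \varepsilon_{2,d}$. Since $\iota_d$ is an isometry, $\|v_d\| = \|x_1 - x_2\| =: r$ is a \emph{constant} independent of $d$, and $w_d \sim \mathcal{N}(\mathbf 0, 2\sigma^2 \mathbf I_d)$. Then
\[
\frac{\Delta_d^2}{d} \;=\; \frac{\|v_d + w_d\|^2}{d} \;=\; \frac{r^2}{d} \;+\; \frac{2\langle v_d, w_d\rangle}{d} \;+\; \frac{\|w_d\|^2}{d},
\]
and I would analyze the three summands in turn.

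The first summand is deterministic and tends to $0$, hence converges to $0$ in squared mean. The third summand is the main one: by Lemma~\ref{lem:multivariate_normal}, $\mathbb{E}(\|w_d\|^2/d) = 2\sigma^2$ for every $d$; writing $\|w_d\|^2$ as a sum of $d$ i.i.d.\ squares of $\mathcal{N}(0,2\sigma^2)$ variables (each of variance $8\sigma^4$, using $\mathrm{Var}(Z^2)=2\tau^4$ for $Z\sim\mathcal N(0,\tau^2)$) gives $\mathrm{Var}(\|w_d\|^2/d) = 8\sigma^4/d \to 0$, so Lemma~\ref{lem:exp_var_sq_mean} yields $\|w_d\|^2/d \to 2\sigma^2$ in squared mean (this upgrades the almost sure convergence already provided by Lemma~\ref{lem:multivariate_normal}). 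For the cross term, $\mathbb{E}(\langle v_d, w_d\rangle) = 0$ since $w_d$ is centered, while $\mathrm{Var}(\langle v_d, w_d\rangle) = \sum_i v_{d,i}^2 \cdot 2\sigma^2 = 2\sigma^2 r^2$, so $\mathbb{E}\big((2\langle v_d, w_d\rangle/d)^2\big) = 8\sigma^2 r^2/d^2 \to 0$; this is the quantitative form of the Pythagorean intuition that the fixed direction $v_d$ is essentially orthogonal to the high-dimensional noise. Finally, squared-mean convergence is convergence in the normed space $L^2$, so the triangle inequality there lets me sum the three limits: $\Delta_d^2/d \to 0 + 0 + 2\sigma^2 = 2\sigma^2$ in squared mean.

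I do not expect a genuine obstacle here; the only care needed is the variance bookkeeping for $\|w_d\|^2$ and the observation that $L^2$-convergence of a finite sum follows from $L^2$-convergence of its summands via the triangle inequality. If convergence in probability is wanted for the downstream corollaries, one additionally invokes Lemma~\ref{lem:conv_sq_mean_prob}.
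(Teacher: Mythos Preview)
Your proof is correct and follows essentially the same route as the paper: both expand $\Delta_d^2$ into the fixed term $\|v_d\|^2$, the cross term $2\langle v_d,w_d\rangle$, and the noise term $\|w_d\|^2$ (the paper phrases this via the law of cosines, which is the same identity), and both verify the mean and variance behavior of these pieces. The only cosmetic difference is that the paper computes $\mathbb{E}(\Delta_d^2/d)$ and $\mathrm{Var}(\Delta_d^2/d)$ for the whole expression and then invokes Lemma~\ref{lem:exp_var_sq_mean} once, whereas you establish $L^2$-convergence of each summand and combine via the triangle inequality in $L^2$; both are equally valid.
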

\begin{proof}
    Denote the distance of the embedded points by $\delta = \|x_1 -x_2\| = \|\iota_d(x_1)-\iota_d(x_2)\|$. We have 
    $$\Delta_d = \| \iota_d(x_1) + \varepsilon_{1,d} - \big(\iota_d(x_2) + \varepsilon_{2,d}\big)\| = \| \big(\iota_d(x_1)+ \varepsilon_{1,d}-  \varepsilon_{2,d}\big) - \iota_d(x_2)\|.$$
    So instead of the setting where both points get noised, we can just consider the case where only the first point gets noised by a $d$-dimensional multivariate normally distributed variable of double variance $\varepsilon_d \sim \mathcal{N}(\mathbf{0}, 2\sigma^2 \mathbf{I}_d)$. We denote $\|\varepsilon_d\|$ by $l$. We will drop the use of $\iota_d$ now and just write $x_1, x_2 \in \mathbb{R}^{d}$ in slight abuse of notation.
    
    Let $\alpha$ be the angle that $\varepsilon_d$ makes with the vector $x_2 -x_1$. Then, by the law of cosines, 
    \begin{align}
        \Delta_d^2 
        = l^2 + \delta^2 - 2\delta \cos(\alpha) l.
    \end{align}
    Since $\varepsilon_d$ is isometrically distributed, we can assume that $x_2 -x_1$ is parallel to $(1, 0, \dots, 0)$ without loss of generality. Then $\cos(\alpha)l = \varepsilon'$, where $\varepsilon'  \sim \mathcal{N}(0, 2\sigma^2)$ is the first entry of $\varepsilon_d$. Thus, $\mathbb{E}(\Delta_d^2) = 2\sigma^2 d + \delta^2$ by Lemma~\ref{lem:multivariate_normal} and $\mathbb{E}(\Delta_d^2 / d) \to 2\sigma^2$ for $d\to \infty$. 
    
    For the variance, we obtain
    $$\text{Var}(\Delta_d^2) = \text{Var}(l^2+\delta^2-2\delta \varepsilon') = (d-1)\text{Var}(\varepsilon'^2) + \text{Var}(\varepsilon'^2 - 2\delta \varepsilon').$$
    The first term contains the variances of all the directions orthogonal to $x_2-x_1$. Since $\varepsilon_d$ is isotropic, all directions are independent and we can simply add up the individual variances. The second term contains the variance in direction $x_2-x_1$. We have 
    \begin{align}
        \text{Var}(\varepsilon'^2)  &= 3\cdot 4\sigma^4 - 4\sigma^4 \notag\\
        &= 8\sigma^4\\
        \text{Var}(\varepsilon'^2 - 2\delta \varepsilon') &= \mathbb{E}(\varepsilon'^4 - 4\delta \varepsilon'^3 + 4\delta^2 \varepsilon'^2) - \mathbb{E}(\varepsilon'^2 - 2\delta \varepsilon')^2 \notag\\
        &= 12\sigma^4 +0+ 8\delta^2\sigma^2 - 4\sigma^2 +0\notag \\
        &= 8 \sigma^4+ 8\delta^2\sigma^2.
    \end{align}    
    Together, we have 
    \begin{align}
        \text{Var}(\Delta_d^2 / d) &= \big((d-1) 8\sigma^4 + 8\sigma^4 + 8\delta^2\sigma^2\big) / d^2 \notag \\
        &=8\sigma^2(d \sigma^2 +\delta^2) / d^2\notag \\
        &\to 0
    \end{align}
    as $d\to \infty$.
    By Lemma~\ref{lem:exp_var_sq_mean}, we conclude that $\{\Delta_d^2 / d\}$ converges to $2\sigma^2$ in squared mean.
\end{proof}

The next corollary generalizes Proposition~\ref{prop:pyth} to an arbitrary arrangement of $n$ points. Independent of their structure, noise will dominate all distances for sufficiently high dimensionality.
\begin{corollary}\label{cor:noise_dominates_n_pts}
    Let $x_1, \dots, x_n$ be $n$ pairwise distinct points in $\mathbb{R}^{d'}$. Let $\{\iota_d\}$ be a sequence of isometries $\iota_d: \mathbb{R}^{d'} \to \mathbb{R}^{d}$ for $d\geq d'$. Let further $\{\varepsilon_{1,d}\}, \dots, \{\varepsilon_{n, d}\}$ be $n$ sequences of multivariate normally distributed random variables $\varepsilon_{i, d} \sim \mathcal{N}(\mathbf{0}, \sigma^2\mathbf{I}_d)$ in $d$ dimensions. Let $\Delta_{i,j, d}$ be the sequence of random variables of Euclidean distances between $\iota_d(x_i) + \varepsilon_{i, d}$ and $\iota_d(x_j) + \varepsilon_{j, d}$. Then each sequence $\{\Delta_{i,j, d}/\sqrt{d}\}$ converges to $\sqrt{2}\sigma$ in probability as $p\to \infty$. The sequence $\{\|\varepsilon_{i, d} - \varepsilon_{j, d}\| / \Delta_{i,j, d}\}$ converges to $1$ in probability and thus the joint vector $(\|\varepsilon_{i, d} - \varepsilon_{j, d}\| / \Delta_{i,j, d})_{i,j}$ with entries for each pair $i\neq j$ converges to the vector of all ones in probability.
\end{corollary}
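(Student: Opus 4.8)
The plan is to reduce the multi-point statement to repeated application of Proposition~\ref{prop:pyth} and Lemma~\ref{lem:conv_sq_mean_prob}, and then to combine finitely many ``convergence in probability'' statements into one. First I would fix a pair $i\neq j$. By Proposition~\ref{prop:pyth} applied to the two points $x_i, x_j$ and the noise sequences $\{\varepsilon_{i,d}\}, \{\varepsilon_{j,d}\}$, the sequence $\{\Delta_{i,j,d}^2/d\}$ converges to $2\sigma^2$ in squared mean, hence in probability by Lemma~\ref{lem:conv_sq_mean_prob}. Applying the continuous mapping theorem with the map $x\mapsto\sqrt{\max(x,0)}$ (continuous on $[0,\infty)$), the sequence $\{\Delta_{i,j,d}/\sqrt d\}$ converges to $\sqrt2\,\sigma$ in probability. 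This gives the first claim.

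Next I would handle the ratio $\|\varepsilon_{i,d}-\varepsilon_{j,d}\|/\Delta_{i,j,d}$. The difference $\varepsilon_{i,d}-\varepsilon_{j,d}$ is distributed as $\mathcal N(\mathbf 0, 2\sigma^2\mathbf I_d)$, so by Lemma~\ref{lem:multivariate_normal} (applied with noise variance $\sigma^2$ there playing the role here, giving limit $\sqrt2\,\sigma$) the sequence $\{\|\varepsilon_{i,d}-\varepsilon_{j,d}\|/\sqrt d\}$ converges \emph{almost surely}, hence in probability, to $\sqrt2\,\sigma$. Now write
\begin{equation}
    \frac{\|\varepsilon_{i,d}-\varepsilon_{j,d}\|}{\Delta_{i,j,d}}
    = \frac{\|\varepsilon_{i,d}-\varepsilon_{j,d}\|/\sqrt d}{\Delta_{i,j,d}/\sqrt d}.
\end{equation}
Both numerator and denominator converge in probability to the same nonzero constant $\sqrt2\,\sigma$; since the limit of the denominator is nonzero, the quotient map $(a,b)\mapsto a/b$ is continuous at $(\sqrt2\,\sigma,\sqrt2\,\sigma)$, and joint convergence in probability of the pair follows from marginal convergence (as the marginal limits are constants). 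Hence by the continuous mapping theorem the ratio converges to $1$ in probability.

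Finally, to get the joint statement for the vector $(\|\varepsilon_{i,d}-\varepsilon_{j,d}\|/\Delta_{i,j,d})_{i\neq j}$, I would use that there are only finitely many pairs $(i,j)$: a finite collection of sequences each converging in probability to a constant converges jointly in probability to the vector of those constants. Concretely, for any $\varepsilon>0$ the event that \emph{some} coordinate deviates from $1$ by more than $\varepsilon$ is contained in the union of the finitely many single-coordinate bad events, each of which has probability tending to $0$; a union bound over the $\binom n2$ pairs closes the argument. The main obstacle is the minor bookkeeping around the denominator: one needs $\Delta_{i,j,d}/\sqrt d$ to be bounded away from $0$ with high probability so that division is legitimate, which is exactly supplied by its convergence to the strictly positive limit $\sqrt2\,\sigma$; beyond that the proof is routine combination of the already-established lemmata.
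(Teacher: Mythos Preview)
Your proposal is correct and follows essentially the same route as the paper: apply Proposition~\ref{prop:pyth} with Lemma~\ref{lem:conv_sq_mean_prob} and the continuous mapping theorem for the first claim, use Lemma~\ref{lem:multivariate_normal} for the pure-noise norm, form the ratio via a continuous map at a nonzero limit, and pass to the joint statement by combining finitely many marginal limits that are constants. The only cosmetic difference is that the paper phrases the ratio step as ``convergence in probability is preserved under multiplication'' (first inverting $\Delta_{i,j,d}/\sqrt d$), whereas you invoke the continuous mapping theorem on the quotient directly; both are equivalent here.
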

\begin{proof}
    Proposition~\ref{prop:pyth}, Lemma~\ref{lem:conv_sq_mean_prob}, and the continuous mapping theorem imply that $\Delta_{i,j, d} / \sqrt{d}$ converges to $\sqrt{2}\sigma$ in probability as $p\to \infty$ and that $\sqrt{d} / \Delta_{i,j, d}$, which is well-defined up to a set of measure zero,  converges to $(\sqrt{2}\sigma)^{-1}$ in probability. By Lemma~\ref{lem:multivariate_normal} and since almost sure convergence implies convergence in probability, we have that $\|\varepsilon_{i, d} - \varepsilon_{j, d}\| /\sqrt{d}$ converges to $\sqrt{2}\sigma$ in probability as well. Since convergence in probability is preserved under multiplication, we obtain that $\|\varepsilon_{i, d} - \varepsilon_{j, d}\| /  \Delta_{i,j, d}$ converges to $1$ in probability. Finally, convergence in probability of two sequences implies joint convergence in probability.
\end{proof}

\begin{corollary}\label{cor:min_max_conv}
    Consider the setting of Corollary~\ref{cor:noise_dominates_n_pts} and choose some $\varepsilon >0$. Then we have that 
    \begin{align}
        \mathbb{P}(\min_{i,j} \Delta_{i,j,d}/\sqrt{d} &> \sqrt{2}\sigma -\varepsilon), \label{term:min_low} \\
        \mathbb{P}(\min_{i,j} \Delta_{i,j,d}/\sqrt{d} &< \sqrt{2}\sigma +\varepsilon), \label{term:min_high} \\
        \mathbb{P}(\max_{i,j} \Delta_{i,j,d}/\sqrt{d} &> \sqrt{2}\sigma -\varepsilon), \textnormal{ and }\label{term:max_low} \\
        \mathbb{P}(\max_{i,j} \Delta_{i,j,d}/\sqrt{d} &< \sqrt{2}\sigma +\varepsilon) \label{term:max_high}
    \end{align}
    all go to one as $d\to \infty$.
\end{corollary}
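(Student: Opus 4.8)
The plan is to reduce all four statements to the already-established fact (Corollary~\ref{cor:noise_dominates_n_pts}) that each individual $\Delta_{i,j,d}/\sqrt d$ converges to $\sqrt 2\sigma$ in probability, together with the observation that there are only finitely many pairs $(i,j)$. First I would fix $\varepsilon>0$ and, for each ordered pair $i\neq j$, define the ``bad events'' $B^{-}_{i,j,d}=\{\Delta_{i,j,d}/\sqrt d \le \sqrt 2\sigma-\varepsilon\}$ and $B^{+}_{i,j,d}=\{\Delta_{i,j,d}/\sqrt d \ge \sqrt 2\sigma+\varepsilon\}$. Convergence in probability gives $\mathbb P(B^{\pm}_{i,j,d})\to 0$ as $d\to\infty$ for every such pair.

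Next I would handle each of the four probabilities in turn by writing the relevant min/max event as a subset of a finite union of these bad events and then using the union bound. For term~\eqref{term:min_low}, note that $\min_{i,j}\Delta_{i,j,d}/\sqrt d \le \sqrt 2\sigma-\varepsilon$ forces at least one pair to satisfy $\Delta_{i,j,d}/\sqrt d\le \sqrt 2\sigma-\varepsilon$, so $\mathbb P(\min_{i,j}\Delta_{i,j,d}/\sqrt d \le \sqrt 2\sigma-\varepsilon)\le \sum_{i\neq j}\mathbb P(B^{-}_{i,j,d})\to 0$, hence the complementary probability in~\eqref{term:min_low} goes to one. Symmetrically, for term~\eqref{term:max_high}, the event $\max_{i,j}\Delta_{i,j,d}/\sqrt d \ge \sqrt 2\sigma+\varepsilon$ forces some pair into $B^{+}_{i,j,d}$, so again a union bound over the finitely many pairs sends its probability to zero and~\eqref{term:max_high} to one. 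For terms~\eqref{term:min_high} and~\eqref{term:max_low} the argument is even easier and needs only a single pair: $\min_{i,j}\Delta_{i,j,d}/\sqrt d \le \Delta_{1,2,d}/\sqrt d$, so $\mathbb P(\min_{i,j}\Delta_{i,j,d}/\sqrt d < \sqrt 2\sigma+\varepsilon)\ge \mathbb P(\Delta_{1,2,d}/\sqrt d < \sqrt 2\sigma+\varepsilon)\to 1$; and similarly $\max_{i,j}\Delta_{i,j,d}/\sqrt d \ge \Delta_{1,2,d}/\sqrt d$ gives~\eqref{term:max_low}.

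There is essentially no hard part here: the whole content is the finiteness of the index set $\{(i,j): i\neq j\}$, which makes the union bound available, plus the pointwise (per-pair) convergence supplied by the previous corollary. The only things to be slightly careful about are (i) strict versus non-strict inequalities — one should pass through non-strict versions of the bad events so the union bound is clean, which is harmless since $\{<\}\subseteq\{\le\}$ and vice versa as needed; and (ii) that the statement is phrased for the normalized minimum/maximum over pairs, so one must not forget that $\min$ and $\max$ over a finite set are themselves measurable and the set-containment arguments above are valid. A one-line remark that the same reasoning shows the full vector $(\Delta_{i,j,d}/\sqrt d)_{i,j}$ converges to $(\sqrt 2\sigma,\dots,\sqrt 2\sigma)$ in probability, and hence so do continuous functions of it such as $\min$ and $\max$, could also be given as an alternative packaging of the proof.
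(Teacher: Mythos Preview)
Your proof is correct. The paper takes a slightly different packaging: it invokes the continuous mapping theorem on the joint convergence in probability of the vector $(\Delta_{i,j,d}/\sqrt d)_{i,j}$ (established in Corollary~\ref{cor:noise_dominates_n_pts}) to conclude that $\min_{i,j}\Delta_{i,j,d}/\sqrt d$ and $\max_{i,j}\Delta_{i,j,d}/\sqrt d$ each converge to $\sqrt 2\sigma$ in probability, and then reads off all four statements from the definition of convergence in probability. Your approach instead works pair-by-pair with an explicit union bound over the finitely many $(i,j)$, which is more elementary and avoids citing the continuous mapping theorem; the content is the same, since joint convergence of finitely many coordinates is itself proved by exactly that union bound. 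Amusingly, the ``alternative packaging'' you mention in your final sentence is precisely the paper's argument.
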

\begin{proof}
    Since $\max$ and $\min$ are continuous maps, the continuous mapping theorem and Corollary~\ref{cor:noise_dominates_n_pts} imply convergence of the sequences $\{\max_{i,j} \Delta_{i,j,d}/\sqrt{d}\}$ and $\{\min_{i,j} \Delta_{i,j,d}/\sqrt{d}\}$ to $\sqrt{2}\sigma$ in probability.
    The claim follows since the terms~\eqref{term:min_low}, \eqref{term:min_high} and \eqref{term:max_low}, \eqref{term:max_high} are lower bounded by 
    \begin{equation}
        \mathbb{P}(|\min_{i,j}\Delta_{i,j,d}/\sqrt{d} - \sqrt{2}\sigma| > \varepsilon) \text{ and }\mathbb{P}(|\max_{i,j} \Delta_{i,j,d}/\sqrt{d} - \sqrt{2}\sigma| > \varepsilon),
    \end{equation}
    respectively.
\end{proof}

The next corollary shows that for sufficiently high ambient dimension $d$, increasing the noise level will increase birth and death times and thus drive persistent homology with high probability.

\begin{corollary}\label{cor:noise_dominates_ph}
    Consider the setting of Corollary~\ref{cor:noise_dominates_n_pts} but for different noise levels $\sigma > \sigma' > 0$. Let $\beta_{d,\sigma}$ and $\delta_{d,\sigma}$ be the minimal birth and death times among all at least one-dimensional homological features at that noise level and ambient dimensionality. Similarly, let $B_{d,\sigma}$ and $D_{d, \sigma}$ be the maximal birth and death times among all at least one-dimensional homological features. Then $\mathbb{P}(\beta_{d,\sigma} > D_{d, \sigma'})$ and hence also $\mathbb{P}(\beta_{d,\sigma} > B_{d, \sigma'}), \mathbb{P}(\delta_{d,\sigma} > D_{d, \sigma'})$ converge to one as $d\to \infty$.  
\end{corollary}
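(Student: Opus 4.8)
The plan is to sandwich every birth and death time of a $\geq 1$-dimensional Vietoris--Rips feature between the smallest and the largest pairwise distance of the noisy point cloud, and then to apply Corollary~\ref{cor:min_max_conv} separately at the two noise levels $\sigma$ and $\sigma'$.

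First I would record the elementary fact about the Vietoris--Rips filtration on points $y_1,\dots,y_n$: for $\tau < \min_{i\neq j}\|y_i-y_j\|$ the complex has no edges, hence is a discrete set of $n$ points with $H_k=0$ for all $k\geq 1$; and for $\tau \geq \max_{i\neq j}\|y_i-y_j\|$ the complex is the full simplex on all $n$ vertices, which is contractible, so again $H_k=0$ for $k\geq 1$. Therefore any $\geq 1$-dimensional homology class is born at a filtration value $\geq \min_{i\neq j}\|y_i-y_j\|$ and dies at a filtration value $\leq \max_{i\neq j}\|y_i-y_j\|$. Applied to the noisy cloud at level $\sigma$ this yields $\beta_{d,\sigma}\geq \min_{i\neq j}\Delta_{i,j,d}$, where the $\Delta_{i,j,d}$ are the distances at noise level $\sigma$; applied at level $\sigma'$ it yields $D_{d,\sigma'}\leq \max_{i\neq j}\Delta_{i,j,d}$ with the distances now taken at level $\sigma'$. (If for some $d$ there are no $\geq 1$-dimensional features at a given noise level, the relevant quantities are vacuous and the statement holds trivially under the convention $\inf\emptyset=+\infty$, $\sup\emptyset=-\infty$.)

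Next, since $\sigma>\sigma'>0$ I can choose $\varepsilon>0$ with $\sqrt2\,\sigma-\varepsilon>\sqrt2\,\sigma'+\varepsilon$, for instance any $\varepsilon<(\sigma-\sigma')/\sqrt2$. Corollary~\ref{cor:min_max_conv} at noise level $\sigma$ (the bound \eqref{term:min_low}) gives $\mathbb{P}(E_1)\to 1$ for $E_1=\{\min_{i\neq j}\Delta_{i,j,d}/\sqrt d>\sqrt2\,\sigma-\varepsilon\}$, and the same corollary at noise level $\sigma'$ (the bound \eqref{term:max_high}) gives $\mathbb{P}(E_2)\to 1$ for $E_2=\{\max_{i\neq j}\Delta_{i,j,d}/\sqrt d<\sqrt2\,\sigma'+\varepsilon\}$. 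A union bound gives $\mathbb{P}(E_1\cap E_2)\to 1$, regardless of any dependence between the two noise realizations. On $E_1\cap E_2$,
\[
\beta_{d,\sigma}\ \geq\ \min_{i\neq j}\Delta_{i,j,d}\ >\ \sqrt d\,(\sqrt2\,\sigma-\varepsilon)\ >\ \sqrt d\,(\sqrt2\,\sigma'+\varepsilon)\ >\ \max_{i\neq j}\Delta_{i,j,d}\ \geq\ D_{d,\sigma'},
\]
so $\mathbb{P}(\beta_{d,\sigma}>D_{d,\sigma'})\to 1$.

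Finally, the two ``hence'' statements are immediate from orderings at a fixed noise level: the feature attaining the maximal birth time $B_{d,\sigma'}$ has birth $\leq$ death $\leq D_{d,\sigma'}$, so $B_{d,\sigma'}\leq D_{d,\sigma'}$; and the feature attaining the minimal death time $\delta_{d,\sigma}$ has death $\geq$ birth $\geq \beta_{d,\sigma}$, so $\delta_{d,\sigma}\geq \beta_{d,\sigma}$. Hence $\{\beta_{d,\sigma}>D_{d,\sigma'}\}\subseteq\{\beta_{d,\sigma}>B_{d,\sigma'}\}\cap\{\delta_{d,\sigma}>D_{d,\sigma'}\}$, and the corresponding probabilities converge to $1$ as well. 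The only mildly subtle point in the argument is the first step --- that all $\geq 1$-dimensional Vietoris--Rips persistence intervals lie inside $[\min_{i\neq j}\|y_i-y_j\|,\ \max_{i\neq j}\|y_i-y_j\|]$ --- but this is standard, requiring only the triviality of higher homology for an edgeless complex and for a full simplex.
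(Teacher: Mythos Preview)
Your proof is correct and follows essentially the same route as the paper: sandwich all $\geq 1$-dimensional birth and death times between $\min_{i\neq j}\Delta_{i,j,d}$ and $\max_{i\neq j}\Delta_{i,j,d}$, pick $\varepsilon<(\sigma-\sigma')/\sqrt2$, apply Corollary~\ref{cor:min_max_conv} at each noise level, and intersect the resulting high-probability events. Your justification of the sandwich via the edgeless complex and the full simplex, and your derivation of the ``hence'' statements from $B_{d,\sigma'}\leq D_{d,\sigma'}$ and $\delta_{d,\sigma}\geq\beta_{d,\sigma}$, are in fact a bit more explicit than the paper's version.
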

\begin{proof}
    Denote the distance between the noised $d$-dimensional points $i, j$ with noise level $\sigma$ by $\Delta_{i,j,d,\sigma}$.
    The main idea is that \mbox{$\beta_{d,\sigma}\geq \min_{i,j} \Delta_{i,j,d, \sigma}$} and $D_{d,\sigma'} \leq \max_{i,j} \Delta_{i,j,d, \sigma}$. We prove $\mathbb{P}(\beta_{d,\sigma} > D_{d, \sigma'}) \to 1$. The other statements follow from $\beta_{d, \sigma} < B_{d, \sigma}$ and $\delta_{d, \sigma} > \beta_{d, \sigma}$.

    Choose any  $0<\varepsilon < (\sigma -\sigma')/\sqrt{2}$. Then $\sqrt{2}\sigma -\varepsilon > \sqrt{2}\sigma' + \varepsilon$. Furthermore, we have  
    $$\mathbb{P}(b_{d,\sigma} > D_{d, \sigma'}) \geq \mathbb{P}\Big(b_{d,\sigma} > (\sqrt{2}\sigma -\varepsilon)\sqrt{d} \text{ and } (\sqrt{2}\sigma' + \varepsilon)\sqrt{d} > D_{d, \sigma'}\Big).$$ 
    By Corollary~\ref{cor:min_max_conv}, we have for $d\to \infty$ that
    $$\mathbb{P}\big(b_{d,\sigma} > (\sqrt{2}\sigma -\varepsilon)\sqrt{d}\big) \geq \mathbb{P}\big(\min_{i,j} \Delta_{i,j,d, \sigma} > (\sqrt{2}\sigma -\varepsilon)\sqrt{d}\big) \to 1.$$
    Similarly,
    $$\mathbb{P}\big(D_{d,\sigma} < (\sqrt{2}\sigma' +\varepsilon)\sqrt{d}\big) \geq \mathbb{P}\big(\max_{i,j} \Delta_{i,j,d, \sigma} < (\sqrt{2}\sigma' +\varepsilon)\sqrt{d}\big) \to 1.$$
    As a result,  
    $$1\geq\mathbb{P}\big(b_{d,\sigma} > (\sqrt{2}\sigma -\varepsilon)\sqrt{d} \text{ and } (\sqrt{2}\sigma' + \varepsilon)\sqrt{d} > D_{d, \sigma'}\big) \to 1,$$
    and hence $ \mathbb{P}(b_{d,\sigma} > D_{d, \sigma'}) \to 1$ for $d\to \infty$.
\end{proof}

These statements show that noise will eventually dominate the Euclidean distances and drive traditional persistent homology if the ambient dimension is large enough. In Figure~\ref{fig:circle-dims} we saw empirically that tens of ambient dimensions already severely impact traditional persistent homology, while spectral distance offer more noise robustness. This also holds true in hundreds to thousands of ambient dimensions. Compared to the Euclidean distance, spectral distance can detect the correct topology in more than an order of magnitude higher dimensionality, before they eventually fail too (Figure~\ref{fig:circle-high-dims}).

\begin{figure}
    \centering
    \includegraphics[width=\linewidth]{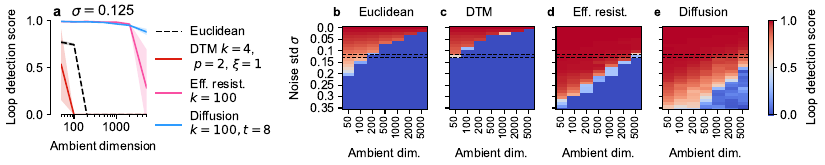}
    \caption{Extension of Figure~\ref{fig:circle-dims} to higher ambient dimensionalities. \textbf{a.} Loop detection scores of various methods on a noisy circle depending on the ambient dimensionality. Due to the higher dimensionalities, we use here the noise with standard deviation $\sigma=0.125$, one half compared to Figure~\ref{fig:circle-dims}a. \textbf{b\,--\,e.} Heat maps for $\sigma\in [0, 0.35]$ and $d\in [50, 5\,000]$. Spectral methods are much more noise robust, but eventually also fail to detect the correct topology.}
    \label{fig:circle-high-dims}
\end{figure}

\section{Persistent homology with PCA}\label{app:pca}

\citet{hiraoka2024curse} recommend combating the curse of dimensionality by performing a normalized PCA before applying persistent homology. We explore this approach here in our setting. 

Let $X = (x_1, \dots, x_n)^T\in \mathbb{R}^{n\times d}$ be the centered data matrix with each data point as a row. Let $X = U S V^T$ be the singular value decomposition of $X$ with $U$ and $V$ orthogonal matrices and $S\in \mathbb{R}^{n\times d}$ diagonal with decreasing non-negative values on the diagonal. PCA reduces the dimensionality to $k\leq d$ dimensions by considering the first $k$ columns of $US$, while normalized PCA instead considers the first $k$ columns of $U$.

We ran experiments on the 1D datasets for both versions with varying number of principal components (PCs) (Figure~\ref{fig:pcas}). Normalization provided no consistent benefit. The performance was worse for higher number of PCs as more noise remained after PCA. Using fewer PCs than the dataset's true dimensionality (Figure~\ref{fig:pcas}f, j) also led to poor performance. Knowing the true dimensionality of the data is therefore crucial for this approach, but true dimensionality is typically not available in real-world applications. Effective resistance outperformed PCA preprocessing on the linked circles and eyeglasses dataset and showed similar performance on the circle.

\begin{figure}[b]
    \centering
    \includegraphics[width=\linewidth]{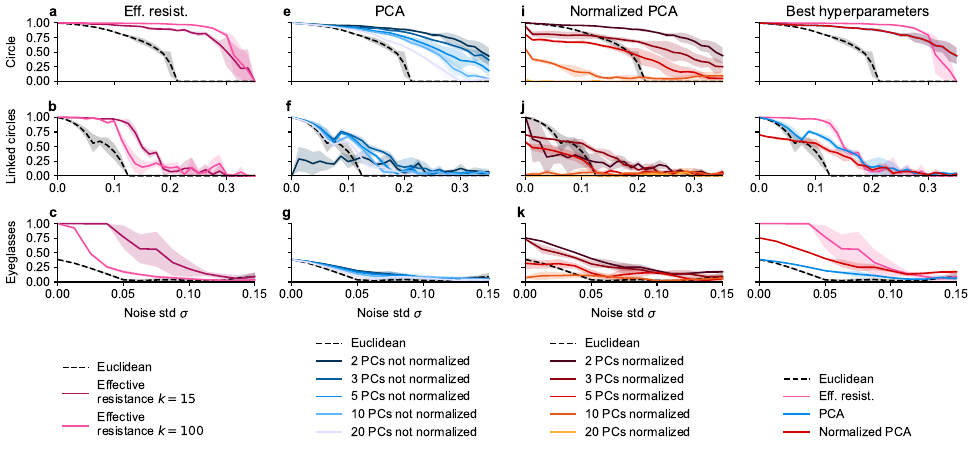}
    \caption{Loop detection score for effective resistance and PCA on three 1D toy datasets.}
    \label{fig:pcas}
\end{figure}

\section{Continuity of the hole detection score}\label{app:score_cont}

In this section we prove that our hole detection score is a continuous map, as long as the persistence diagrams have sufficiently many points. This means that small changes in the persistence diagram result in small changes of the hole detection score, a desirable property for a performance measure.

To state our result formally, we recall some definitions from~\citet{cohen2005stability}:

\begin{definition}[Multiset]
    A multiset is a set in which each element has a multiplicity in $\mathbb{N}_{>0}\cup {\infty}$.
\end{definition}

\begin{definition}[Persistence diagram]
    A persistence diagram is a multiset with elements from
    $\{(b, d) \in (\mathbb{R} \cup \{-\infty, \infty\})^2 \mid b<d\}$ counted with multiplicities, together with the diagonal $\{(b,b) \in (\mathbb{R} \cup \{-\infty, \infty\})^2\}$ counted with infinite multiplicity.
\end{definition}

Since we build persistence diagrams only for distances-based Vietoris--Rips complexes and do not consider $0$-dimensional homologies in this paper, no birth times will be $-\infty$. Moreover, since we do not consider infinite distances (Appendix~\ref{app:dist}), no death times will be infinite either. Therefore, in the following, we will treat persistence diagrams as multisets with elements in $\mathbb{R}^2$.

\begin{definition}[Bottleneck distance]
    Let $D$ and $D'$ be persistence diagrams. Then the bottleneck distance between them is given by 
    $$
    d_B(D, D')= \inf_{\eta: D\to D'} \sup_{x\in D} \|x-\eta(x)\|_\infty,
    $$
    where the infimum is over the bijections between the multisets $D$ and $D'$.
\end{definition}

A bijection $\eta$ realizing the bottleneck distance between persistence diagrams has three types of paired points: Pairs with both points off the diagonal, pairs with exactly one point off the diagonal, and pairs with both points equal and on the diagonal. Intuitively, the bottleneck distance matches the off-diagonal points between each other. However, the second type of pairs is needed in case the persistence diagrams do not have the same number of points off the diagonal. This is why the full diagonal with infinite multiplicity is considered to belong to each persistence diagram. The third type of pairs does not contribute to the bottleneck distance.

We can now proceed with our proofs. Let $\mathcal{D}$ be the space of all persistence diagrams with finitely many points off the diagonal, endowed with the topology induced by the bottleneck distance $d_B$. 

\begin{lemma}\label{lem:pm}
    For any $m \in \mathbb{N}$, the map $p_m:\mathcal{D} \to \mathbb{R}_{\geq 0}$ assigning to a diagram its $m$-th largest persistence value is continuous.
\end{lemma}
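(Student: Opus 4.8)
The plan is to show that $p_m$ is in fact $2$-Lipschitz with respect to the bottleneck distance, which immediately yields continuity. Fix persistence diagrams $D, D' \in \mathcal{D}$ and a number $\delta > 0$. By definition of $d_B$ there is a bijection of multisets $\eta \colon D \to D'$ with $\sup_{x \in D}\|x - \eta(x)\|_\infty < \epsilon' := d_B(D,D') + \delta$. The first observation I would record is that $\eta$ perturbs persistence values by at most $2\epsilon'$, where a diagonal point $(c,c)$ is assigned persistence $0$: if $(b,d)$ is matched to an off-diagonal $(b',d')$ then $|(d-b)-(d'-b')| \le |d-d'| + |b-b'| \le 2\epsilon'$; if $(b,d)$ is matched to a diagonal point $(c,c)$ then $0 \le d-b \le |d-c|+|c-b| \le 2\epsilon'$, so its persistence drops from $d-b\in[0,2\epsilon']$ to $0$. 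The same holds with the roles of $D$ and $D'$ exchanged, via $\eta^{-1}$.

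Next I would run a short case analysis according to whether $D$ and $D'$ have at least $m$ off-diagonal points, counted with multiplicity; recall $p_m$ equals the $m$-th largest positive persistence if there are at least $m$ of them, and $0$ otherwise. Suppose both diagrams have at least $m$ off-diagonal points. Let $x_1, \dots, x_m$ be $m$ off-diagonal points of $D$ realizing the $m$ largest persistences, so each has persistence $\ge p_m(D)$, and set $y_i = \eta(x_i)$, which are $m$ distinct slots of $D'$. If all $y_i$ are off-diagonal, they exhibit $m$ off-diagonal points of $D'$ of persistence $\ge p_m(D) - 2\epsilon'$, hence $p_m(D') \ge p_m(D) - 2\epsilon'$. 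If some $y_j$ is diagonal, then $p_m(D) \le \operatorname{pers}(x_j) \le 2\epsilon'$, and since $p_m(D') \ge 0$ we again get $p_m(D') \ge p_m(D) - 2\epsilon'$. By symmetry $p_m(D) \ge p_m(D') - 2\epsilon'$, so $|p_m(D) - p_m(D')| \le 2\epsilon'$.

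If $D$ has fewer than $m$ off-diagonal points (so $p_m(D)=0$) while $D'$ has at least $m$, I would pick $m$ off-diagonal points of $D'$ realizing its $m$ largest persistences; by pigeonhole at least one of their $\eta^{-1}$-preimages is a diagonal point of $D$, which forces the smallest of those $m$ persistences, namely $p_m(D')$, to be at most $2\epsilon'$. If both diagrams have fewer than $m$ off-diagonal points the statement is trivial. In every case $|p_m(D)-p_m(D')|\le 2\epsilon' = 2\,d_B(D,D') + 2\delta$, and letting $\delta \to 0$ gives $|p_m(D)-p_m(D')|\le 2\,d_B(D,D')$, so $p_m$ is continuous.

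The only genuinely delicate point is the bookkeeping around points matched to the diagonal: a priori a highly persistent off-diagonal point of $D$ could be matched to the diagonal of $D'$, which would seem to break the transport-of-top-$m$-points argument. This is resolved by the elementary remark that such a matching can only occur when the point's persistence is already $\le 2\epsilon'$; the same remark, via pigeonhole, handles the case where the two diagrams have different numbers of off-diagonal points. Everything else is the triangle inequality together with the definition of the bottleneck distance.
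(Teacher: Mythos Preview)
Your proof is correct, and in fact you prove the stronger statement that $p_m$ is $2$-Lipschitz with respect to $d_B$, which the paper does not state. The overall strategy---push the top-$m$ points of $D$ through the near-optimal matching $\eta$ and see what happens---is also what the paper does, but the executions differ. The paper proves continuity by a direct $\varepsilon$--$\delta$ argument: it first treats the case where all off-diagonal persistences are distinct, then redoes the argument with careful bookkeeping (introducing level sets $F_k$ of equal-persistence features) to handle ties. Your argument sidesteps ties entirely: by only asserting ``there exist $m$ points of $D'$ with persistence $\ge p_m(D)-2\epsilon'$'' (rather than tracking which point lands where in the ranking), you never need to know whether the images $y_i$ realise the top $m$ slots of $D'$, only that they witness a lower bound on $p_m(D')$. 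The handling of diagonal matches and of diagrams with fewer than $m$ off-diagonal points via pigeonhole is also cleaner than the paper's treatment. The trade-off is minimal: the paper's argument is slightly more explicit about the multiset structure, but your approach yields a quantitative bound with less work.
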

\begin{proof}
    For a persistence diagram $D \in \mathcal D$ and a feature $f=(\tau_b, \tau_d) \in D$ we denote by $p(f)=\tau_d - \tau_b$ the persistence of $f$. Note that for two features $f,f'$ in a persistence diagram, we have $|p(f)- p(f')| \leq \sqrt{2}\|f -f'\|_\infty$.
    
    We need to show that for any $\varepsilon > 0$ and any $D \in \mathcal{D}$, we can choose some $\delta>0$ such that $|p_m(D') -p_m(D)| < \varepsilon$ for all $D'$ whose bottleneck distance from $D$ is at most $\delta$, i.e., $D'\in B_{\delta}(D)$. Recall that in this setting, by definition of the bottleneck distance, there is a bijection (of multisets) $\eta: D \to D'$ such that for all $f\in D$ we have $\|f-\eta(f)\|_{\infty} < \delta$. 

    Continuity of $p_m$ is easier to demonstrate at diagrams $D\in \mathcal{D}$ for which all off-diagonal points have different persistences. We discuss this case first. Assume $D$ has $l$ off-diagonal points. Denote by $f_k$ the feature with $k$-th largest persistence. 

    Let $\tilde{\delta} = \min\{p(f_k)-p(f_{k+1}) \mid k\leq l\}$ and $\delta = \min\big(\tilde{\delta} / (2\sqrt{2}), \varepsilon/(2\sqrt{2})\big)$. Consider $D'\in B_{\delta}(D)$.  By the choice of $\delta$, we have that $|p(\eta(f_k)) - p(f_k)| < \tilde{\delta}/2$, so that the values $p(\eta(f_1)),\dots, p(\eta(f_l)), 0$ are all distinct and strictly decreasing. As a result, for all $m\leq l$ we have $p_m(D') = p(\eta(f_m))$ and thus
    $$|p_m(D') - p_m(D)| = |p(\eta(f_m)) - p(f_m)| < \sqrt{2} \delta < \varepsilon.$$

    For $m>l$, we have $p_m(D)=0$ and any $f'\in D'$ with $p_m(D')=p(f')$ is paired with a point on the diagonal of $D$ under $\eta$. Hence
    $$|p_m(D') - p_m(D)| = |p(f') - 0|  < \sqrt{2} \delta < \varepsilon.$$

    The case where $D$ might contain off-diagonal points with identical persistence follows the same overall argument, but requires more careful bookkeeping. Assume $D$ has $l$ off-diagonal points. Let $\tilde{\delta} = \min \big(\{p_k(D) - p_{k+1}(D) \mid k\leq l\}\backslash \{0\}\big)$. Let $$\delta = \min\big(\tilde{\delta}/(2\sqrt{2}), \varepsilon /(2\sqrt{2})\big).$$
    Set $$F_k = \{f\in D \mid p(f)= p_k(D)\}$$ for $k \leq l$ and $F_{l+1}$ equal to the diagonal. These multisets all contain at least one element. We have $p_k(D) = p_{k+1}(D)$ if and only if $F_k = F_{k+1}$.

    As a result, for all $k\leq l$, we have $|\bigcup_{i=1}^{k} F_i |= k$ if and only if $F_k \neq F_{k+1}$ which in turn holds if and only if $p_k(D) \neq p_{k+1}(D)$.

    For any $f, \tilde{f} \in D$ with $p(f) > p(\tilde{f})$ we have 
    \begin{align}
        p(\eta(f))- p(\eta(\tilde{f})) &= p(\eta(f))-p(f) + p(f) - p(\tilde{f}) + p(\tilde{f}) - p(\eta(\tilde{f})) \nonumber \\
        &\geq p(f) - p(\tilde{f}) - | p(\eta(f))-p(f)| - |p(\tilde{f}) - p(\eta(\tilde{f}))| \nonumber \\
        &> p(f) - p(\tilde{f}) - \tilde{\delta}\nonumber \\
        &\geq 0,
    \end{align}
    so $p(\eta(f)) > p(\eta(\tilde{f}))$. We will now show that if $m\leq l$  and $f'\in D'$ such that $p_m(D') = p(f')$, then $\eta^{-1}(f') \in F_m$.

    Let $a$ be the highest number below $m$ such that $F_a\neq F_m$. Then 
    $$
    \left|\bigcup_{i=1}^a \eta(F_i)\right| = \left|\bigcup_{i=1}^a F_i\right| = a.
    $$
    and $\bigcup_{i=1}^a \eta(F_i)$ contains the $a<m$ features of $D'$ with smallest persistence. In particular, \mbox{$\eta^{-1}(f') \notin \bigcup_{i=1}^a F_i$.}

    Let $b$ be the largest number such that $F_h = F_m$. By a similar argument, $\bigcup_{i=1}^b \eta(F_i)$ contains the $b\geq m$ features in $D'$ with largest persistence, including $f'$. Thus,  $\eta^{-1}(f')\in \bigcup_{i=a+1}^{b} F_i$. But by choice of $a$ and $b$, we have $F_{a+1} = \dots = F_m=\dots =F_b$, so $\eta^{-1}(f')\in F_m$.

    Thus, 
    $$
    |p_m(D') - p_m(D)| = |p(f') - p(\eta^{-1}(f'))| \leq \sqrt{2} \|f'-\eta^{-1}(f')\|_{\infty} < \varepsilon.
    $$

    For $m> l$, we have $p_m(D)=0$ and $F_m$ is the set of points on the diagonal of $D$. Moreover, 
    $$\left|\bigcup_{i=1}^l F_i\right| = l = \left|\bigcup_{i=1}^l \eta(F_i)\right|,$$
    so that any $f'\in D'$ with $p_m(D') = p(f')$ must be in the image of the diagonal under $\eta$ and we can proceed as in the case of unique positive persistences.
\end{proof}

Our hole detection score is a continuous score if there are at least $m$ points in the persistence diagram, and not continuous otherwise:

\begin{proposition}
    Let $\mathcal{D}$ be the space of all persistence diagrams with finitely many points off the diagonal, endowed with the topology induced by the bottleneck distance $d_B$. Let $m \in \mathbb{N}$ and define $\mathcal{D}'$ as the subset of $\mathcal{D}$ with at least $m$ points off the diagonal. Then the map $s_m: \mathcal{D} \to \mathbb{R}_{\geq 0}$ is continuous at all diagrams in $\mathcal{D'}$ but is discontinuous at all diagrams in $\mathcal{D}\backslash \mathcal{D'}$.
\end{proposition}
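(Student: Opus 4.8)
The plan is to split the statement into two parts: continuity of $s_m$ on $\mathcal{D}'$, and discontinuity at every diagram in $\mathcal{D}\setminus\mathcal{D}'$. Recall that $s_m(D) = (p_m(D) - p_{m+1}(D))/p_m(D)$ whenever $p_m(D) > 0$, and I will take $s_m(D) = 0$ when $p_m(D) = 0$ (consistent with the thresholding convention). The key input is Lemma~\ref{lem:pm}, which gives that $p_m$ and $p_{m+1}$ are continuous on all of $\mathcal{D}$.

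For the continuity part, suppose $D \in \mathcal{D}'$, i.e., $D$ has at least $m$ points off the diagonal, so $p_m(D) > 0$. On the open set $\{D : p_m(D) > 0\}$ (open because $p_m$ is continuous), the function $s_m = (p_m - p_{m+1})/p_m = 1 - p_{m+1}/p_m$ is a composition of continuous maps with a denominator bounded away from $0$ on a neighborhood of $D$, hence continuous at $D$. This is the routine direction and I would keep it to a couple of sentences.

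For the discontinuity part, take $D \in \mathcal{D}\setminus\mathcal{D}'$, so $D$ has $\ell < m$ off-diagonal points and $p_m(D) = 0$, giving $s_m(D) = 0$. The idea is to construct a sequence $D_k \to D$ in the bottleneck metric with $s_m(D_k)$ bounded away from $0$. Concretely, add to $D$ a small number of extra off-diagonal points placed very close to the diagonal — enough of them so that the total count of off-diagonal points is at least $m+1$ — arranged so that the $m$-th and $(m+1)$-th largest persistences among these tiny new points differ by a constant \emph{ratio} even though both go to $0$. For example, if $D$ has $\ell$ off-diagonal points, I can add points with persistences $2\epsilon_k, \epsilon_k, \epsilon_k/2, \dots$ (as many as needed so there are at least $m+1$ off-diagonal points, at least two of them among the bottom with a fixed ratio gap straddling positions $m$ and $m+1$) with $\epsilon_k \to 0$. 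Then $d_B(D_k, D) \to 0$ since the new points approach the diagonal, but $s_m(D_k) = 1 - p_{m+1}(D_k)/p_m(D_k)$ stays equal to some fixed positive constant (e.g. $1/2$) determined by the chosen ratio, while $s_m(D) = 0$. One must check two small things: that the bottleneck distance is indeed controlled by the new points' distance to the diagonal (pairing old points with themselves, new points with their diagonal projections), and that for large $k$ the ordering of persistences puts the designed points exactly at ranks $m$ and $m+1$ — this holds once $\epsilon_k$ is smaller than the smallest positive persistence already present in $D$.

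The main obstacle is the bookkeeping in the discontinuity construction when $\ell$ can be any value below $m$: I need to add at least $m+1-\ell$ points and position the ratio gap correctly relative to the rank $m$ regardless of $\ell$, including edge cases like $\ell = 0$ or $m = 1$. This is purely combinatorial and mirrors the rank-tracking argument already used in the proof of Lemma~\ref{lem:pm}, so I expect it to be short but requiring care. Everything else — continuity via quotient of continuous functions with nonvanishing denominator, and $d_B \to 0$ for points collapsing to the diagonal — is immediate from Lemma~\ref{lem:pm} and the definition of the bottleneck distance.
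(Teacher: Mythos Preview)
Your proposal is correct and follows the same overall strategy as the paper: continuity on $\mathcal{D}'$ via Lemma~\ref{lem:pm} and the quotient representation $s_m = 1 - p_{m+1}/p_m$, and discontinuity elsewhere via an explicit perturbation. The continuity argument is essentially identical.

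For the discontinuity part, the paper's construction is simpler than yours: given $D$ with $\ell < m$ off-diagonal points, the paper adds exactly $m-\ell$ copies of a single point $(0,\varepsilon/2)$, so that the perturbed diagram $D'$ has \emph{exactly} $m$ off-diagonal points. Then $p_m(D') > 0$ but $p_{m+1}(D') = 0$, giving $s_m(D') = 1$ outright, with no need to engineer a ratio gap between ranks $m$ and $m+1$. Your construction (adding at least $m+1-\ell$ points with a designed persistence ratio straddling ranks $m$ and $m+1$) also works, but requires the extra bookkeeping you flag as the ``main obstacle''. The paper's version sidesteps that bookkeeping entirely and avoids the edge-case analysis on $\ell$; you may want to adopt it.
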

\begin{proof}
    The discontinuity of  $s_m$ at diagrams with less than $m$ points happens when adding the $m$-th point to a diagram. Let $D$ be a persistence diagram with $l<m$ points off the diagonal. Let $\varepsilon >0$ and define $D'$ as the persistence diagram obtained by adding $m-l$ copies of $(0,\varepsilon/2)$. Then $s_m(D) = 0$ and $s_m(D') = 1$, because $D'$ has exactly $m$ off-diagonal points, but $d_B(D, D')< \varepsilon$. Thus, $s_m$ is not continuous at $D$.

    For the continuity part, we write $s_m$ as the composition of two continuous maps. The map \mbox{$q: \mathbb{R}_{>0} \times \mathbb{R}_{\geq 0} \to \mathbb{R}_{\geq 0},$} \mbox{$(x, y) \mapsto (x-y)/x$} is continuous. Furthermore, since each diagram in $\mathcal{D}'$ has at least $m$ points off the diagonal, $p_m(D) \in \mathbb{R}_{>0}$ for all $D\in \mathcal{D}'$. 
    By lemma~\ref{lem:pm} the map \mbox{$p_m\times p_{m+1}:\mathcal{D}'\to \mathbb{R}_{\geq 0} \times \mathbb{R}_{\geq 0}$} is continuous.
    Finally, $s_m: \mathcal{D}' \to \mathbb{R}_{\geq 0}$ factors as $q\circ (p_m\times p_{m+1})$, showing its continuity.
\end{proof}

In other words, on persistence diagrams with sufficiently many points our hole detection score is continuous. This makes our score robust against small perturbations in the persistence diagram, a property expected for a reliable score.

In the typical case, where the persistence diagram contains a noise cloud of many points close to the diagonal and, possibly, some outliers far away from the diagonal, the requirement on the number of points is satisfied. Only in the case where we expect a large gap after the $m$-th feature, but the diagram does not even have $m$ features, can there be a sudden jump in our hole detection score.

\section{Alternative performance scores}\label{app:alt_scores}

While our hole detection score corresponds to an intuitive visual assessment of persistence diagrams and is continuous (Appendix~\ref{app:score_cont}), in this section we discuss two alternative performance scores. We argue that both of them are less suited for our benchmark.

\subsection{Widest gap score}
A natural way to interpret a persistence diagram is to deem all features above the widest gap in persistence values as true features and the rest as noise, i.e., infer that a diagram has $a$ true features if $a= \text{argmax}_{\alpha} (p_{\alpha} - p_{\alpha+1})$. 

We considered the binary score which is equal to 1 if the number of features above the widest gap equals the number of ground-truth features and 0 otherwise. We call it the \textit{widest gap score}.

Note that our hole detection score can be lower than 1 even if the persistence diagram clearly shows the correct number of outlier features (e.g. $s_m\approx 0.5$ for $\sigma = 0.2$ in Figure~\ref{fig:curse}b). In contrast, the widest gap score is equal to 1 in this case.

On the other hand, the binary nature of the widest gap score leads to instability in case of nearly equisized gaps. For instance in Figure \ref{fig:malaria}d, a small perturbation could make the gap after the most persistent feature the widest, dropping the widest gap score from 1 to 0. In contrast, our $s_2\approx 0.9$ correctly reflects the two clear outlier features.

We evaluated several distances on the three 1D datasets using the widest gap score in Figure~\ref{fig:widest_gap}. The binary nature of the widest gap score led to high-variance results even though we increased the number of random seeds from 3 to 10. Spectral methods still performed better than Euclidean, Fermat, and DTM distances. However, the widest gap score led to some false positives. For example, for the Fermat distance and $\sigma=0.3$ for the circle dataset, in 9 of 10 trials the widest gap appeared after the first feature. However, all 9 persistence diagrams looked like noise clouds and in 8 cases the most persistent feature's representative cycle was clearly a noise feature (Figure~\ref{fig:widest_gap}d\,--\,e). Conversely, for the effective resistance there was always a clear outlier in the diagram and the representative did follow the ground-truth loop (Figure~\ref{fig:widest_gap}f\,--\,g). Although the Fermat distance failed in this setting and the effective resistance performed well, their widest gap scores were both high.

For these reasons, we prefer our hole detection score to the widest gap score.

\begin{figure}
    \centering
    \includegraphics[width=\linewidth]{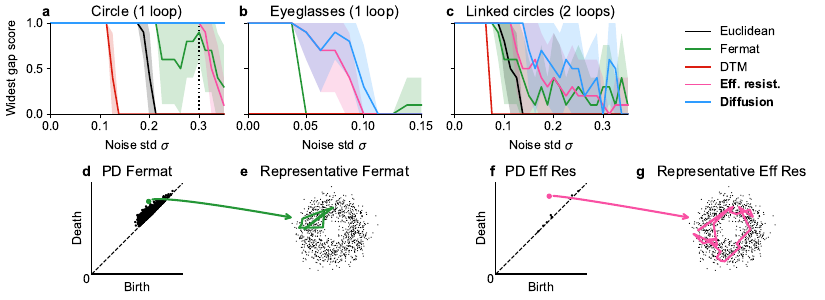}
    \caption{Top row: Widest gap score on the three 1D toy datasets. We used 10 random seeds here. Spectral methods outperformed Fermat, DTM, and Euclidean in this score. Bottom row: Example persistence diagrams and representatives for the toy circle with $\sigma=0.3$ with Fermat distance and effective resistance illustrate that despite the high score for both, Fermat distance actually failed.}
    \label{fig:widest_gap}
\end{figure}

\subsection{Cycle matching}

The need to distinguish between true signal and noise features in a persistence diagram motivated~\citet{reani2022cycle} to develop \textit{cycle matching}, a technique that quantifies the correspondence of features in two persistence diagrams by a prevalence score in $[0, 1]$. True signal corresponds to features that reappear  with high prevalence in variations of the data obtained, e.g., via resampling.

This approach to detecting the true topology of high-dimensional data is orthogonal to our exploration of different input distances. Indeed, cycle matching can be combined with any distance.

An important downside to cycle matching is the need for resampling. In an exploratory context, the true data distribution is not known. Instead, \citet{reani2022cycle} suggest to either bootstrap the existing finite dataset or sample from a kernel density estimate. Unfortunately, for high-dimensional data both approaches are problematic. Bootstrapping high-dimensional data is error-prone~\citep{clarte2024analysis} and persistent homology of bootstraps is biased because repeated points effectively decrease the sample size~\citep{roycraft2023bootstrapping}. Kernel density estimation requires a prohibitively large sample size in high dimensions~\citep[chapter 6.5]{wasserman2006all}.

Nevertheless, as a proof of concept, we explored cycle matching with the relatively fast \texttt{ripser}-based implementation of~\citet{garcia2024fast}. The authors report a runtime of about $90$ minutes for a dataset of sample size $n=1000$ (their Table 3) and in our experiments we experienced even longer run times. As the result, this implementation of cycle matching was about $450$ times slower than our approach (Table~\ref{tab:run_times}). For this reason, we only used three resamples.

\citet{garcia2024fast}'s implementation accepts datasets as point clouds and implicitly assumes the Euclidean distance. Therefore, in this experiment we only used Euclidean distances and diffusion distances, which can be realized as Euclidean distances (Eq.~\ref{eq:spectral_diff}). We computed prevalences of matched cycles on the noisy circle with $n=1000$ points in $\mathbb{R}^{50}$ for Gaussian noise of standard deviations $\sigma \in [0.1, 0.2, 0.25,0.3, 0.35]$.

\begin{figure}[tb]
    \centering
    \includegraphics[width=\linewidth]{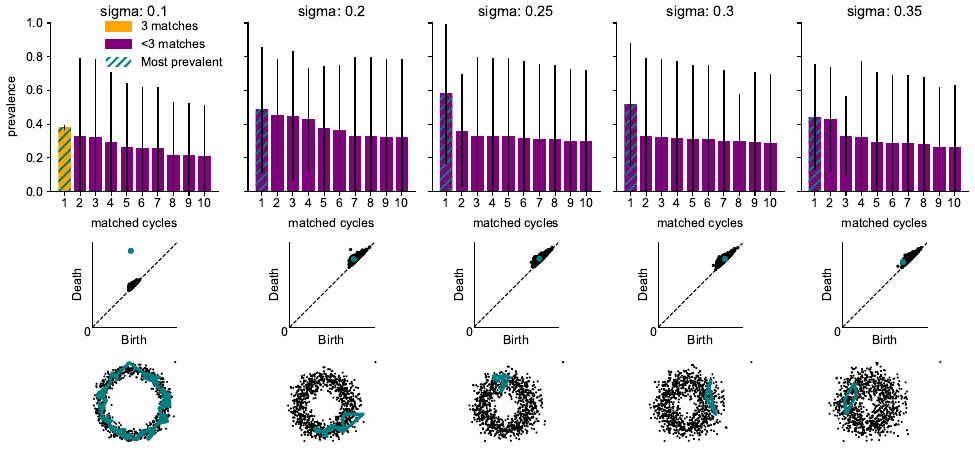}
    \caption{Results for cycle matching with the Euclidean distance on a noisy circle in $\mathbb{R}^{50}$ with noise level $\sigma$. Top row: Prevalences of the 10 most prevalent cycles. Means and standard deviation over three seeds. Color indicates whether the cycle was matched for all three random seeds or not. Second row: Persistence diagrams with the most prevalent features highlighted. Third row: Representative of the most prevalent feature overlaid on a 2D PCA of the data.}
    \label{fig:cycle_match_eucl}
\end{figure}

\begin{figure}[tb]
    \centering
    \includegraphics[width=\linewidth]{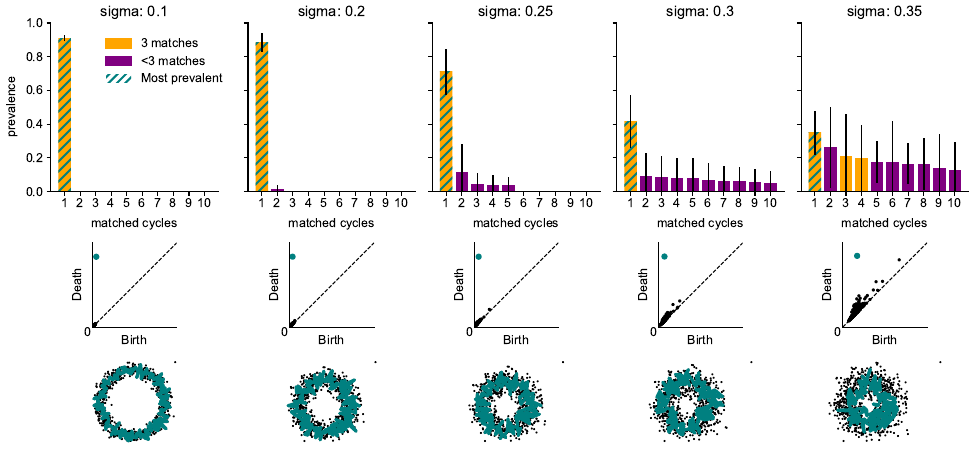}
    \caption{Results for cycle matching with the diffusion distance ($k=15, t=8$) on a noisy circle in $\mathbb{R}^{50}$ with noise level $\sigma$. Top row: Prevalences of the 10 most prevalent cycles. Means and standard deviation over three seeds. Color indicates whether the cycle was matched for all three random seeds or not. Second row: Persistence diagrams with the most prevalent features highlighted. Third row: Representative of the most prevalent feature overlaid on a 2D PCA of the data.}
    \label{fig:cycle_match_diff}
\end{figure}

Cycle matching with the Euclidean distance did not reliably identify the ground-truth feature for $sigma > 0.1$ (Figure~\ref{fig:cycle_match_eucl}). In all experiments with the Euclidean distance and $\sigma>0.1$, the feature with the highest prevalence did not correspond to the ground-truth loop. 

For $\sigma= 0.2, 0.25$, we also checked the prevalences of the ground truth cycle. For $\sigma=0.2$ the ground truth cycle was not matched for any of the three resamples, leading to a prevalence of $0$. For $\sigma=0.25$ the ground-truth cycle had the average prevalence of $0.1 \pm 0.1$, much lower than the cycle with the highest prevalence. 

One might hope that the cycles that do get matched for all resamples are true topological features, even if their prevalence values are low. Indeed, for $\sigma =0.1$ only the correct cycle was matched for all three resamples. However this heuristic did not work either because for $\sigma \in \{0.2, 0.25, 0.3\}$ no cycle was matched for all three resamples, while for $\sigma=0.35$ there was a cycle matched for all three resamples (not shown in Figure~\ref{fig:cycle_match_eucl} because it was not among the 10 most prevalent ones), but it did not encode the ground-truth feature (and its prevalence $0.1 \pm 0.1$ was very low). 

The fact that most cycles were not matched for all resamples led to high standard deviations of the prevalences. The reason for the high mean prevalences for $\sigma>0.1$ was that some noise cycles got matched with very high prevalence (sometimes $>0.9$) for one or two of the resamples. For the same reason, the prevalence of the ground truth cycle for $\sigma=0.1$ was not much higher than the second highest prevalence. Increasing the number of  resamples may help, but would make the procedure even more computationally expensive.

Overall, we conclude that cycle matching is not only very slow, but also does not alleviate the curse of dimensionality for persistent homology. In contrast, our spectral methods produced a near-perfect loop detection score until $\sigma=0.3$ (Figure~\ref{fig:circle-benchmark}).

That said, cycle matching can serve as an alternative performance score. We applied cycle matching to the diffusion distance (Figure~\ref{fig:cycle_match_diff}) and found that 
\begin{enumerate}[nosep, topsep=-0.5\parskip, leftmargin=0.5\leftmargin]
    \item the maximal prevalences were larger than for the Euclidean distance for $\sigma < 0.3$,
    \item the cycle with maximal prevalence always represented the ground truth loop,
    \item the prevalence of the correct cycle was a clear outlier among all prevalences for $\sigma \leq 0.3$,
    \item the cycle representing the ground truth feature was always matched for all three resamples and was the only one matched for all three resamples for $\sigma\leq 0.3$.
\end{enumerate}

This confirms the superior performance of spectral methods for detecting topology in high-dimensional settings.

\section{Effective resistance as a spectral method}
\label{app:eff_res}

In this section, we derive an explicit spectral embedding realizing the square root of \citet{luxburg2010getting}'s \textit{corrected} effective resistance. We also show that it is not necessarily a proper metric.

Let us recall the notation from Sections~\ref{sec:background-spectral} and~\ref{sec:spectral}, extended to weighted graphs. Let $G$ be a weighted connected graph with $n$ nodes. We denote by $A = (a_{ij})_{i, j = 1, \dots, n}$ the weighted adjacency matrix whose entries $a_{ij}=a_{ji}$ equal the edge weight of edge $ij$ if edge $ij$ is part of the graph and zero otherwise. We further denote by $D = \text{diag}(d_i)$ the degree matrix, where \mbox{$d_i = \sum_j a_{ij}$} are the node degrees. We define $\text{vol}(G) = \sum_i d_i$. Let further $\asym = D^{-\frac{1}{2}} A D^{-\frac{1}{2}}$  and $\lsym = I - \asym$ be the symmetrically normalized adjacency matrix and the symmetrically normalized graph Laplacian. Denote the eigenvectors of $\lsym$ by $u_1, \dots, u_n$ and their eigenvalues by $\mu_1, \dots, \mu_n$ in increasing order. The eigenvectors and eigenvalues of $\asym$ are $u_1,\dots, u_n$ and $1-\mu_1, \dots, 1-\mu_n$.

\begin{definition}
    The naive effective resistance distance between nodes $i$ and $j$ is defined as 
    \begin{equation}
        \tilde{d}^{\text{eff}}_{ij} = \frac{1}{\text{vol}(G)} (H_{ij} + H_{ji}), 
    \end{equation}
    where $H_{ij}$ is the hitting time from $i$ to $j$, i.e., the expected number of steps that a random walker starting at node $i$ takes to reach node $j$ for the first time.
    
    The corrected effective resistance distance between nodes $i$ and $j$ is defined as 
    \begin{equation}
        d^{\text{eff}}_{ij} = \tilde{d}^{\text{eff}}_{ij} - \frac{1}{d_i} - \frac{1}{d_j} + 2 \frac{a_{ij}}{d_id_j} - \frac{a_{ii}}{d_i^2} - \frac{a_{jj}}{d_j^2}.
    \end{equation}
\end{definition}

The following proposition refines Proposition 4 in \cite{luxburg2010getting}.
\begin{proposition}\label{prop:corr_eff_res}
The corrected effective resistance distance $d^\textnormal{eff}_{ij}$ between nodes $i$ and $j$ can be computed by \mbox{$d^{\textnormal{eff}}_{ij} = \| e^{\textnormal{eff}}_i -e^{\textnormal{eff}}_j\|^2$,} where 
$$e^{\textnormal{eff}}_i = \frac{1}{\sqrt{d_i}} \left(\frac{1-\mu_2}{\sqrt{\mu_2} }u_{2,i},\dots, \frac{1-\mu_n}{\sqrt{\mu_n} }u_{n,i}\right).$$
\end{proposition}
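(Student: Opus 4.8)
The plan is to reduce the claim to the already-known spectral formula for the \emph{naive} effective resistance (Eq.~\eqref{eq:eff_res_naive}) and then to track the effect of the extra scalar factors $(1-\mu_k)$ that distinguish $e^{\textnormal{eff}}_i$ from $\tilde e^{\textnormal{eff}}_i$. Since $G$ is connected, $\mu_k>0$ for all $k\ge 2$, so I may write
\[ \|e^{\textnormal{eff}}_i - e^{\textnormal{eff}}_j\|^2 = \sum_{k=2}^n \frac{(1-\mu_k)^2}{\mu_k}\Big(\frac{u_{k,i}}{\sqrt{d_i}} - \frac{u_{k,j}}{\sqrt{d_j}}\Big)^2 . \]
Using the elementary identity $(1-\mu_k)^2/\mu_k = 1/\mu_k - 2 + \mu_k$, this splits as $S_1 + S_2 + S_3$, where $S_1$ carries the factor $1/\mu_k$, $S_2$ the factor $-2$, and $S_3$ the factor $\mu_k$. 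By Eq.~\eqref{eq:eff_res_naive}, $S_1 = \|\tilde e^{\textnormal{eff}}_i - \tilde e^{\textnormal{eff}}_j\|^2 = \tilde d^{\textnormal{eff}}_{ij}$, so it remains to show that $S_2 + S_3$ equals the correction terms $-1/d_i - 1/d_j + 2a_{ij}/(d_id_j) - a_{ii}/d_i^2 - a_{jj}/d_j^2$.

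For $S_2$ and $S_3$ I would expand each square into diagonal and cross terms and evaluate the resulting sums over $k\ge 2$ with two ingredients. First, completeness of the orthonormal eigenbasis gives $\sum_{k=1}^n u_{k,i}u_{k,j} = \delta_{ij}$, and combining this with the explicit trivial eigenvector $u_{1,i} = \sqrt{d_i/\textnormal{vol}(G)}$ yields $\sum_{k\ge 2}u_{k,i}u_{k,j} = \delta_{ij} - \sqrt{d_id_j}/\textnormal{vol}(G)$; this produces, for $i\ne j$, $S_2 = -2(1/d_i + 1/d_j)$ (the $1/\textnormal{vol}(G)$ contributions cancel). Second, because $\mu_1=0$ we have $\sum_{k\ge 2}\mu_k u_{k,i}u_{k,j} = \sum_{k=1}^n \mu_k u_{k,i}u_{k,j} = (\lsym)_{ij} = \delta_{ij} - a_{ij}/\sqrt{d_id_j}$, which gives $S_3 = 1/d_i + 1/d_j + 2a_{ij}/(d_id_j) - a_{ii}/d_i^2 - a_{jj}/d_j^2$ (the diagonal value $(\lsym)_{ii}=1-a_{ii}/d_i$ correctly accounts for possible self-loops in the weighted case). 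Adding $S_1+S_2+S_3$ then reproduces $d^{\textnormal{eff}}_{ij}$ exactly.

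The only genuinely delicate point is bookkeeping the role of the trivial eigenpair $(\mu_1,u_1)$: it is dropped from both embeddings, yet it is precisely the $u_1$-component that makes the $1/\textnormal{vol}(G)$ terms in $S_2$ cancel and that lets me replace $\sum_{k\ge2}$ by $\sum_{k=1}^n$ in the $\mu_k$-weighted sum; keeping these substitutions and the $i=j$ versus $i\ne j$ cases straight is where care is needed. I would also note that the identity is meant for $i\ne j$ --- substituting $i=j$ into the definition gives $d^{\textnormal{eff}}_{ii} = -2/d_i \ne 0$, which already flags (for the subsequent discussion in this section) that $d^{\textnormal{eff}}$ is not a genuine metric.

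An alternative, coordinate-free route writes $B := \sum_{k\ge2}\frac{(1-\mu_k)^2}{\mu_k}u_ku_k^\top = \asym(\lsym)^+\asym$ and simplifies it via $\lsym(\lsym)^+ = I - u_1u_1^\top$ and $\asym u_1 = u_1$ to $B = (\lsym)^+ - \asym - I + 2u_1u_1^\top$; reading off the combination $B_{ii}/d_i + B_{jj}/d_j - 2B_{ij}/\sqrt{d_id_j}$ (and using the pseudoinverse form $\tilde d^{\textnormal{eff}}_{ij} = (\lsym)^+_{ii}/d_i + (\lsym)^+_{jj}/d_j - 2(\lsym)^+_{ij}/\sqrt{d_id_j}$ that follows from Eq.~\eqref{eq:eff_res_naive}) gives the same result. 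I would keep the coordinate computation as the main line, since it relies only on Eq.~\eqref{eq:eff_res_naive} as stated.
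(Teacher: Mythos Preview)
Your argument is correct and takes a genuinely different route from the paper's proof. The paper starts from the hitting-time identity of \citet{von2010hitting}, symmetrizes $H_{ij}+H_{ji}$, and then recognizes the term $\sum_{r\ge 2}\mu_r^{-1}\langle \asym(b_j-b_i),u_ru_r^\top\asym(b_j-b_i)\rangle$ as $\|e^{\textnormal{eff}}_i-e^{\textnormal{eff}}_j\|^2$ via $\asym u_r=(1-\mu_r)u_r$; the remaining bilinear form $\langle b_j-b_i,\asym(b_j-b_i)\rangle$ is read off directly as the self-loop/adjacency correction. You instead take the naive spectral formula~\eqref{eq:eff_res_naive} as given and use the purely algebraic splitting $(1-\mu_k)^2/\mu_k = 1/\mu_k - 2 + \mu_k$, evaluating the three resulting sums with nothing more than orthonormality of the $u_k$ and the spectral decomposition of $\lsym$. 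Your approach is more self-contained (no appeal to an external hitting-time formula) and makes the structure of the correction transparent: the $-2$ and $+\mu_k$ pieces are exactly the first two diffusion terms $d^{\textnormal{diff}}(0)^2/\textnormal{vol}(G)$ and $d^{\textnormal{diff}}(1/2)^2/\textnormal{vol}(G)$, so your decomposition is effectively Corollary~\ref{cor:diff_corr_eff_res} in disguise. The paper's route, on the other hand, explains \emph{why} the factor $(1-\mu_k)$ appears---it is the action of $\asym$ on eigenvectors inside the hitting-time expression---which your purely algebraic manipulation does not reveal. Your alternative coordinate-free computation via $B=\asym(\lsym)^+\asym=(\lsym)^+-\asym-I+2u_1u_1^\top$ is also correct and is perhaps the cleanest variant. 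Your remark that the identity is only for $i\ne j$ (since the defining formula gives $d^{\textnormal{eff}}_{ii}=-2/d_i$) is accurate and consistent with the paper's later observation that $d^{\textnormal{eff}}$ is not a metric.
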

\begin{proof}
    By the fourth step of the large equation in the proof of Proposition 2 in \cite{von2010hitting}, we have
    \begin{equation}
        \frac{1}{\text{vol}(G) }H_{ij} = \frac{1}{d_j}+ \langle b_j, \asym (b_j-b_i)\rangle + \sum_{r=2}^n \frac{1}{\mu_r} \langle \asym b_j, u_r u_r^\top \asym (b_j -b_i)\rangle,
    \end{equation}
    where $b_i = \frac{1}{\sqrt{d_i}}\hat{e}_i = D^{-\frac{1}{2}}\hat{e}_i$ with $\hat{e}_i$ being the $i$-th standard basis vector.
    
    Adding this expression for $ij$ and $ji$ and using the definition of $\tilde{d}^\text{eff}_{ij}$, we get 
    \begin{align}
        \tilde{d}^\text{eff}_{ij} - \frac{1}{d_i} - \frac{1}{d_j}
        &= \frac{1}{\text{vol}(G)} (H_{ij} + H_{ji})- \frac{1}{d_i} - \frac{1}{d_j} \\
        &= \langle b_j, \asym (b_j -b_i)\rangle +  \langle b_i, \asym (b_i -b_j)\rangle \\
        & \quad +         \sum_{r=2}^n \frac{1}{\mu_r} \langle \asym b_j, u_ru_r^\top \asym (b_j -b_i)\rangle\\
        & \quad+ \sum_{r=2}^n \frac{1}{\mu_r} \langle \asym b_i, u_r u_r^\top \asym (b_i -b_j)\rangle \\
        &= \langle b_j-b_i, \asym (b_j -b_i)\rangle \\
        &\quad + \sum_{r=2}^n \frac{1}{\mu_r} \langle \asym (b_j - b_i), u_r u_r^\top \asym (b_j -b_i)\rangle
    \end{align}
    
    For ease of exposition, we treat the two terms separately. By unpacking the definitions and using the symmetry of $D$, we get
    \begin{align}
        \langle b_j-b_i, \asym (b_j -b_i)\rangle
        &= \langle D^{-\frac{1}{2}}(\hat{e}_j-\hat{e}_i), \asym D^{-\frac{1}{2}}(\hat{e}_j-\hat{e}_i)\rangle\\
        &= \langle (\hat{e}_j-\hat{e}_i), D^{-1} A D^{-1}(\hat{e}_j-\hat{e}_i)\rangle \\
        &= \frac{a_{jj}}{d_j^2} - 2 \frac{a_{ij}}{d_id_j} + \frac{a_{ii}}{d_i^2}
    \end{align}

    Since the $u_r$ are eigenvectors of $\asym$ with eigenvalue $1-\mu_r$ and $\asym$ is symmetric, we also get
    \begin{align}
        &\sum_{r=2}^n \frac{1}{\mu_r} \langle \asym (b_j - b_i), u_r u_r^\top \asym (b_j -b_i)\rangle\\
        =& \sum_{r=2}^n \frac{1}{\mu_r} \langle b_j - b_i,  (\asym u_r)( \asym u_r)^\top (b_j -b_i))\rangle\\
        =& \sum_{r=2}^n \frac{1}{\mu_r} \langle b_j - b_i,  (1-\mu_r) u_r\big( (1-\mu_r) u_r\big)^\top (b_j -b_i)\rangle\\
        = & \sum_{r=2}^n  \left(\frac{1-\mu_r}{\sqrt{\mu_r}}u_r^\top D^{-\frac{1}{2}}(\hat{e}_j - \hat{e}_i)\right)^2\\
        =& \left\| 
        \frac{1}{\sqrt{d_j}} \left(\frac{1-\mu_r}{\sqrt{\mu_r}} u_{r,j}\right)_{r=2,\dots, n}  -
        \frac{1}{\sqrt{d_i}} \left(\frac{1-\mu_r}{\sqrt{\mu_r} }u_{r,i}\right)_{r=2,\dots, n}
        \right\|^2 \\
        =& \| e_j - e_i \|^2
    \end{align}
    Putting everything together yields the result
    \begin{align}
        d^\text{eff}_{ij} &= \tilde{d}^\text{eff}_{ij} - \frac{1}{d_i} - \frac{1}{d_j} + 2 \frac{a_{ij}}{d_id_j} - \frac{a_{ii}}{d_i^2} - \frac{a_{jj}}{d_j^2} \\
        &= \frac{a_{jj}}{d_j^2} - 2 \frac{a_{ij}}{d_id_j} + \frac{a_{ii}}{d_i^2} +  \| e_j - e_i \|^2 + 2 \frac{a_{ij}}{d_id_j} - \frac{a_{ii}}{d_i^2} - \frac{a_{jj}}{d_j^2}\\
        &= \| e^\textnormal{eff}_j - e^\textnormal{eff}_i \|^2.
    \end{align}
\end{proof}

The corrected version of effective resistance and diffusion distances are, in general, not proper metrics, unlike the naive effective resistance~\citep[Corollary 2.4.]{gobel1974random}. We show this by giving concrete examples of graphs where the metric axioms for these distances do not hold.

\begin{proposition}\label{prop:corr_eff_res_not_metric}
    Neither corrected effective resistance nor diffusion distances between distinct points are necessarily positive. Moreover, corrected effective resistance does not always satisfy the triangle inequality.
\end{proposition}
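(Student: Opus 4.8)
The plan is to prove the proposition by exhibiting explicit small graphs on which each desired property fails; no general theory is required, only the definitions from this appendix together with the closed form of Proposition~\ref{prop:corr_eff_res} and the elementary commute-time identity $H_{ij}+H_{ji}=\text{vol}(G)\,R_{ij}$, where $R_{ij}$ is the electrical effective resistance.

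\textbf{Non-positivity of both distances.} I would take $G$ to be the path $P_3$ on three vertices, with center $c$ and leaves $1$ and $2$ (equivalently the star $K_{1,2}$), which is connected as required. For the diffusion distance, after a single random-walk step both leaves move deterministically to $c$, so the rows $P_{1,:}=P_{2,:}$ coincide and hence $d^{\text{diff}}_{12}(1)=\sqrt{\text{vol}(G)}\,\|(P_{1,:}-P_{2,:})D^{-1/2}\|=0$ although $1\neq 2$. For the corrected effective resistance one computes $H_{12}=H_{21}=4$ and $\text{vol}(G)=4$ (either directly or via the commute-time identity with $R_{12}=2$), so $\tilde d^{\text{eff}}_{12}=2$; since the two leaves are non-adjacent and $G$ has no self-loops, the correction reduces to $-1/d_1-1/d_2=-2$, giving $d^{\text{eff}}_{12}=0$. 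Equivalently, one can argue spectrally via Proposition~\ref{prop:corr_eff_res}: the only eigenvector of $\lsym$ that distinguishes the two leaves is $u_2\propto(1,0,-1)^\top$, whose eigenvalue is $\mu_2=1$, so its contribution to $e^{\text{eff}}$ is scaled by $1-\mu_2=0$ and therefore $e_1^{\text{eff}}=e_2^{\text{eff}}$.

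\textbf{Failure of the triangle inequality.} Here I would use the path $P_5$ on vertices $1-2-3-4-5$, with degrees $1,2,2,2,1$. On any tree the uncorrected effective resistance between two vertices equals the number of edges on the unique connecting path (unit resistors in series), so in particular $\tilde d^{\text{eff}}_{xz}=\tilde d^{\text{eff}}_{xy}+\tilde d^{\text{eff}}_{yz}$ whenever $y$ lies strictly between $x$ and $z$. Choosing $x=1$, $y=3$, $z=5$, all three pairs are non-adjacent and there are no self-loops, so applying the correction to each pair only subtracts $1/d_x+1/d_z$ (respectively $1/d_x+1/d_y$ and $1/d_y+1/d_z$); subtracting, the $R$-terms and the $1/d_x,1/d_z$ terms cancel and one is left with $d^{\text{eff}}_{15}-d^{\text{eff}}_{13}-d^{\text{eff}}_{35}=2/d_3=1>0$. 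Concretely $d^{\text{eff}}_{13}=d^{\text{eff}}_{35}=\tfrac12$ and $d^{\text{eff}}_{15}=2$, so
\[
d^{\text{eff}}_{15}=2>1=d^{\text{eff}}_{13}+d^{\text{eff}}_{35},
\]
which violates the triangle inequality.

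The only care needed is routine bookkeeping: verifying that the chosen graphs are connected, that the chosen vertex pairs are pairwise non-adjacent so that the adjacency and self-loop correction terms indeed vanish, and that the tiny hitting-time / electrical computations are correct. I do not expect any substantive obstacle. If a non-bipartite witness were preferred (to mirror the hypothesis of Corollary~\ref{cor:diff_corr_eff_res_short}), one can attach a self-loop or an extra edge without changing the conclusions, but the bipartite examples above are the cleanest.
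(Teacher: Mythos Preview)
Your proof is correct. For non-positivity you use exactly the same $P_3$ example as the paper, with the same computations. For the triangle inequality you take a different route: the paper uses a five-vertex ``triangle with a two-edge tail'' and computes $\tilde d^{\text{eff}}$ values directly, while you use the path $P_5$ and exploit the additivity of the uncorrected effective resistance along tree paths, so that the violation reduces to the observation that the correction subtracts $1/d_3$ twice on the right-hand side but not at all on the left. Your argument is slicker and requires no resistance calculation beyond counting edges; the paper's example has the incidental advantage of being non-bipartite (matching the hypothesis of Corollary~\ref{cor:diff_corr_eff_res_short}), a point you correctly anticipate in your closing remark.
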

\begin{proof}
    Consider the unweighted chain graph with three nodes (Figure~\ref{fig:graphs}, left). The uncorrected effective resistance between the first and the last node is $2$. So the corrected effective resistance between these distinct nodes is $2- 1- 1= 0$. On the same graph, the random walker is necessarily at node $1$ after one step, independent whether it started at node $0$ or $2$, so the diffusion distance with $t=1$ between these two distinct nodes is zero.

    Consider now an unweighted graph with $5$ nodes, the first three of which form a triangle and the other two a chain connected to the triangle (Figure~\ref{fig:graphs}, right). Then
    \begin{align}
        \tilde{d}_{04}^\text{eff} =  \frac{8}{3}, \quad \tilde{d}_{02}^\text{eff} = \frac{2}{3}, \quad
        \tilde{d}_{24}^\text{eff}  = 2; \quad d_{04}^\text{eff}= \frac{7}{6}, \quad
        {d}_{02}^\text{eff}  = \frac{1}{6}, \quad d_{24}^\text{eff} =\frac{2}{3}.
    \end{align}
    We see that the triangle $0, 2, 4$ violates the triangle inequality
    $$d_{04}^\text{eff} =\frac{7}{6} > \frac{2}{3} + \frac{1}{6} = d_{02}^\text{eff}+ d_{24}^\text{eff}.$$
\end{proof}

The square root of corrected effective resistance does satisfy the triangle inequality and can also be used as input to persistent homology. This amounts to taking the square root of all birth and death times found with the corrected effective resistance as computing persistent homology commutes with strictly monotonic maps, and hence does not strongly affect the loop detection performance of corrected effective resistance~(Figure~\ref{fig:eff_res_comparison}). We therefore believe that it is not a problem for our persistent homology application that corrected effective resistance fails to satisfy the triangle inequality.

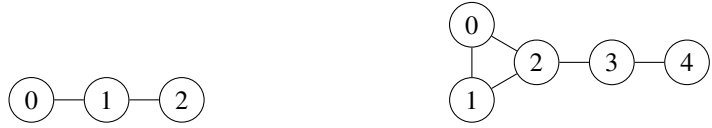
\begin{figure}
    \centering
    \begin{subfigure}{0.45\textwidth}
        \centering
        \begin{tikzpicture}[main_node/.style={circle,,draw,minimum size=1em,inner sep=3pt]}]
            \node[main_node] (0) at (0,0.0) {0};
            \node[main_node] (1) at (1, 0)  {1};
            \node[main_node] (2) at (2, 0) {2};
            \draw (0) -- (1) -- (2);
        \end{tikzpicture}
    \end{subfigure}%
    \begin{subfigure}{0.45\textwidth}
        \centering
        \begin{tikzpicture}[main_node/.style={circle,,draw,minimum size=1em,inner sep=3pt]}]
            \node[main_node] (0) at (0,0.5) {0};
            \node[main_node] (1) at (0, -0.5)  {1};
            \node[main_node] (2) at (0.86, 0) {2};
            \node[main_node] (3) at (1.86, 0) {3};
            \node[main_node] (4) at (2.86, 0) {4};
            \draw (0) -- (1) -- (2) -- (0);
            \draw (2) -- (3) -- (4);
        \end{tikzpicture}
    \end{subfigure}
    \caption{Counterexample graphs for Proposition~\ref{prop:corr_eff_res_not_metric}}
    \label{fig:graphs}
\end{figure}

\section{Effective resistance integrates diffusion distances}\label{app:diff_eff_res}

First, we describe the connection between diffusion distances and naive effective resistance communicated to us by~\citet{memoli}. To the best of our abilities, we could not find another reference. We use the same notation as in Section~\ref{sec:spectral} and Appendix~\ref{app:eff_res}.

Below, we speak of diffusion distances at half-integer time points, although the random walk analogy does not extend to half-steps. What we mean is inserting a half-integer value of $t$ in Eq.~\eqref{eq:spectral_diff}.

\begin{proposition}\label{prop:diff_naive_eff_res}
    The sum of the squared diffusion distances over all time points $t/2$ for \mbox{$t=0, 1, 2, \dots$} equals the naive effective resistance. Formally, if $G$ is a connected and non-bipartite graph, we have
    \begin{align}
        \frac{1}{\textnormal{vol}(G)}\sum_{t=0}^\infty d^{\textnormal{diff}}_{ij}(t/2)^2 = \tilde{d}^{\textnormal{eff}}_{ij}.\label{eq:odd_naive_eff_res}
    \end{align}
\end{proposition}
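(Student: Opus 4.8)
The plan is to expand both sides of \eqref{eq:odd_naive_eff_res} in the eigenbasis of $\lsym$ and check that the resulting series match term by term. Recall from Eq.~\eqref{eq:spectral_diff} that
$$
d^{\textnormal{diff}}_{ij}(t/2)^2 = \textnormal{vol}(G) \sum_{r=2}^n (1-\mu_r)^{t} \left(\frac{u_{r,i}}{\sqrt{d_i}} - \frac{u_{r,j}}{\sqrt{d_j}}\right)^2,
$$
where plugging in the half-integer time $t/2$ into Eq.~\eqref{eq:spectral_diff} simply replaces the exponent $t$ of $(1-\mu_r)$ by $t/2\cdot 2 = t$ when we square; more carefully, $d^{\textnormal{diff}}_{ij}(t/2)^2 = \textnormal{vol}(G)\sum_{r\ge 2}(1-\mu_r)^{t}\big(u_{r,i}/\sqrt{d_i}-u_{r,j}/\sqrt{d_j}\big)^2$, which is exactly why the half-integer times are what make the geometric series have ratio $(1-\mu_r)$ rather than $(1-\mu_r)^2$. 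Summing over $t=0,1,2,\dots$ and dividing by $\textnormal{vol}(G)$ gives, after interchanging the two sums,
$$
\frac{1}{\textnormal{vol}(G)}\sum_{t=0}^\infty d^{\textnormal{diff}}_{ij}(t/2)^2 = \sum_{r=2}^n \left(\sum_{t=0}^\infty (1-\mu_r)^{t}\right)\left(\frac{u_{r,i}}{\sqrt{d_i}} - \frac{u_{r,j}}{\sqrt{d_j}}\right)^2 = \sum_{r=2}^n \frac{1}{\mu_r}\left(\frac{u_{r,i}}{\sqrt{d_i}} - \frac{u_{r,j}}{\sqrt{d_j}}\right)^2.
$$

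First I would justify the convergence of the inner geometric series: for $r\ge 2$ on a connected, non-bipartite graph all eigenvalues $\mu_r$ of $\lsym$ satisfy $0<\mu_r<2$, hence $|1-\mu_r|<1$, so $\sum_{t\ge 0}(1-\mu_r)^t = 1/\mu_r$ converges absolutely; since the sum over $r$ is finite, Tonelli/Fubini licenses the interchange of $\sum_t$ and $\sum_r$. The non-bipartiteness is precisely what rules out the eigenvalue $\mu_r = 2$ (equivalently $1-\mu_r=-1$), which would make the series diverge; this is the role of that hypothesis and I would state it explicitly. Then I would recognize the right-hand side above as $\|\tilde e^{\textnormal{eff}}_i - \tilde e^{\textnormal{eff}}_j\|^2$ with $\tilde e^{\textnormal{eff}}_i = \big(u_{2,i}/\sqrt{\mu_2},\dots,u_{n,i}/\sqrt{\mu_n}\big)/\sqrt{d_i}$ exactly as in Eq.~\eqref{eq:eff_res_naive}, which equals $\tilde d^{\textnormal{eff}}_{ij}$ by the known spectral formula \eqref{eq:eff_res_naive} for the uncorrected effective resistance. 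This closes the proof.

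The only genuinely delicate point is the bookkeeping of the half-integer time argument: one must be careful that $d^{\textnormal{diff}}_{ij}(t/2)^2$, as defined by substituting $t/2$ into Eq.~\eqref{eq:spectral_diff}, carries the factor $(1-\mu_r)^{2\cdot(t/2)} = (1-\mu_r)^t$ inside the squared norm, so that the summation over $t$ yields $\sum_t (1-\mu_r)^t = 1/\mu_r$ and not $\sum_t (1-\mu_r)^{2t} = 1/(1-(1-\mu_r)^2)$. Getting this exponent right is what makes the geometric series sum to the eigenvalue reciprocals appearing in the spectral formula for naive effective resistance, and it is the reason the statement is phrased with half-integer times. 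Everything else — absolute convergence, swapping finite and infinite sums, and matching against Eq.~\eqref{eq:eff_res_naive} — is routine.
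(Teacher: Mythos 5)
Your proposal is correct and follows essentially the same route as the paper's proof: expand the diffusion distance at half-integer times in the spectral basis so the exponent becomes $(1-\mu_r)^t$, sum the geometric series using $|1-\mu_r|<1$ (guaranteed by connectedness and non-bipartiteness), and match the result against the spectral formula \eqref{eq:eff_res_naive} for the naive effective resistance. The paper's argument is the same computation, with the same justification for the geometric series, so no further comparison is needed.
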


\begin{proof}
    This is an application of the geometric series. Since $G$ is not bipartite, the largest eigenvalue $\mu_n$ of its Laplacian $L^{\text{sym}}$ is smaller than $2$~\citep[ Lemma 1.7]{chung1997spectral}. Recall that by Eq.~\eqref{eq:spectral_diff} the diffusion distance for $t$ diffusion steps is given by 
    \begin{align}
    d_{ij}^{\text{diff}}(t) = \sqrt{\textnormal{vol}(G)}\| e_i^{\text{diff}}(t) - e_j^{\text{diff}}(t)\|, \text{ where } e_i^{\text{diff}}(t) = \frac{1}{\sqrt{d_i}} \left((1-\mu_2)^t u_{2,i}, \ldots, (1-\mu_n)^t u_{n,i}\right).
\end{align}
and that the naive effective resistance is given by

\begin{equation}
    \tilde{d}^{\text{eff}}_{ij} = \| \tilde{e}_i^{\text{eff}} - \tilde{e}_j^{\text{eff}}\|^2, \text{ where } \tilde{e}_i^{\text{eff}} = \frac{1}{\sqrt{d_i}}\left(\frac{1}{\sqrt{\mu_2}} u_{2,i},\ldots,\frac{1}{\sqrt{\mu_n}} u_{n,i}\right).
\end{equation}
    We compute
    \begin{align}
        \frac{1}{\text{vol}(G)}\sum_{t=0}^\infty d^{\textnormal{diff}}_{ij}(t/2)^2 
        &=\sum_{t=0}^\infty \sum_{l=2}^n (1-\mu_l)^{2\cdot t/2} \Big(\frac{u_{l, i}}{\sqrt{d_i}} - \frac{u_{l, j}}{\sqrt{d_j}}\Big)^2 \\
        &= \sum_{l=2}^n \Big(\frac{u_{l, i}}{\sqrt{d_i}} - \frac{u_{l, j}}{\sqrt{d_j}}\Big)^2 \sum_{t=0}^\infty (1-\mu_l)^{t} \\
        &= \sum_{l=2}^n \Big(\frac{u_{l, i}}{\sqrt{d_i}} - \frac{u_{l, j}}{\sqrt{d_j}}\Big)^2 \frac{1}{1- (1-\mu_l)} \\
        &= \sum_{l=2}^n \frac{1}{\mu_l}\Big(\frac{u_{l, i}}{\sqrt{d_i}} - \frac{u_{l, j}}{\sqrt{d_j}}\Big)^2 \\
        &= \| \tilde{e}^\text{eff}_i - \tilde{e}^\text{eff}_j \| ^2 \\
        &= \tilde{d}^{\text{eff}}_{ij}.
    \end{align}
    The application of the geometric series is justified as $\mu_l \in (0, 2)$ for $l=2, \dots, n$ by assumption on $G$, so that \mbox{$|1-\mu_l| < 1$.}
\end{proof}

Similarly, we show the corresponding result for the corrected version of effective resistance.

\begin{corollary}\label{cor:diff_corr_eff_res}
    The sum of the squared diffusion distances over all time points $t/2$ for $t=2, 3, \dots$ equals the corrected effective resistance. Formally, if $G$ is a connected non-bipartite graph, we have
    \begin{equation}
        \frac{1}{\textnormal{vol}(G)}\sum_{t=2}^\infty d^{\textnormal{diff}}_{ij}(t/2)^2 = d^{\textnormal{eff}}_{ij}\label{eq:diff_corr_eff_res}
    \end{equation}
   and
    \begin{equation}
        \frac{d^{\textnormal{diff}}_{ij}(0)^2}{\textnormal{vol}(G)} =  \frac{1}{d_i} + \frac{1}{d_j}, \qquad
        \frac{d^{\textnormal{diff}}_{ij}(1/2)^2}{\textnormal{vol}(G)} = \frac{a_{ii}}{d_i^2} + \frac{a_{jj}}{d_j^2}  - 2 \frac{a_{ij}}{d_id_j}
    \end{equation}
\end{corollary}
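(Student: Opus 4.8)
The plan is to prove Corollary~\ref{cor:diff_corr_eff_res} in two steps, closely mirroring the proof of Proposition~\ref{prop:diff_naive_eff_res}. First I would establish the summation formula~\eqref{eq:diff_corr_eff_res}, and then identify the two omitted terms $d^{\textnormal{diff}}_{ij}(0)^2/\textnormal{vol}(G)$ and $d^{\textnormal{diff}}_{ij}(1/2)^2/\textnormal{vol}(G)$ explicitly, checking that their sum is exactly the correction $1/d_i + 1/d_j - 2a_{ij}/(d_id_j) + a_{ii}/d_i^2 + a_{jj}/d_j^2$ that distinguishes $\tilde d^{\textnormal{eff}}$ from $d^{\textnormal{eff}}$.

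For the first step, I would start from the spectral expression for the diffusion distance, $d^{\textnormal{diff}}_{ij}(t/2)^2/\textnormal{vol}(G) = \sum_{l=2}^n (1-\mu_l)^{t}\big(u_{l,i}/\sqrt{d_i} - u_{l,j}/\sqrt{d_j}\big)^2$, and sum over $t = 2, 3, \dots$ instead of $t = 0, 1, \dots$. The inner geometric series now starts at $t=2$, so $\sum_{t=2}^\infty (1-\mu_l)^t = (1-\mu_l)^2/\mu_l$; the same justification as in Proposition~\ref{prop:diff_naive_eff_res} applies, namely $\mu_l \in (0,2)$ for $l \ge 2$ because $G$ is connected and non-bipartite, so $|1-\mu_l| < 1$ and the series converges. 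This yields $\sum_{t=2}^\infty d^{\textnormal{diff}}_{ij}(t/2)^2/\textnormal{vol}(G) = \sum_{l=2}^n \frac{(1-\mu_l)^2}{\mu_l}\big(u_{l,i}/\sqrt{d_i} - u_{l,j}/\sqrt{d_j}\big)^2$, which is precisely $\|e^{\textnormal{eff}}_i - e^{\textnormal{eff}}_j\|^2 = d^{\textnormal{eff}}_{ij}$ by Proposition~\ref{prop:corr_eff_res}. Alternatively, and perhaps more cleanly, I could subtract: by Proposition~\ref{prop:diff_naive_eff_res}, $\sum_{t=2}^\infty = \sum_{t=0}^\infty - (t{=}0\text{ term}) - (t{=}1\text{ term}) = \tilde d^{\textnormal{eff}}_{ij} - d^{\textnormal{diff}}_{ij}(0)^2/\textnormal{vol}(G) - d^{\textnormal{diff}}_{ij}(1/2)^2/\textnormal{vol}(G)$, so that~\eqref{eq:diff_corr_eff_res} would follow immediately once the two term identities are verified and shown to sum to the correction.

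For the two explicit term identities, I would just evaluate $e^{\textnormal{diff}}_i(t)$ at $t=0$ and $t=1/2$. At $t=0$ we get $e^{\textnormal{diff}}_i(0) = (u_{2,i}, \dots, u_{n,i})/\sqrt{d_i}$, so $d^{\textnormal{diff}}_{ij}(0)^2/\textnormal{vol}(G) = \sum_{l=2}^n (u_{l,i}/\sqrt{d_i} - u_{l,j}/\sqrt{d_j})^2$. Expanding and using the completeness/orthonormality of $\{u_l\}_{l=1}^n$ together with the explicit form $u_1 = D^{1/2}(1,\dots,1)^\top/\sqrt{\textnormal{vol}(G)}$ to handle the missing $l=1$ term: $\sum_{l=1}^n u_{l,i}^2 = 1$ (rows of an orthogonal matrix), so $\sum_{l=2}^n u_{l,i}^2 = 1 - d_i/\textnormal{vol}(G)$, and $\sum_{l=1}^n u_{l,i}u_{l,j} = 0$ for $i \ne j$ gives $\sum_{l=2}^n u_{l,i}u_{l,j} = -\sqrt{d_id_j}/\textnormal{vol}(G)$. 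This should collapse to $1/d_i + 1/d_j$. For $t=1/2$, the component is $(1-\mu_l)^{1/2}u_{l,i}/\sqrt{d_i}$, and $(1-\mu_l)$ is the eigenvalue of $\asym = D^{-1/2}AD^{-1/2}$ for eigenvector $u_l$; so the relevant inner products $\sum_{l=2}^n (1-\mu_l) u_{l,i}u_{l,j}$ become entries of $\asym$ (minus the $l=1$ contribution which involves $a$-row-sums), giving $a_{ij}/\sqrt{d_id_j}\cdot(\text{stuff})$ and ultimately $a_{ii}/d_i^2 + a_{jj}/d_j^2 - 2a_{ij}/(d_id_j)$.

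The main obstacle will be the careful bookkeeping in handling the omitted $l=1$ index when converting sums over $l = 2,\dots,n$ into matrix entries of $I$ and $\asym$: one has to use the explicit trivial eigenvector $u_1$ and the facts $(\sum_l u_lu_l^\top)_{ij} = \delta_{ij}$ and $(\sum_l (1-\mu_l)u_lu_l^\top)_{ij} = \asym_{ij} = a_{ij}/\sqrt{d_id_j}$, then subtract off the $l=1$ pieces $u_{1,i}u_{1,j} = \sqrt{d_id_j}/\textnormal{vol}(G)$ and $(1-\mu_1)u_{1,i}u_{1,j} = \sqrt{d_id_j}/\textnormal{vol}(G)$ (since $\mu_1 = 0$). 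Everything else is routine algebra with the geometric series and expansion of squared norms. Note a convenient consistency check: the two term identities sum to exactly the correction $\tilde d^{\textnormal{eff}}_{ij} - d^{\textnormal{eff}}_{ij}$ from the definition, and the $\textnormal{vol}(G)^{-1}$ pieces must cancel — which gives a built-in sanity check on the $l=1$ bookkeeping.
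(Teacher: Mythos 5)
Your proposal is correct and follows essentially the same route as the paper: the main identity is obtained exactly as in the paper, by applying Proposition~\ref{prop:corr_eff_res} and the geometric series started at $t=2$ (with convergence from $|1-\mu_l|<1$ for connected, non-bipartite $G$), and your $l=1$ bookkeeping for the two boundary terms checks out. The only minor difference is in those boundary terms: you evaluate both $t=0$ and $t=1/2$ directly via the spectral expansion (using row-orthonormality and the explicit $u_1$, and $\sum_l(1-\mu_l)u_lu_l^\top=\asym$), whereas the paper computes the $t=0$ term from the definition with $P^0=I$ and then gets the $t=1/2$ formula for free by subtracting from the correction term, using Proposition~\ref{prop:diff_naive_eff_res}.
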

\begin{proof}
    For $x\in (-1, 1)$, starting a geometric series at the $m$-th term yields $\sum_{t=m}^\infty x^t= \frac{x^m}{1-x}$. By Proposition~\ref{prop:corr_eff_res}, we have 
    \begin{equation}
        d^\textnormal{eff}_{ij} = \sum_{l=2}^n \frac{(1-\mu_l)^2}{\mu_l}\left(\frac{u_{l, i}}{\sqrt{d_i}} - \frac{u_{l, j}}{\sqrt{d_j}}\right)^2  =  \sum_{l=2}^n \frac{(1-\mu_l)^2}{1-(1-\mu_l)}\left(\frac{u_{l, i}}{\sqrt{d_i}} - \frac{u_{l, j}}{\sqrt{d_j}}\right)^2.
    \end{equation}
    By the same argument as in the proof of Proposition~\ref{prop:diff_naive_eff_res}, this shows the first part of the Corollary. 

    The statement for diffusion distance after $t=0$ steps is an easy computation from the original definition, Eq.~\eqref{eq:diff}:
    \begin{align}
        d^\text{diff}_{ij}(0) &= \sqrt{\text{vol}(G)}\|(P^0_{i, :}- P^0_{j, :})D^{-0.5}\|\notag \\
        &= \sqrt{\text{vol}(G)}\|(\hat{e}_i^\top- \hat{e}_j^\top)D^{-0.5}\| \notag \\
        &=  \sqrt{\text{vol}(G) \left(\frac{1}{d_i} + \frac{1}{d_j}\right)},
    \end{align}
    where $\hat{e}_i$ is the $i$-th standard basis vector. The last part follows from the definition of the corrected effective resistance as 
    \begin{equation}
        d^\text{eff}_{ij} = \tilde{d}^\text{eff}_{ij} - 1/d_i - 1/d_j + 2 a_{ij}/(d_id_j) - a_{ii}/d_i^2 - a_{jj}/d_j^2.
    \end{equation}
    and the expression of the naive effective resistance as full geometric series of squared diffusion distances in Proposition~\ref{prop:diff_naive_eff_res}.
\end{proof}

\section{Diffusion pseudotime}
\label{app:dpt}

Diffusion pseudotime (DPT), $d^\text{dpt}$, \cite{haghverdi2016diffusion} also considers diffusion processes of arbitrary length. There are several variants of diffusion pseudotime~\cite{haghverdi2016diffusion, wolf2019paga}. They are all computed as  $ d^\textnormal{dpt}_{ij} = \| e_i^\textnormal{dpt} - e_j^\textnormal{dpt}\|$ for 
    \begin{align}
        e^\textnormal{dpt}_i = \left(\frac{1-\mu_2}{\mu_2} v_{2,i}, \dots,\frac{1-\mu_n}{\mu_n} v_{n,i} \right).
    \end{align}
The difference is the $v_1, \dots, v_n$'s. In the original publication~\citep{haghverdi2016diffusion}, they were the normalized eigenvectors of the random walker graph Laplacian $D^{-\frac{1}{2}}\lsym D^\frac{1}{2}$. These are given by $v_l = D^{-\frac{1}{2}}u_l  / \|D^{-\frac{1}{2}}u_l\|$. \citet{wolf2019paga} introduced a version using the eigenvectors of the symmetric graph Laplacian $\lsym$, so that $v_l= u_l$. Closest to the corrected resistance distance is the case where $v_l =D^{-\frac{1}{2}}u_l$ are non-normalized. We refer to these three versions as ``rw'', ``sym'' and ``symd''. 

The only difference between the ``symd'' version of DPT and corrected effective resistance is that the former decays eigenvalues with $(1-\mu_l)/\mu_l$, while the latter decays it with $(1-\mu_l)/\sqrt{\mu_l}$, so that DPT decays large eigenvalues more strongly than corrected effective resistance (Figure~\ref{fig:eigenvalues}).

Similar to effective resistance, one can also write diffusion pseudotime in terms of diffusion distances. But for diffusion pseudotime the diffusion distances corresponding to higher diffusion times contribute more.

\begin{proposition}\label{prop:dpt_diff}
    Let $G$ be a connected, non-bipartite graph. We can write the ``symd'' version of DPT as
    \begin{equation}
        d_{ij}^\textnormal{dpt} = \sqrt{\frac{1}{\textnormal{vol}(G)}\sum_{t=1}^\infty (t-1) d_{ij}^\textnormal{diff}(t/2)^2}.
    \end{equation}
\end{proposition}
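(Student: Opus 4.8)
The plan is to mimic the proof of Corollary~\ref{cor:diff_corr_eff_res} (equivalently, Corollary~\ref{cor:diff_corr_eff_res_short}), replacing the plain geometric series by its first derivative. Recall from Eq.~\eqref{eq:spectral_diff} that
\begin{equation}
    \frac{1}{\textnormal{vol}(G)} d_{ij}^{\textnormal{diff}}(t/2)^2 = \sum_{l=2}^n (1-\mu_l)^{t} \left(\frac{u_{l,i}}{\sqrt{d_i}} - \frac{u_{l,j}}{\sqrt{d_j}}\right)^2,
\end{equation}
while by definition of the ``symd'' version of DPT, with $v_l = D^{-\frac12}u_l$,
\begin{equation}
    \big(d_{ij}^{\textnormal{dpt}}\big)^2 = \sum_{l=2}^n \frac{(1-\mu_l)^2}{\mu_l^2}\left(\frac{u_{l,i}}{\sqrt{d_i}} - \frac{u_{l,j}}{\sqrt{d_j}}\right)^2.
\end{equation}
So it suffices to show, coefficient by coefficient in $l$, that $\sum_{t=1}^\infty (t-1)(1-\mu_l)^{t} = (1-\mu_l)^2/\mu_l^2$.

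First I would establish this scalar identity. Writing $x = 1-\mu_l$, for $l=2,\dots,n$ we have $x \in (-1,1)$ since $G$ is connected ($\mu_l > 0$ for $l\ge 2$) and non-bipartite ($\mu_n < 2$, by \citep[Lemma 1.7]{chung1997spectral}), so all the series below converge absolutely. Starting from the geometric series $\sum_{t=0}^\infty x^t = 1/(1-x)$, differentiating gives $\sum_{t=1}^\infty t x^{t-1} = 1/(1-x)^2$, hence $\sum_{t=1}^\infty t x^{t} = x/(1-x)^2$. Subtracting $\sum_{t=1}^\infty x^t = x/(1-x)$ yields
\begin{equation}
    \sum_{t=1}^\infty (t-1) x^{t} = \frac{x}{(1-x)^2} - \frac{x}{1-x} = \frac{x - x(1-x)}{(1-x)^2} = \frac{x^2}{(1-x)^2}.
\end{equation}
Substituting back $x = 1-\mu_l$ and $1-x = \mu_l$ gives exactly $(1-\mu_l)^2/\mu_l^2$, as needed. (The $t=1$ term contributes $0$, consistent with the sum effectively starting at $t=2$.)

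Then I would assemble the result: interchange the (absolutely convergent) sum over $t$ with the finite sum over $l$,
\begin{equation}
    \frac{1}{\textnormal{vol}(G)}\sum_{t=1}^\infty (t-1) d_{ij}^{\textnormal{diff}}(t/2)^2 = \sum_{l=2}^n \left(\frac{u_{l,i}}{\sqrt{d_i}} - \frac{u_{l,j}}{\sqrt{d_j}}\right)^2 \sum_{t=1}^\infty (t-1)(1-\mu_l)^t = \sum_{l=2}^n \frac{(1-\mu_l)^2}{\mu_l^2}\left(\frac{u_{l,i}}{\sqrt{d_i}} - \frac{u_{l,j}}{\sqrt{d_j}}\right)^2,
\end{equation}
which is $\big(d_{ij}^{\textnormal{dpt}}\big)^2$; taking square roots finishes the proof. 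The only mild subtlety is justifying the interchange of summation and the convergence of $\sum (t-1)|x|^t$, which follows from $|x|<1$ and the ratio test (or from absolute convergence of the differentiated geometric series on compact subsets of $(-1,1)$); I do not expect any genuine obstacle here, as the argument is parallel to Corollary~\ref{cor:diff_corr_eff_res} with one differentiation added. One should also note that the identity requires using the non-normalized eigenvectors $v_l = D^{-1/2}u_l$, i.e.\ precisely the ``symd'' variant, so that the factor $\big(u_{l,i}/\sqrt{d_i} - u_{l,j}/\sqrt{d_j}\big)^2$ matches the one appearing in $d_{ij}^{\textnormal{diff}}$; the ``rw'' and ``sym'' variants would not line up.
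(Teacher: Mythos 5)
Your proposal is correct and follows essentially the same route as the paper's proof: expand both $d^{\textnormal{dpt}}$ and the diffusion distances in the eigenbasis of $\lsym$, use the differentiated geometric series to get $\sum_{t}(t-1)(1-\mu_l)^t = (1-\mu_l)^2/\mu_l^2$ (the paper obtains this by multiplying the derivative series by $x^2$ and shifting the index, you by subtracting $\sum_t x^t$ — an immaterial difference), and swap the finite sum over $l$ with the absolutely convergent sum over $t$. No gaps.
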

\begin{proof}
    We will use that for a real number $x$ with $|x|<1$ the derivative of the geometric series is
    $$
    \sum_{t=1}^\infty tx^{t-1} = \frac{1}{(1-x)^2}.
    $$
    Multiplying with $x^2$, we get 
    \begin{align}
        \frac{x^2}{(1-x)^2} =  \sum_{t=1}^\infty tx^{t+1} = \sum_{t=2}^\infty (t-1)x^t.
    \end{align}
    By assumption on $G$, for all eigenvalues $\mu_l$ we have $|1-\mu_l|<1$ for $l=2, \dots, n$, so that the above equation holds for all $x=1-\mu_l$ with $l=2, \dots, n$. Together with the ``symd'' expression for diffusion pseudotime, we compute
    \begin{align}
        \big(d_{ij}^\text{dpt}\big)^2 &= \sum_{l=2}^{n} \frac{(1-\mu_l)^2}{\mu_l^2} \Big(\frac{u_{l, i}}{\sqrt{d_i}} - \frac{u_{l, j}}{\sqrt{d_j}}\Big)^2 \notag \\
        &= \sum_{l=2}^{n} \sum_{t=2}^\infty (t-1) (1-\mu_l)^t  \Big(\frac{u_{l, i}}{\sqrt{d_i}} - \frac{u_{l, j}}{\sqrt{d_j}}\Big)^2 \notag \\
        &= \sum_{t=2}^\infty (t-1) \sum_{l=2}^n (1-\mu_l)^{2\cdot(t/2)} \Big(\frac{u_{l, i}}{\sqrt{d_i}} - \frac{u_{l, j}}{\sqrt{d_j}}\Big)^2 \notag \\
        &=  \sum_{t=2}^\infty (t-1) d_{ij}^\text{diff}(t/2)^2.
    \end{align}
\end{proof}

\subsection{Performance of diffusion pseudotime and potential distance}

Here we compare the spectral methods of our main benchmark to the three versions of DPT and potential distance. 

For DPT, we used all three versions defined above and used $k\in\{15, 100\}$ nearest neighbors.

The potential distance underlies the visualization method PHATE~\citep{moon2019visualizing} and is closely related to the diffusion distance. It is defined by 
$$d_t^{\text{pot}}(i, j) = \| \log(P^t_{i, :}) - \log(P^t_{j,:})\|,$$
where the logarithm is applied element-wise. We used $k\in\{15, 100\}$ and $t\in\{8, 64\}$.

Overall, all spectral methods performed very similarly on the 1D datasets circle, linked circles, and eyeglasses (Figures~\ref{fig:comparison_spectrals}, \ref{fig:best_spectrals}), as well as on the single-cell datasets (Figure~\ref{fig:scRNAseq_all_spectral}).

\begin{figure}
    \centering
    \includegraphics[width=\linewidth]{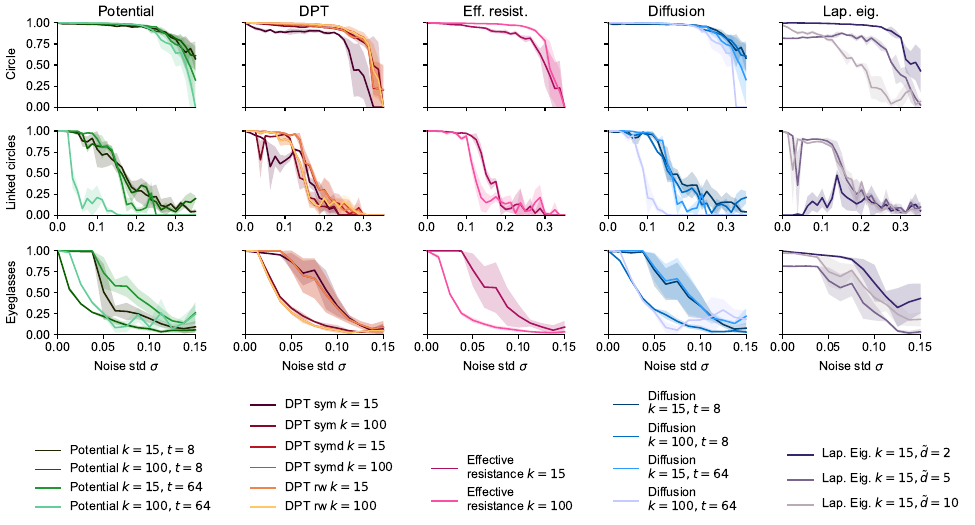}
    \caption{Comparison of all spectral methods on the noised versions of the circle, the linked circles, and the eyeglassed dataset in $\mathbb{R}^{50}$.}
    \label{fig:comparison_spectrals}
\end{figure}

\begin{figure}
    \centering
    \includegraphics[width=0.8\linewidth]{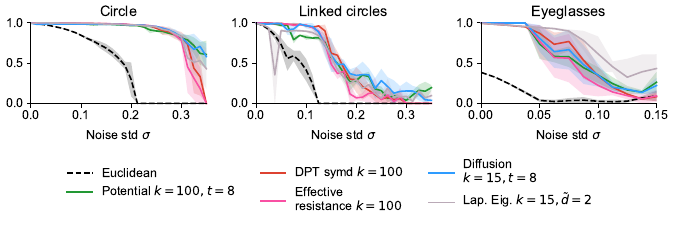}
    \caption{Best hyperparameter choices for the methods in Figure~\ref{fig:comparison_spectrals}. All spectral methods reach very similar performance.}
    \label{fig:best_spectrals}
\end{figure}

\begin{figure}
    \centering
    \includegraphics[width=\linewidth]{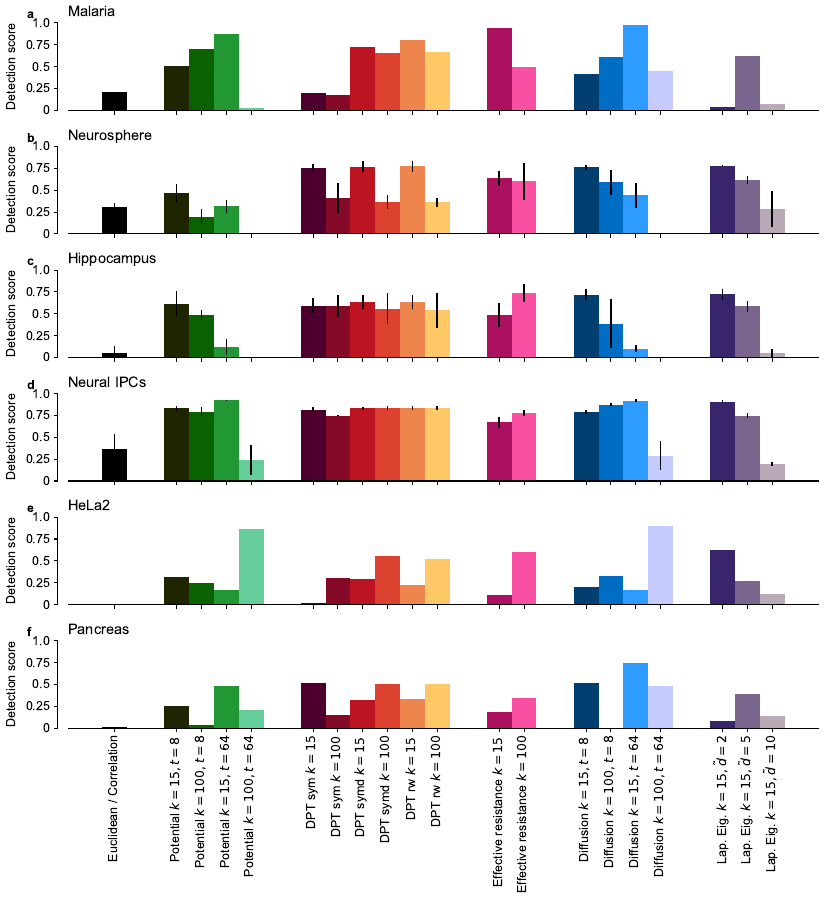}
    \caption{Comparison of all spectral methods on the single cell datasets. They all achieve similar performance.}
    \label{fig:scRNAseq_all_spectral}
\end{figure}

\section{Details on the distances used in our benchmark}
\label{app:dist}

Let $x_1,\dots, x_n \in \mathbb{R}^d$. We denote pairwise Euclidean distances by $d_{ij}= \|x_i - x_j\|$, the $k$ nearest neighbors of $x_i$ in increasing distance by $x_{i_1}, \dots, x_{i_k}$, and the set containing them by $N_i$. Many distances rely on the symmetric $k$-nearest-neighbor (\sknn) graph. This graph contains edge $ij$ if $x_i$ is among the $k$ nearest neighbors of $x_j$ or vice versa.

\paragraph{Fermat distances}
For $p\geq 1$, the Fermat distance is defined as 
\begin{equation}
    d^{F}_p(ij) = \inf_\pi\bigg(\sum_{(uv)\in \pi} d_{uv}^p\bigg),
\end{equation}
where the infimum is taken over all finite paths from $x_i$ to $x_j$ in the complete graph with edge weights $d_{ij}^p$. As a speed-up, \citet{fernandez2023intrinsic} suggested to compute the shortest paths only on the $k$NN graph, but for our sample sizes we could perform the calculation on the complete graph. For $p=1$ this reduces to normal Euclidean distances due to the triangle inequality. We used $p\in\{2,3,5,7\}$.

\paragraph{DTM distances}
The DTM distances depend on three hyperparameters: the number of nearest neighbors $k$, one hyperparameter controlling the distance to measure $p$, and finally a hyperparameter $\xi$ controlling the combination of DTM and Euclidean distance. The DTM value for each point is given by
\begin{equation}
    \text{dtm}_i = \begin{cases}
        \sqrt[\uproot{5} p]{\sum_{\kappa=1}^k  \| x_i - x_{i_\kappa}\|^{p}/k} &\text{ if } p < \infty \\
        \| x_i - x_{i_k}\| &\text{ else}.
    \end{cases}
\end{equation}
These values are combined with pairwise Euclidean distances to give pairwise DTM distances:
\begin{equation}
    d^{\text{DTM}}_{k,p,\xi}(ij) = \begin{cases}
        \max(\text{dtm}_i, \text{dtm}_j) &\text{ if } \| x_i - x_j\|\leq \sqrt[\uproot{5} \xi]{|\text{dtm}_i^{\xi} - \text{dtm}_j^{\xi}|}\\
        \theta &\text{ else},
    \end{cases}
\end{equation}
where $\theta$ is the only positive root of $\sqrt[\uproot{5}\xi]{\theta^{\xi} - \text{dtm}_i^{\xi}} + \sqrt[\uproot{5}\xi]{\theta^{\xi} - \text{dtm}_j^{\xi}} = d_{ij}$. We only considered the values $\xi \in \{1, 2, \infty\}$, for which the there are closed-form solutions:
\begin{equation}
    \theta = \begin{cases}
        (\text{dtm}_i + \text{dtm}_j + d_{ij})/2 &\text{ if } \xi = 1 \\
        \left. \sqrt{\big((\text{dtm}_i+\text{dtm}_j)^2 + d_{ij}^2\big)\cdot\big((\text{dtm}_i-\text{dtm}_j)^2 + d_{ij}^2\big)} \middle/ (2d_{ij})\right. &\text{ if } \xi = 2 \\
        \max(\text{dtm}_i, \text{dtm}_j, d_{ij}/2) &\text{ if } \xi = \infty.
    \end{cases}
\end{equation}
We used $k\in \{4, 15, 100\}$, $p \in \{2, \infty\}$, and $\xi\in\{1, 2,\infty\}$.

The original exposition of DTM-based filtrations~\cite{anai2020dtm} only considered the setting $p=2$, while DTM has been defined for arbitrary $p\geq 1$~\cite{chazal2021introduction}. We explore an additional value, $p=\infty$, in order to possibly strengthen DTM. Indeed, in several experiments it outperformed the $p=2$ setting.

Moreover, \citet{anai2020dtm} actually used a small variant of the Vietoris-Rips complex on the above distance $d^{\text{DTM}}_{ij}(k,p,\xi)$: They only included point $x_i$ in the filtered complex once the filtration value exceeds $\text{dtm}_i$. This, however, only affects the $0$-th homology, which we do not consider in our experiments. 

\paragraph{Core distance} 
The core distance is similar to the DTM distance with $\xi=\infty$ and $p=\infty$ and is given by
\begin{equation}
    d^{\text{core}}_k(ij) = \max(d_{ij}, \| x_i - x_{i_k}\|, \| x_j - x_{j_k}\|).
\end{equation}
We used $k\in\{15, 100\}$.

\paragraph{$t$-SNE graph affinities}
The $t$-SNE affinities are given by
\begin{equation}
    p_{ij}= \frac{p_{i|j}+ p_{j|i}}{2n}, \quad
    p_{j|i} = \frac{\nu_{j|i}}{\sum_{k\neq i} \nu_{k|i}}, \quad
    \nu_{j|i} = \begin{cases}
    \exp\left(\| x_i - x_j \|^2/(2\sigma_i^2)\right)  &\text{ if } x_j \in N_i  \\
        0  & \text{ else,}   \end{cases}
\end{equation}
where $\sigma_i$ is selected such that the distribution $p_{j|i}$ has pre-specified perplexity $\rho$. Standard implementations of $t$-SNE use $k=3\rho$. We transformed $t$-SNE affinities into pairwise distances by taking the negative logarithm. Pairs $x_i$ and $x_j$ with $p_{ij} = 0$ (i.e. not in the $k$NN graph) get distance $\infty$. We used $\rho\in\{30, 200, 333\}$.

\paragraph{UMAP graph affinities}
The UMAP affinities are given by 
\begin{equation}
    \mu_{ij} = \mu_{i|j} + \mu_{j| i} - \mu_{i| j}\mu_{j| i}, \quad
    \mu_{j|i} = 
    \begin{cases}
        \exp\big(-(d_{ij} - \mu_i)/\sigma_i\big) &\text{ for } j \in \{i_1, \dots, i_k\}\\
        0 &\text{ else,}
    \end{cases}
\end{equation}
where $\mu_i = \|x_i- x_{i_1}\|$ is the distance between $x_i$ and its nearest non-identical neighbor. The scale parameter $\sigma_i$ is selected such that 
\begin{equation}
    \sum_{\kappa=1}^k \exp\Big(-\big(d(x_i, x_{i_\kappa}) - \mu_i\big) / \sigma_i\Big) = \log_2(k).
\end{equation}
As above, to convert these affinities into distances, we take the negative logarithm and handle zero similarities as for the $t$-SNE case. We used $k\in\{100, 999\}$; $k=15$ resulted in memory overflow on one of the void-containing datasets.

\citet{gardner2022toroidal} and \citet{hermansen2024uncovering} first used these distances, but omitted $\mu_i$, which we included to completely reproduce UMAP's affinities. 

Note that distances derived from UMAP and $t$-SNE affinities are not guaranteed to obey the triangle inequality.

\paragraph{Geodesic distances}
We computed the shortest path distances between all pairs of nodes in the \sknn{} graph with edges weighted by their Euclidean distances. We used the Python function \texttt{scipy.sparse.csgraph.shortest\_path}. We used $k=\{15, 100\}$.

\paragraph{UMAP embedding}
We computed the UMAP embeddings in $2$ embedding dimensions using $750$ optimization epochs, \texttt{min\_dist} of $0.1$, exactly computed $k$ nearest neighbors, and PCA initialization. Then we used Euclidean distances between the embedding points. We used UMAP commit \texttt{a7606f2}. We used $k\in\{15, 100, 999\}$.

\paragraph{$t$-SNE embedding}
We computed the $t$-SNE embeddings in $2$ embedding dimensions using openTSNE~\citep{polivcar2024opentsne} with default parameters, but providing manually computed affinities. For that we used standard Gaussian affinities on the \sknn{} graph with $k=3\rho$. Then we used the Euclidean distances between the embedding points. We used perplexity $\rho\in\{8, 30, 333\}$.

For UMAP and $t$-SNE affinities as well as for UMAP and $t$-SNE embeddings we computed the \sknn{} graph with PyKeOps~\citep{pykeops} instead of using the default approximate methods. The UMAP and $t$-SNE affinities (without negative logarithm) were used by the corresponding embedding methods.

\paragraph{Effective resistance}
We computed the effective resistance on the \sknn{} graph. Following the analogy with resistances in an electric circuit, if the \sknn{} graph is disconnected, we computed the effective resistance separately in each connected component and set resistances between components to $\infty$. The uncorrected resistances were computed via the pseudoinverse of the unnormalized graph Laplacian~\citep{fouss2007random}
\begin{equation}
    \tilde{d}^\text{eff}_{ij} = l_{ii}^\dag - 2l_{ij}^\dag + l_{jj}^\dag,
\end{equation}
where $l_{ij}^\dag$ is the $ij$-th entry of the pseudoinverse $L^\dag$ of the unnormalized \sknn{} graph Laplacian $L = D -A$. The pseudoinverse inverts all non-zero eigenvalues. Denote the eigenvalue decomposition of $L$ by $L = V \Lambda V^T$, where $V=(v_1, \dots, v_n)^T$ is the matrix of eigenvectors of $L$ and $\Lambda$ is the diagonal matrix of their eigenvalues $\lambda_1, \dots, \lambda_n$. Then $L^\dag = V \Lambda^\dag V^T$, where $\Lambda^\dag$ is the diagonal matrix with entry $1/\lambda_i$ if $\lambda_i>0$ and $0$ otherwise. We can use this to derive a similar coordinate expression as in Eq.~\ref{eq:eff_res_naive}, but based on the unnormalized graph Laplacian (\citep[4.B]{fouss2007random}). Let $e_i$ be the $i$-th standard basis vector
\begin{align}
    \tilde{d}^\text{eff}_{ij} &= l_{ii}^\dag - 2l_{ij}^\dag + l_{jj}^\dag\nonumber\\
    &=(e_i - e_j)^T L^\dag (e_i- e_j) \nonumber \\
    &= (e_i - e_j)^T V \Lambda^\dag V^T (e_i- e_j) \nonumber \\
     &= \left(\sqrt{\Lambda^\dag} V^T e_i - \sqrt{\Lambda^\dag} V^Te_j\right)^T \left(\sqrt{\Lambda^\dag} V^T e_i - \sqrt{\Lambda^\dag} V^Te_j\right)\nonumber \\
    &= \| \hat{e}_i^{\text{eff}} - \hat{e}_j^{\text{eff}}\|^2 \text{ where}\nonumber \\
    \hat{e}_i^{\text{eff}}&= \left(\frac{v_{2,i}}{\sqrt{\lambda_i}}, \dots, \frac{v_{n,i}}{\sqrt{\lambda_n}}\right).
\end{align}

For the corrected version, we used
\begin{equation}
    d^\text{eff}_{ij} = \tilde{d}^\text{eff}_{ij} - \frac{1}{d_i} - \frac{1}{d_j} + 2 \frac{a_{ij}}{d_id_j} - \frac{a_{ii}}{d_i^2} - \frac{a_{jj}}{d_j^2}.
\end{equation}
For the weighted version of effective resistance, each edge in the \sknn{} graph was weighted by the inverse of the Euclidean distance. We experimented with the weighted and unweighted versions, but only reported the unweighted version in the paper as the difference was always minor. We also experimented with the unweighted and uncorrected version and saw that correcting is crucial for high noise levels (Figure~\ref{fig:eff_res_comparison}). We used $k~\in\{15, 100\}$. 

Both forms of the effective resistance can be written as squared distances between certain embedding points \eqref{eq:eff_res_naive}, \eqref{eq:eff_res_corr}. Nevertheless, the uncorrected effective resistance is a proper metric~\citep[Corollary 2.4.]{gobel1974random}. The corrected version in general is not a proper metric (Proposition~\ref{prop:corr_eff_res_not_metric}).

\paragraph{Diffusion distance}
We computed the diffusion distances on the unweighted \sknn{} graph directly by equation~\eqref{eq:diff}, i.e., 
\begin{equation}\label{eq:diff_app}
    d_t^{\text{diff}}(i,j) = \sqrt{\text{vol}(G)}\| (P^t_{i, :} - P^t_{j,:})D^{-\frac{1}{2}}\|.
\end{equation}
Note that our \sknn{} graphs do not contain self-loops. We used $k\in\{15, 100\}$ and $t\in\{8, 64\}$.

It is clear from the above definition of the diffusion distance that it satisfies the triangle inequality, but it can fail to be positive on distinct points (Proposition~\ref{prop:corr_eff_res_not_metric}).

\begin{figure}[tb]
    \centering
    \includegraphics[width=\linewidth]{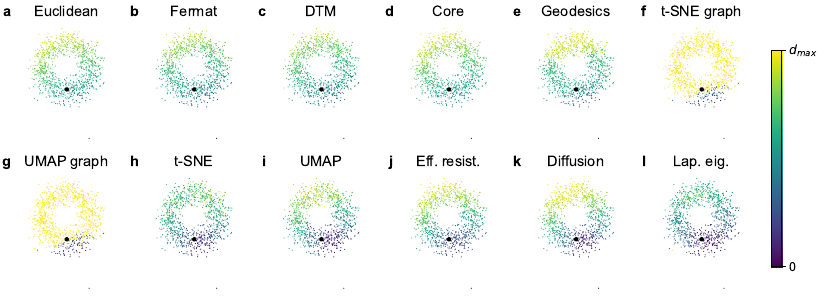}
    \caption{Visualization of all distances on the noisy circle in $\mathbb R^{50}$ with $\sigma=0.25$. All scatter plots are the 2D PCA of the 50D dataset. The colors indicate the distance to the highlighted point.}
    \label{fig:dists_pca}
\end{figure}

\begin{figure}[tb]
    \centering
    \includegraphics[width=\linewidth]{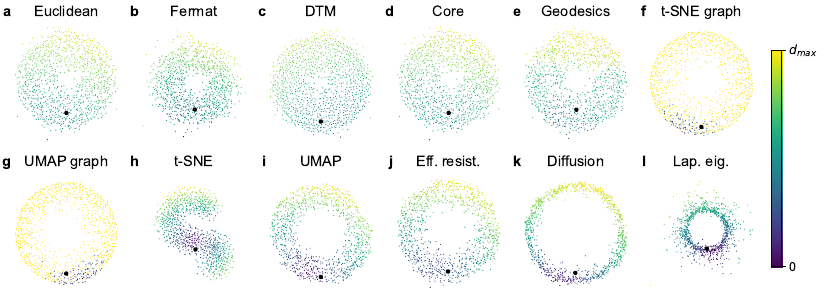}
    \caption{Visualization of all distances on the noisy circle in $\mathbb R^{50}$ with $\sigma=0.25$. The scatter plots are 2D multidimensional scaling embeddings using the respective distances. The colors indicate the distance to the highlighted point. For this random seed the $t$-SNE embedding tore the circle apart.}
    \label{fig:dists_mds}
\end{figure}

\paragraph{Laplacian Eigenmaps}
For an \sknn{} graph with $K$ connected components, we computed the $K+\tilde{d}$ eigenvectors $u_{1}, \dots, u_{K+\tilde{d}}$ of the normalized graph Laplacian $\lsym$ of the \sknn{} graph and discarded the first $K$ eigenvectors $u_{1},\dots, u_{K}$, which are coding for the connected components. Then we computed the Euclidean distances between the embedding vectors \mbox{$e^{\text{LE}}_i = (u_{K+1,i},\ldots,u_{(K+\tilde{d}), i})$.} We used $k=15$ and embedding dimensions $\tilde{d}\in\{2, 5, 10\}$. 

Alternatively, one can compute Laplacian Eigenmaps using the un-normalized graph Laplacian $L$. We tried this normalization for $\tilde{d}=2$ but obtained very similar embeddings.

\paragraph{Diffusion pseudotime} Diffusion pseudotime~\citep{haghverdi2016diffusion} also integrates the different time scales of the diffusion distance, but on the level of the transition matrices, rather than on the level of the distances themselves (Proposition~\ref{cor:diff_corr_eff_res}). There are multiple variants of diffusion pseudotime. They are all computed as  $ d^\textnormal{dpt}_{ij} = \| e_i^\textnormal{dpt} - e_j^\textnormal{dpt}\|$ for 
    \begin{align}
        e^\textnormal{dpt}_i = \left(\frac{1-\mu_2}{\mu_2} v_{2,i}, \dots,\frac{1-\mu_n}{\mu_n} v_{n,i} \right).
    \end{align}
The difference is the $v_1, \dots, v_n$'s. In the original publication~\citep{haghverdi2016diffusion}, they were the normalized eigenvectors of the random walker graph Laplacian $D^{-\frac{1}{2}}\lsym D^\frac{1}{2}$. These are given by $v_i = D^{-\frac{1}{2}}u_i  / \|D^{-\frac{1}{2}}u_i\|$. \citet{wolf2019paga} introduced a version using the eigenvectors of the symmetric graph Laplacian $\lsym$, so that $v_i= u_i$. Closest to the corrected resistance distance is the case where $v_i =D^{-\frac{1}{2}}u_i$ are non-normalized. We tested all three versions, to which we refer as ``rw'', ``sym'' and ``symd'' version. We used $k\in\{15, 100\}$ nearest neighbors.

\paragraph{Potential distance} The potential distance underlies the visualization method PHATE~\citep{moon2019visualizing} and is closely related to the diffusion distance. It is defined by 
$$d_t^{\text{pot}}(i, j) = \| \log(P^t_{i, :}) - \log(P^t_{j,:})\|,$$
where the logarithm is applied element-wise. We used $k\in\{15, 100\}$ and $t\in\{8, 64\}$.

For all methods, we replaced infinite distances with twice the maximal finite distance to be able to compute our hole detection scores.

We illustrate the distances used in the main benchmark in Figures~\ref{fig:dists_pca} and~\ref{fig:dists_mds}.

\section{Datasets}
\label{app:datasets}
\subsection{Synthetic datasets}
The synthetic, noiseless datasets with $n=1\,000$ points each are depicted in Figure~\ref{fig:toy_data}. Noised versions of the circle for ambient dimensions $d=2, 50$ are depicted in Figure~\ref{fig:circle_with_noise}.

\begin{figure}[t]
    \centering
    \includegraphics[width=\linewidth]{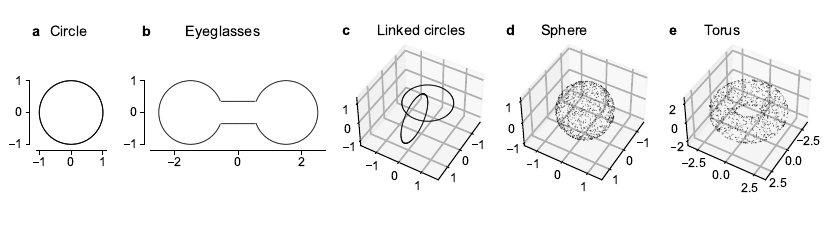}
    \caption{Synthetic, noiseless datasets with $n=1\,000$ points each.}
    \label{fig:toy_data}
\end{figure}

\begin{figure}[tb]
    \centering
    \includegraphics[width=\linewidth]{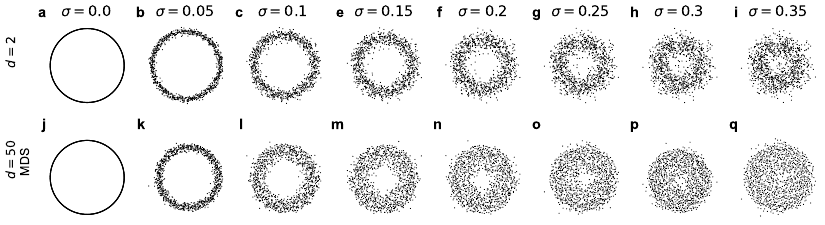}
    \caption{Circle with Gaussian noise of different standard deviation $\sigma$. \textbf{a\,--\,i.} Original data in ambient dimension $d=2$. \textbf{j\,--\,q.} Multidimensional scaling of the Euclidean distance of the data in ambient dimension $d=50$.}
    \label{fig:circle_with_noise}
\end{figure}

\paragraph{Circle}
The circle dataset consists of $n$ points equidistantly spaced along a circle of radius $r=1$.

\paragraph{Linked circles}
The linked circles dataset consists of two circle datasets of $n/2$ points each, arranged such that each circle perpendicularly intersects the plane spanned by the other and goes through the other's center.

\paragraph{Eyeglasses}
The eyeglasses dataset consists of four parts: Two circle segments of arclength $\pi + 2.4$ and radius $r=1$, centered $3$ units apart with the gaps facing each other. The third and fourth part are two straight line segments of length $1.06$, separated by $0.7$ units linking up the two circle segments. The circle segments consist of $0.425 n$ equidistantly distributed points each and the line segments consist of $0.075n$ equispaced points each. As the length scale of this dataset is dominated by the bottleneck between the two line segments, we only considered noise levels $\sigma \in [0, 0.15]$ for this dataset, as at this point the bottleneck essentially merges in $\mathbb R^2$.

\paragraph{Sphere}
The sphere dataset consists of $n$ points sampled uniformly from a sphere $S^2$ with radius $r=1$.

\paragraph{Torus}
The torus dataset consists of $n$ points sampled uniformly from a torus. The radius of the torus' tube was $r=1$ and the radius of the center of the tube was $R=2$. Note that we do not sample the points to have uniform angle distribution along the tube's and the tube center's circle, but uniform on the surface of the torus.

\paragraph{High-dimensional noise}
We mapped each dataset to $\mathbb{R}^d$ for $d\in[2, 50]$ using a random matrix $\mathbf V$ of size $d\times 2$ or $d\times 3$ with orthonormal columns, and then added isotropic Gaussian noise sampled from $\mathcal{N}(\mathbf 0, \sigma^2 \mathbf I_d)$ for $\sigma \in [0, 0.35]$. 

The orthogonal embedding in $\mathbb R^d$ does not change the shape of the data. The procedure is equivalent to adding $d-2$ or $d-3$ zero dimensions and then randomly rotating the resulting dataset in $\mathbb R^d$.

\subsection{Single-cell datasets}
We depict 2D embeddings of all single-cell datasets in Figure~\ref{fig:sc_embd}.

\paragraph{Malaria}
The Malaria dataset~\citep{howick2019malaria} consists of gene expression measurement of $5\,156$ genes obtained with the modified SmartSeq2 approach of~\citet{reid2018single} in \mbox{$n=1\,787$} cells from the entire life cycle of \textit{Plasmodium berghei}. The resulting transcripts were pre-processed with the trimmed mean of M-values method~\citep{robinson2010scaling}. We obtained the pre-processed data from \url{https://github.com/vhowick/MalariaCellAtlas/raw/v1.0/Expression_Matrices/Smartseq2/SS2_tmmlogcounts.csv.zip}. The data is licensed under the GNU GPLv3 licence.

The UMAP embedding shown in Figure~\ref{fig:malaria} follows the authors' setup and uses correlation distance as input metric, $k=10$ nearest neighbors, and a \texttt{min\_dist} of $1$ and \texttt{spread} of $2$. Note that when computing persistent homology with UMAP-related distances, we used our normal UMAP hyperparameters and never changed \texttt{min\_dist} or \texttt{spread}.

\paragraph{Neural IPCs}
The Neural IPC dataset~\citep{braun2023comprehensive} consists of gene expressions of \mbox{$n=26\,625$} neural IPCs from the developing human cortex. scVI~\citep{lopez2018deep} was used to integrate cells with different ages and donors based on the $700$ most highly variable genes, resulting in a $d=10$ dimensional embedding. \citet{braun2023comprehensive} shared this representation with us for a superset of $297\,927$ telencephalic exitatory cells and allowed us to share it with this paper (MIT License). We limited our analysis to the neural IPCs because they formed a particularly prominent cell cycle.

\paragraph{Neurosphere}
The Neurosphere dataset~\citep{zheng2022universal} consists of gene expressions for $n=12\,805$ cells from the mouse neurosphere. After quality control, the data was library size normalized and $\log_2$ transformed. Seurat was used to integrate different samples based on the first $30$ PCs of the top $2\,000$ highly variable genes, resulting in a $12\,805 \times 2\,000$ matrix of $\log_2$ transformed expressions. These were subsetted to the genes in the gene ontology (GO) term cell cycle (GO:0007049). The $500$ most highly variable genes were selected and a PCA was computed to $d=20$. The GO PCA representation was downloaded from~\url{https://zenodo.org/record/5519841/files/neurosphere.qs}. It is licensed under CC BY 4.0.

\paragraph{Hippocampus}
The Hippocampus dataset~\citep{zheng2022universal} consists of gene expressions for $n=9\,188$ mouse hippocampal NPCs. The pre-processing was the same as for the Neurosphere dataset. The GO PCA representation was downloaded from~\url{https://zenodo.org/record/5519841/files/hipp.qs}. It is licensed under CC BY 4.0.

\begin{figure}[t]
    \centering
    \includegraphics[width=\linewidth]{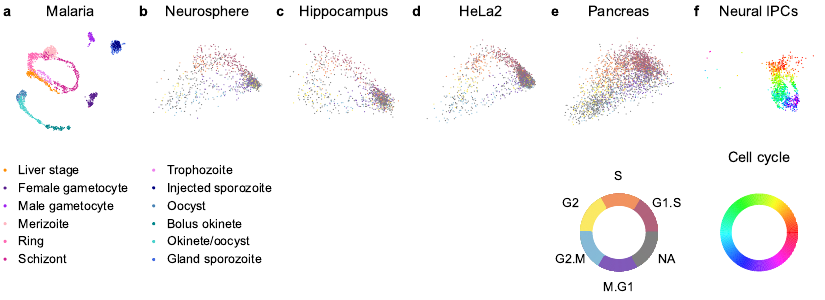}
    \caption{2D embeddings of all six single-cell datasets. \textbf{a, f.} UMAP embeddings of the Malaria~\citep{howick2019malaria} and the Neural IPC datasets~\citep{braun2023comprehensive}. We recomputed the embedding for the Malaria dataset using UMAP hyperparameters provided in the original publication, and subsetted an author-provided UMAP of a superset of telencephalic exitatory cells to the Neural IPC. The text legend refers to Malaria cell types. \textbf{b\,--\,e.} 2D linear projection constructed to bring out the cell cycle (`tricycle embedding')~\citep{zheng2022universal} of the Neurosphere, Hippocampus, HeLa2, and Pancreas datasets. We used the projection coordinates provided by~\citet{zheng2022universal}.}
    \label{fig:sc_embd}
\end{figure}
\paragraph{HeLa2}
The HeLa2 dataset~\citep{schwabe2020transcriptome, zheng2022universal} consists of gene expressions for $2\,463$ cells from a human cell line derived from cervical cancer. After quality control, the data was library size normalized and $\log_2$ transformed. From here the GO PCA computation was the same as for the neurosphere dataset. The GO PCA representation was downloaded from~\url{https://zenodo.org/record/5519841/files/HeLa2.qs}. It is licensed under CC BY 4.0.

\paragraph{Pancreas}
The Pancreas dataset~\citep{bastidas2019comprehensive, zheng2022universal} consists of gene expressions for $3\,559$ cells from the mouse endocrine pancreas. After quality control, the data was library size normalized and $\log_2$ transformed. From here the GO PCA computation was the same as for the neurosphere dataset. The GO PCA representation was downloaded from~\url{https://zenodo.org/record/5519841/files/endo.qs}. It is licensed under CC BY 4.0.

\section{Hyperparameter selection}
\label{sec:hyperparameters}
For each of the datasets and hole dimensions, we showed the result with the best hyperparameter setting. For the synthetic experiments, this meant the highest area under the hole detection score curve, while for the single-cell datasets it meant the highest loop detection score. Here, we give details of the selected hyperparamters.

For Figure~\ref{fig:1} we used effective resistance with $k=100$ as in Figure~\ref{fig:circle-benchmark}.

For Figure~\ref{fig:circle-benchmark} we specified the selected hyperparameters directly in the figure. For the density-based methods, they were $p=3$ for Fermat distances, $k=4, p=2, \xi=\infty$ for DTM, and $k=15$ for the core distance. For the graph-based methods, they were $k=100$ for the geodesics, $k=100$ for the UMAP graph affinities, and $\rho=30$ for $t$-SNE graph affinities. The embedding-based methods used $k=15$ for UMAP and $\rho=30$ for $t$-SNE. Finally, as spectral methods, we selected effective resistance with $k=100$, diffusion distance with $k=100, t=8$ and Laplacian Eigenmaps with $k=15, d=2$. 

The hyperparameters for Figure~\ref{fig:datasets-benchmark} are given in Table~\ref{tab:hyperparam_datasets}.

In Figure~\ref{fig:circle-dims} we specified the hyperparameters used. They were the same as for Figure~\ref{fig:circle-benchmark} save for the diffusion distance, for which we used $k=100, t=8$. This setting had better performance for $d=2$ and only marginally lower performance than $k=15, t=8$ in higher dimensionalities.

For Figure~\ref{fig:malaria}, we selected DTM with $k=15, p=\infty, \xi=\infty$, effective resistance with $k=15$ and diffusion distance with $k=15, t=64$. They are the same for the Malaria dataset in Figure~\ref{fig:scrnaseq}.

The selected hyperparameters for Figure~\ref{fig:scrnaseq} can be found in Table~\ref{tab:hyperparameters_scrnaseq}.

The hyperparameters for Figures~\ref{fig:dists_pca}
and~\ref{fig:dists_mds} are the same as those used in Figure~\ref{fig:circle-benchmark}. The hyperparameters for Figure~\ref{fig:widest_gap} are given in Table~\ref{tab:hyperparam_widest} and those for Figure~\ref{fig:outliers} in Table~\ref{tab:hyperparam_outliers}. All other supplementary figures either do not depend on hyperparameters or detail them directly in the figure or the caption.
\begin{table}[t]
    \centering
    \caption{The optimal hyperparameters that were  selected in Figure~\ref{fig:datasets-benchmark}. For torus and sphere, we consider the case of loop detection ($H_1$) and void detection ($H_2$) separately.}
    \begin{tabular}{lllll}
        \toprule
        Dataset & Fermat & DTM & Eff. res. & Diffusion\\
        \midrule
        Circle         & $p=3$ & $k=4, p=2, \xi=1$ & $k=100$ & $k=15, t=8$ \\
        Eyeglasses     & $p=7$ & $k=100, p=2, \xi=1$ & $k=15$  & $k=15, t=64$ \\
        Linked circles & $p=7$ & $k=15, p=\infty, \xi=1$ & $k=15$  & $k=15, t=8$ \\
        Torus $H_1$       & $p=2$ & $k=4, p=2, \xi=\infty$ & $k=100$ & $k=15, t=8$ \\
        Sphere $H_1$      & $p=2$ & $k=100, p=2,\xi=1$ & $k=15$  & $k=100, t=64$\\
        Torus $H_2$       & $p=2$ & $k=4, p=2, \xi=\infty$ & $k=100$ & $k=15, t=8$\\
        Sphere $H_2$      & $p=2$ & $k=4, p=2, \xi=1$ & $k=100$ & $k=100, t=8$ \\
        \bottomrule
    \end{tabular}
    \label{tab:hyperparam_datasets}
\end{table}

\begin{table}[t]
    \centering
    \caption{The optimal hyperparameters that were selected in Figure~\ref{fig:scrnaseq}. For DTM we report the best setting without thresholding (because none of the DTM runs passed our birth/death thresholding, so all $s_m$ scores for all parameter combinations are zero).}
    \begin{tabular}{lllllllll}
    \toprule
    Dataset & Fermat & DTM & $t$-SNE & UMAP & Eff. res. & Diffusion & Lap. Eig.\\ 
    \midrule
    Malaria     & $p=2$ & $k=15$       & $\rho=8$  & $k=15$  & $k=15$  & $k=15$  & $\tilde{d}=5$ \\
                &       & $p=\infty$   &           &         &         & $t=64$  &        \\
                &       & $\xi=\infty$ &           &         &         &         &        \\
    Neurosphere & $p=2$ & $k=100$      & $\rho=30$ & $k=999$ & $k=15$  & $k=15$  & $\tilde{d}=2$ \\
                &       & $p=2$        &           &         &         & $t=8$   &        \\
                &       & $\xi=2$      &           &         &         &         &        \\
    Hippocampus & $p=7$ & $k=100$      & $\rho=8$  & $k=15$  & $k=100$ & $k=15$  & $\tilde{d}=2$\\
                &       & $p=2$        &           &         &         & $t=8$   &        \\
                &       & $\xi=2$      &           &         &         &         &        \\
    Neural IPC  & $p=2$ & $k=4$        & $\rho=30$ & $k=15$  & $k=100$ & $k=15$  & $\tilde{d}=2$ \\
                &       & $p=\infty$   &           &         &         & $t=64$   &        \\
                &       & $\xi=\infty$ &           &         &         &         &        \\
    HeLa2       & $p=3$ & $k=4$        & $\rho=30$ & $k=100$ & $k=100$ & $k=100$ & $\tilde{d}=2$ \\
                &       & $p=2$        &           &         &         & $t=64$   &        \\
                &       & $\xi=\infty$ &           &         &         &         &        \\
    Pancreas    & $p=7$ & $k=100$      & $\rho=8$  & $k=100$ & $k=4$   & $k=15$  & $\tilde{d}=5$\\
                &       & $p=2$        &           &         &         & $t=64$  &        \\
                &       & $\xi=\infty$ &           &         &         &         &        \\
    \bottomrule
    \end{tabular}
    \label{tab:hyperparameters_scrnaseq}
\end{table}

\begin{table}[t]
    \centering
    \caption{The optimal hyperparameters that were  selected in Figure~\ref{fig:widest_gap}.}
    \begin{tabular}{lllll}
        \toprule
        Dataset & Fermat & DTM & Eff. res. & Diffusion\\
        \midrule
        Circle         & $p=2$ & $k=4, p=2, \xi=1$ & $k=100$ & $k=15, t=8$ \\
        Eyeglasses     & $p=7$ & $k=4, p=2, \xi=1$ & $k=15$  & $k=15, t=8$ \\
        Linked circles & $p=5$ & $k=4, p=2, \xi=1$ & $k=100$  & $k=15, t=8$ \\
        \bottomrule
    \end{tabular}
    \label{tab:hyperparam_widest}
\end{table}

\begin{table}[t]
    \centering
    \caption{The optimal hyperparameters that were  selected in Figure~\ref{fig:outliers}.}
    \begin{tabular}{llllll}
        \toprule
        Ambient & Number of & Fermat & DTM & Effective & Diffusion\\
        dimension & outliers && &resistance &\\
        \midrule
        2 & 0        & $p=2$ & $k=100, p=\infty, \xi=2$ & $k=100$ & $k=100, t=64$ \\
        2 & 50       & $p=7$ & $k=100, p=\infty, \xi=2$ & $k=100$ & $k=100, t=64$ \\
        2 & 100      & $p=7$ & $k=100, p=\infty, \xi=2$ & $k=100$ & $k=100, t=64$ \\
        2 & 200      & $p=7$ & $k=100, p=\infty, \xi=2$ & $k=100$ & $k=100, t=64$ \\
        50 & 0       & $p=3$ & $k=4, p=2,\xi=1$ & $k=100$ & $k=100, t=8$\\
        50& 50       & $p=2$ & $k=4, p=2,\xi=1$ & $k=100$ & $k=15, t=8$\\
        50 & 100     & $p=2$ & $k=4, p=2,\xi=1$ & $k=100$ & $k=15, t=8$ \\
        50 & 200     & $p=2$ & $k=4, p=2,\xi=1$ & $k=100$ & $k=15, t=64$ \\
        \bottomrule
    \end{tabular}
    \label{tab:hyperparam_outliers}
\end{table}

\section{Hyperparameter sensitivity}\label{sec:hyperparam_sensitivity}

While all distances other than the Euclidean distance have hyperparameters and we show results for the best hyperparameter setting in most figures, hyperparameter selection does not pose a serious problem for diffusion distance and effective resistance. We only tuned hyperparameters very mildly for these methods (4 settings for diffusion distances, only 2 for effective resistance, as opposed to 24 for the competing method DTM). Moreover, we conduced a more fine-grained sensitivity analysis and found the performance of diffusion distances and effective resistances to be robust to the exact value of their hyperparameters. For this sensitivity analysis we computed the area under the noise-level / detection score curves for the circle, the interlinked circles and the eyeglasses dataset in $50$ ambient dimensions with $k=4, 15,30,45,60,75,90,105,120,135,150$ nearest neighbors and $t=2,4,8,16,32,64,128,256$ diffusion steps. For all but the most extreme hyperparameter values both effective resistance and diffusion distances strongly outperformed the Euclidean distance  (Figure~\ref{fig:hyperparam_sensitivity}). We observed a trade-off between the optimal number of neighbors $k$ and diffusion steps $t$ for the diffusion distance, since both control how fast the diffusion can progress across the dataset.

\begin{figure}
    \centering
    \includegraphics[width=\linewidth]{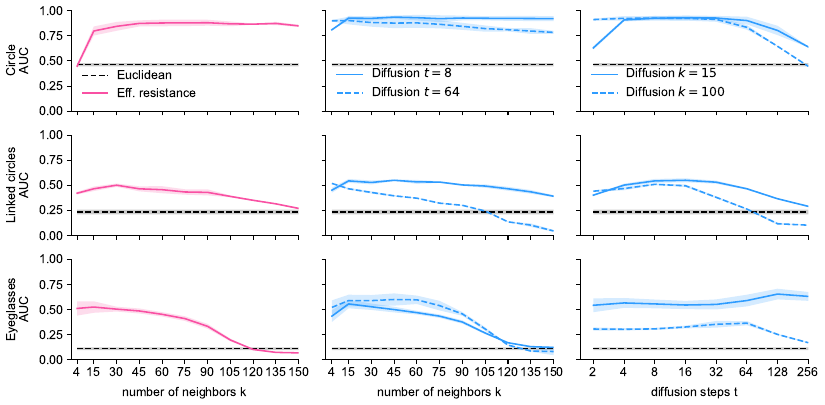}
    \caption{Diffusion distances and effective resistance are robust to their hyperparameters and outperform the Euclidean distance for most choices. We depict the area under the noise-level / detection score curve for the three 1D datasets circle, linked circles and eyeglasses in $\mathbb{R}^{50}$.}
    \label{fig:hyperparam_sensitivity}
\end{figure}

\section{Implementation details}
\label{app:implementation}

We computed persistent homology using the \texttt{ripser}~\citep{bauer2021ripser} project's \texttt{representative-cycles} branch at commit \texttt{140670f} to compute persistent homologies and representative cycles. We used coefficients in $\mathbb{Z} / 2\mathbb{Z}$. To compute $k$NN graphs, we used the PyKeops package~\citep{pykeops}. The rest of our implementation is in Python. Our code is available at \url{https://github.com/berenslab/eff-ph/tree/neurips2024}.

Our experiments were run on a machine with an Intel(R) Xeon(R) Gold 6226R CPU @ 2.90GHz with 64 kernels, 377GB memory, and an NVIDIA RTX A6000 GPU. The persistent homology computations only ever used a single kernel.

Our benchmark consisted of many individual experiments. We explored $47$ hyperparameter settings across all distances, computed results for $3$ random seeds and $29$ noise levels $\sigma$. In the synthetic benchmark, we computed only 1D persistent homology for $3$ datasets and both 1D and 2D persistent homology of $2$ more datasets. So the synthetic benchmark with ambient dimension $d=50$ alone consisted of $12\,267$ computations of 1D persistent homology and $8\, 178$ computations of both 1D and 2D persistent homology. 

The run time of persistent homology vastly dominated the time taken by the distance computation. The persistent homology run time depended most strongly on the sample size $n$, the dataset, and on the highest dimensionality of holes. The difference between distances was usually small. However, we observed that there were some outliers, depending on the noise level and the random seed, that had much longer run time. Overall, we found that methods that produce many pairwise distances of the same value (e.g., because of infinite distance in the graph affinities or maximum operations like for DTM with $p=\infty, \xi=\infty$) often had a much longer run time than other settings. We presume this was because equal distances led to many simplices being added to the complex at the same time. We give exemplary run times in Table~\ref{tab:run_times}. 

As a rough estimate for the total run time, we extrapolated the run times for the circle to all 1D persistent homology experiments for ambient dimension $d=50$ and the times for the sphere to all 2D experiments. In both cases we took the mean between the noiseless ($\sigma=0$) and highest noise ($\sigma=0.35$) setting in Table~\ref{tab:run_times}. This way, we estimated a total sequential run time of about $60$ days, but we parallelized the runs.

\begin{table}[t]
    \centering
    \caption{Exemplary run times in seconds.}
    \begin{tabular}{llllccc}
        \toprule
         Dataset& n&$\sigma$& Distance & Feature dim& Time distance [s] & Time PH [s]\\
         \midrule
         Circle &$1\,000$ & $0.0$ & Euclidean & $1$ & $0.013\pm 0.002$ & $12.3 \pm 0.4$\\
         Circle &$1\,000$& $0.0$ & Eff. res $k=100$ & $1$ & $0.17\pm 0.04$ & $12.0\pm0.2$\\
         Circle &$2\,000$& $0.0$  & Euclidean & $1$ & $0.09\pm0.04$ & $117\pm 9$\\
         Sphere &$1\,000$& $0.0$ & Euclidean & $1$ & $0.012\pm0.001$ & $1.31\pm0.06$\\
         Sphere &$1\,000$& $0.0$ & Euclidean & $2$ & $0.017\pm0.002$ & $4687\pm2501$\\
         Circle &$1\,000$ & $0.35$ & Euclidean & $1$ & $0.016\pm 0.001$ & $5 \pm 2$\\
         Sphere &$1\,000$ & $0.35$ & Euclidean & $2$ & $0.03\pm 0.02$ & $258 \pm 18$\\
         \bottomrule
    \end{tabular}

    \label{tab:run_times}
\end{table}

\section{Effect of outliers}
\label{sec:outliers}
Persistent homology with the Euclidean distance is known to be sensitive to outliers. Methods such as DTM were introduced to handle this issue. Here we show that spectral methods can also handle outliers well. Moreover, in high ambient dimensionality outliers are distributed over the large volume and hence are very sparse, making them less of a problem.

We experimented with the noisy circle with $n=1\,000$ points in ambient $\mathbb R^d$ for $d=2, 50$ and added $50$, $100$, or $200$ outlier points. These were sampled uniformly from axis-aligned cubes around the data in ambient space. The size of the cube was set just large enough that it contained the data even with the strongest added Gaussian noise. 

In low dimensionality, Euclidean distance suffered in the low Gaussian noise setting already when only $50$ outliers were added.
Adding $100$ or $200$ outliers severely lowered the detection score for Euclidean distance across the entire range of Gaussian noise strength. Fermat distances suffered from high random seed variability when adding outliers in low ambient dimension. Diffusion distance and effective resistance were much more outlier-resistant than the Euclidean distance and were only affected by $200$ outliers. Even then they performed better than the Euclidean distance without any outliers. DTM excelled in this setting, being completely insensitive to outliers and achieving top score for all noise levels (Figure~\ref{fig:outliers}a\,--\,d). 

The volume of the bounding box in $d=50$ ambient dimensions is much larger and thus the same number of outlier points are distributed much more sparsely. In particular, it is much less likely that an outlier happens to fall into the middle of the circle. As a result, even Euclidean and Fermat distances were very outlier-robust in $d=50$ ambient dimensions (Figure~\ref{fig:outliers}e\,--\,g). Similarly, DTM's performance did not change at all in the face of outliers. However, all three methods suffered strongly from the high-dimensional Gaussian noise. 

As diffusion distance and effective resistance in our implementation rely on the unweighted $k$NN graph, they were somewhat more susceptible to outliers. Performance of diffusion distance decreased steadily with the number of outliers in high ambient dimension. When outliers were present, it performed worse than the Euclidean distance in the low Gaussian noise setting, but much better in the high Gaussian noise setting, even for $200$ outliers. Effective resistance performed best overall, deteriorating only slightly in the low Gaussian noise setting when outliers were added. Both spectral methods clearly outperformed other methods in the high-Gaussian noise regime even in the presence of numerous outliers.

To sum up, effective resistance (and to a lesser extent diffusion distance) can handle both outliers and high-dimensional Gaussian noise, while other methods can handle at most one type of noise.

\begin{figure}
    \centering
    \includegraphics[width=\linewidth]{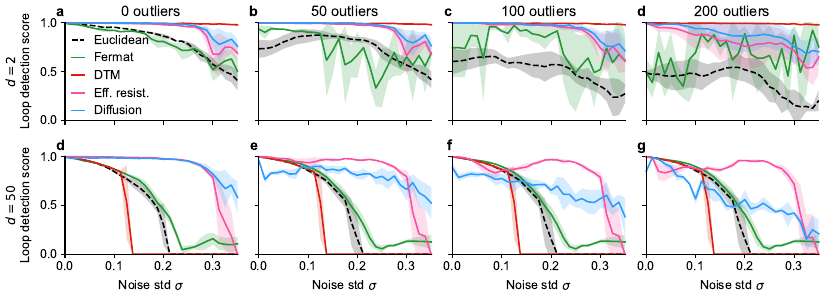}
    \caption{Loop detection performance of various methods on the noisy circle in the presence of outliers in low- and high-dimensional ambient space. Outliers were sampled uniformly from an axis-aligned cube around the data. \textbf{a\,--\,d.}~In low ambient dimension ($d=2$) adding outliers hurt the performance of the Euclidean and Fermat distances, but barely affected the performance of the spectral methods and not at all DTM's excellent performance. \textbf{e\,--\,g.}~In high ambient dimensionality ($d=50$) outliers did not further decrease the weak performance of non-spectral methods. Diffusion distance was somewhat outlier-sensitive, but could still detect the loop structure in the high Gaussian noise setting. Effective resistance performed best overall and was not very outlier-sensitive.
    }
    \label{fig:outliers}
\end{figure}

\section{UMAP with higher embedding dimension}
\label{sec:umap_high_embd_dim}
Following the original publication~\citep{mcinnes2018umap}, UMAP is typically used to embed data into two dimensions. This is an obvious issue for datasets sampled from manifolds which are not embeddable into two dimensions, such as the sphere or the torus. Therefore, we also experimented with the less common approach of higher embedding dimensionality ($3$, $5$, $10$) for UMAP. Save for the successful detection of the sphere's void with at least three embedding dimensions, we observed few consistent effects of the embedding dimension either on the toy (Figure~\ref{fig:highD_umap}) or on the real data (Figure~\ref{fig:scRNAseq_UMAPs}). Nevertheless, on the linked circles dataset, which is not embeddable into two dimensions, we saw a small improvement for UMAP when going beyond two embedding dimensions for the low noise setting.

Independent of the embedding dimension, UMAP struggled in the low-noise setting on the eyeglasses dataset for $k=15$. Moreover, we observed very poor performance both for loop and void detection on the torus. We believe the reasons may be UMAP's tendency to over-fragment the manifold~\citep{damrich2021umap} and UMAP's use of a heavy-tailed kernel. We visualized three-dimensional UMAP embeddings of the noiseless eyeglasses, sphere, and torus for $k=15$ in Figure~\ref{fig:umap_vis_dim_3}. UMAP left a gap in the embedding of the eyeglasses dataset, so that the true loop only had the second highest persistence. Short-cutting at the bottleneck yielded the most persistent loop. For the torus, the surface of the embedding was very fragmented, preventing the detection of any void. While the main loop of the torus was detected well, the second most persistent detected loop was already in the noise cloud as the fragmented surface of the embedding allowed for many fairly persistent loops. In a similar way, the surface of the sphere's embedding got fragmented, leading to many loops. Our score does not penalize this, because the detection score for $m=1$ loop is low as there is nearly no gap in persistence between the most and the second-most persistent loop. Higher levels of noise and also higher $k$ could overcome UMAP's over-fragmentation tendency for the eyeglasses dataset. This may be due to high-dimensional noise better matching UMAP's heavy-tailed kernel. However, it did not resolve the over-fragmentation  for the torus. 

The runtime of UMAP scales linearly with the embedding dimension, but typical $t$-SNE implementations scale exponentially. In fact, the implementation we used here, openTSNE~\citep{polivcar2024opentsne}, does not implement embedding dimensions higher than two, which is why we explored higher-dimensional embeddings only for UMAP.

\begin{figure}
    \centering
    \includegraphics[width=\linewidth]{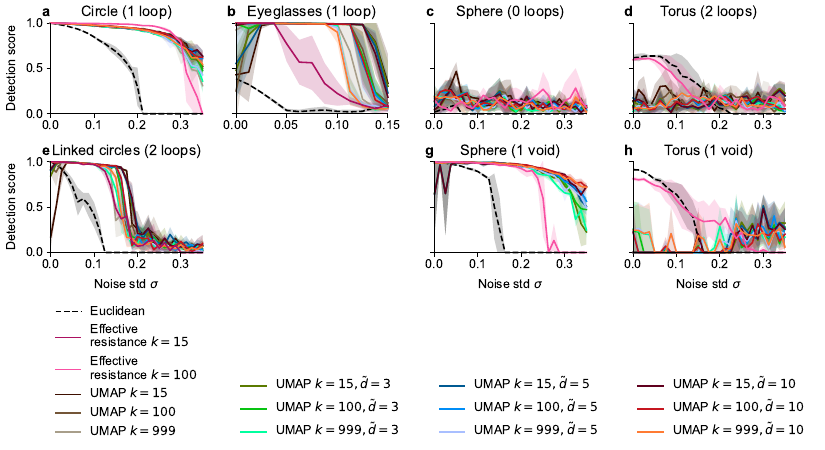}
    \caption{Hole detection scores in UMAP embeddings of the toy datasets in different embedding dimensions. Changing the embedding dimension only has a small effect.}
    \label{fig:highD_umap}
\end{figure}

\begin{figure}
    \centering
    \includegraphics[width=\linewidth]{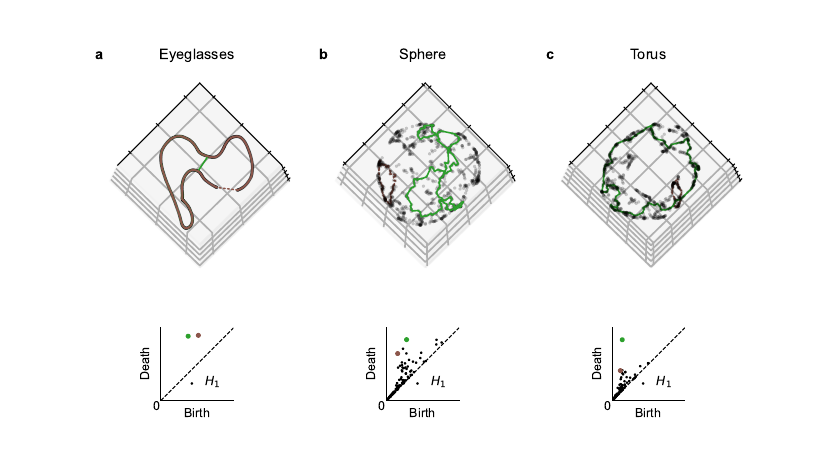}
    \caption{Exemplary UMAP embeddings with $k=15$ of the noiseless eyeglasses dataset, the sphere, and the torus into three dimensions. We show the persistence diagrams for loops and highlight the two most persistent loops and superimposed them on the embedding. We see strong over-fragmentation of the surfaces that challenges the loop detection and in the case of the torus the void detection (persistence diagram for the voids not shown).}
    \label{fig:umap_vis_dim_3}
\end{figure}

\begin{figure}
    \centering
    \includegraphics[width=0.97\linewidth]{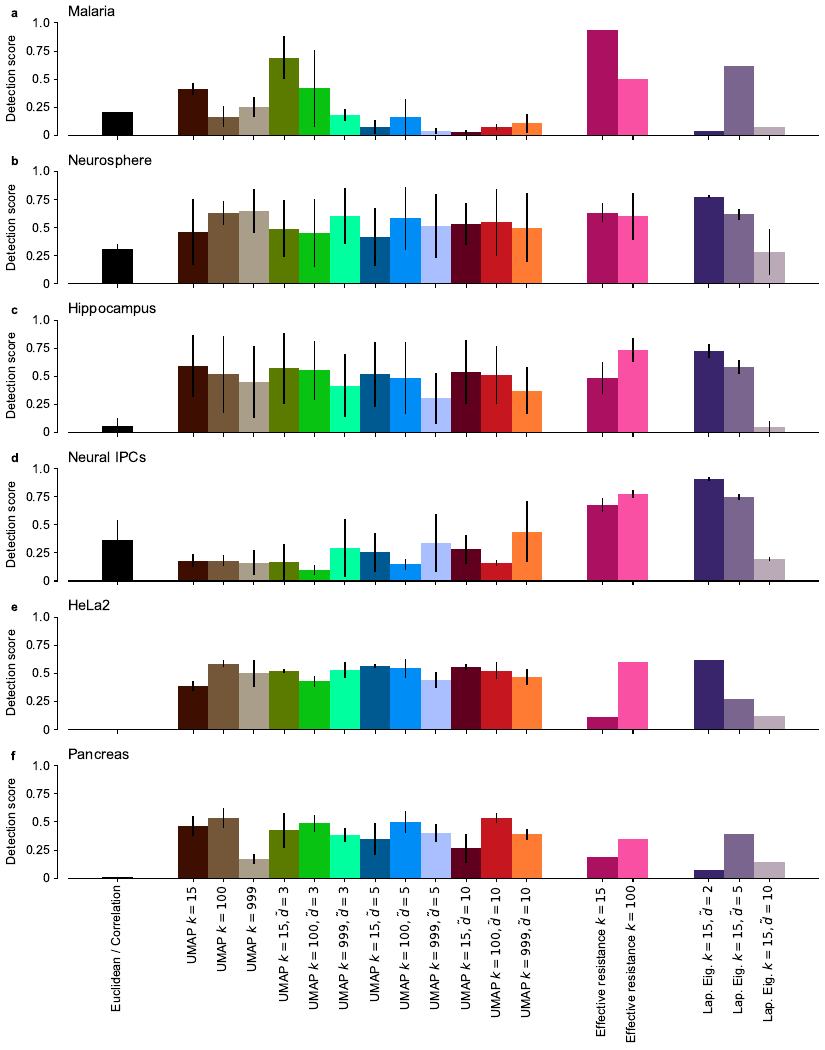}
    \caption{Detection scores for UMAP with different embedding dimensions on the single-cell datasets with some other methods for reference. Changing the embedding dimension did not have a consistent effect on the detection scores, while higher embedding dimension usually hurt for Laplacian Eigenmaps.}
    \label{fig:scRNAseq_UMAPs}
\end{figure}

\clearpage
\section{Additional figures}

\begin{figure}[h!]
    \centering
    \includegraphics[width=\linewidth]{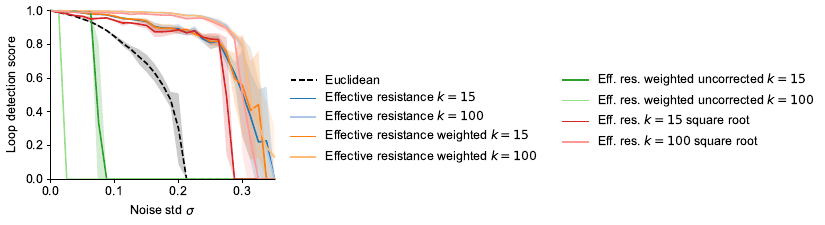}
    \caption{Loop detection score on noisy $S^1\subset \mathbb R^{50}$ for various versions of effective resistance. There was little difference between using the weighted $k$NN graph, unweighted $k$NN graph, and using the square root of effective resistance based on the unweighted $k$NN graph. The latter got filtered out for high noise levels. Using $k=100$ instead of $k=15$ helped only marginally in this dataset. The uncorrected (naive) version of effective resistance collapsed already at very small noise levels.}
    \label{fig:eff_res_comparison}
\end{figure}

\begin{figure}[h!]
    \centering
    \includegraphics[width=4in]{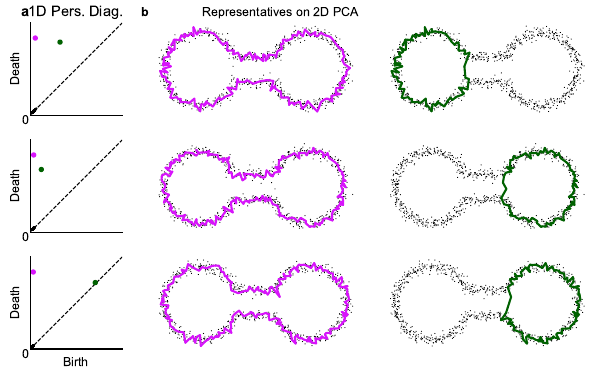}
    \caption{Illustration for the random seed variability of effective resistance with $k=15$ on the noisy eyeglasses dataset in $\mathbb R^{50}$ with $\sigma=0.075$. This refers to Figure~\ref{fig:datasets-benchmark}b. \textbf{a.} One-dimensional persistence diagrams for three random seeds. \textbf{b.} Representatives of the most two most persistent features superimposed on a 2D PCA of the dataset. These always corresponded to the full shape and one of the two circle segments. For the first two random seeds, some points are distorted in such a way that they form a bridge in the 2D PCA, while in the third there is not such bridge and the second most persistent feature is much less persistent. Note that this is just a 2D PCA, in particular, much of the noise in 50D is not visible. A similar explanation applies for the diffusion distance in  Figure~\ref{fig:datasets-benchmark}b.}
    \label{fig:eyeglasses_stability}
\end{figure}

\begin{figure}
    \centering
    \includegraphics[width=3.5in]{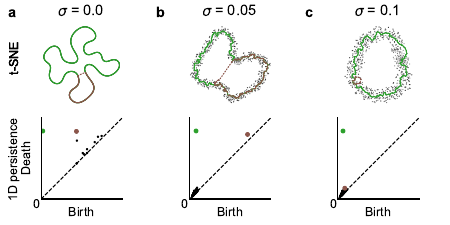}
    \caption{$t$-SNE embeddings with perplexity $\rho = 8$ and 1D persistence diagrams of the embedding for a circle in ambient $\mathbb R^{50}$ with Gaussian noise of low standard deviation $\sigma$. Perplexity $\rho=8$ is rather small, such that each embedding point only feels attraction to very few other points. In the noiseless setting this very sparse attraction is only among immediate neighbors along the circle. This makes the embedding to have spurious curves. For higher noise, the sparse attraction pattern is less regular and less local such that the spurious curves disappear. The more spurious curves the embedding has, the more high persistent features, given by bottlenecks in the curvy embedding, exist. This explains the dip for the $t$-SNE $\rho=8$ curve in  Figure~\ref{fig:circle_many_methods_d_50}g.}
    \label{fig:circle_tsne_embd}
\end{figure}

\begin{figure}
    \centering
    \includegraphics[width=\linewidth]{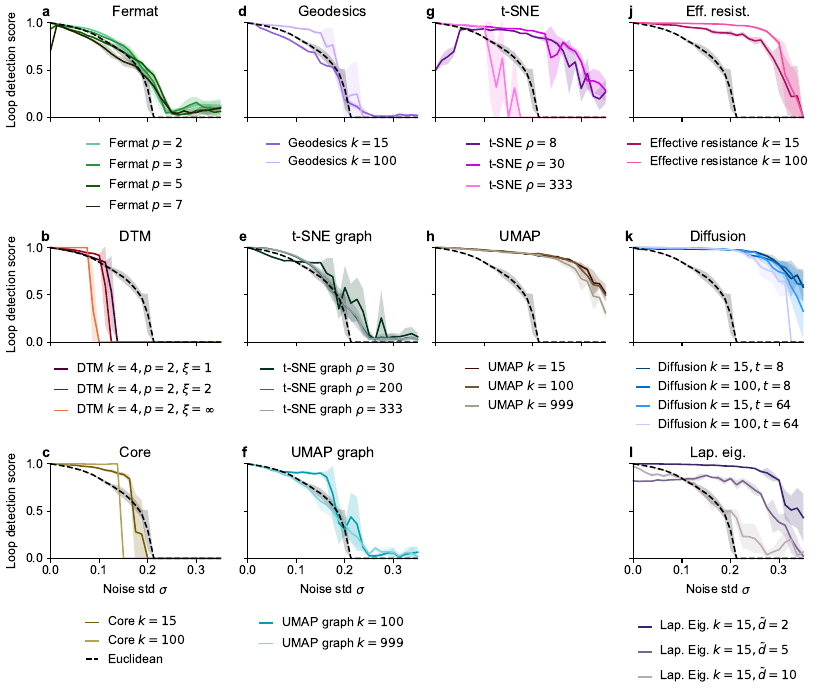}
    \caption{Loop detection score for persistent homology with various distances on a noisy circle in ambient $\mathbb R^{50}$. Extension of Figure~\ref{fig:circle-benchmark}. Spectral and embedding methods performed best. The reason for the dip for the low-perplexity $t$-SNE embedding is depicted in Figure~\ref{fig:circle_tsne_embd}.}
    \label{fig:circle_many_methods_d_50}
\end{figure}

\begin{figure}
    \centering
    \includegraphics[width=\linewidth]{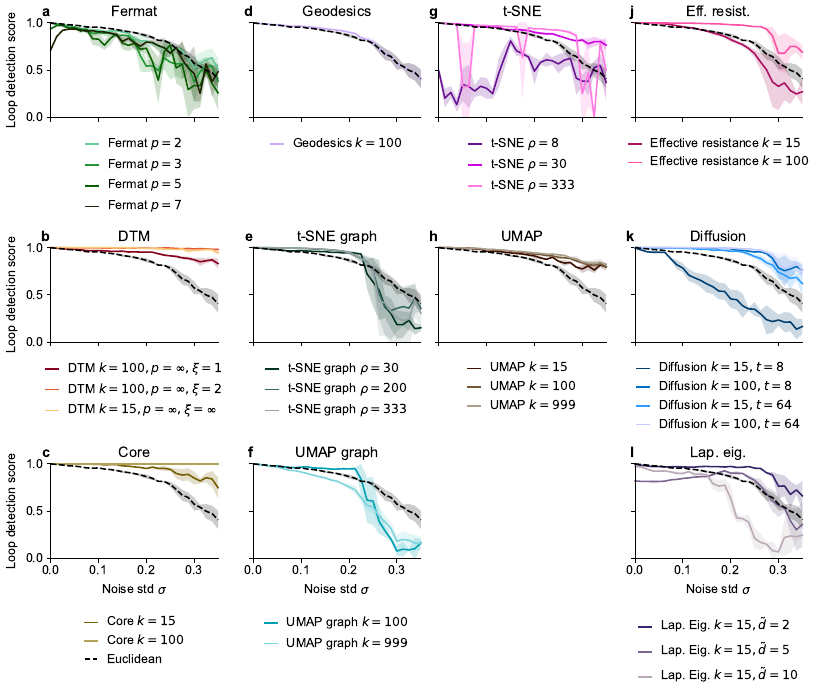}
    \caption{Loop detection score for persistent homology with various distances on a noisy circle in ambient $\mathbb R^{2}$. Our code for finding the geodesics for $k=15$ did not terminate. Nearly all methods performed near perfectly for most noise levels. Note the striking difference to the 50D setting in Figure~\ref{fig:circle_many_methods_d_50}.}
    \label{fig:circle_many_methods_d_2}
\end{figure}

\begin{figure}
    \centering
    \includegraphics[width=\linewidth]{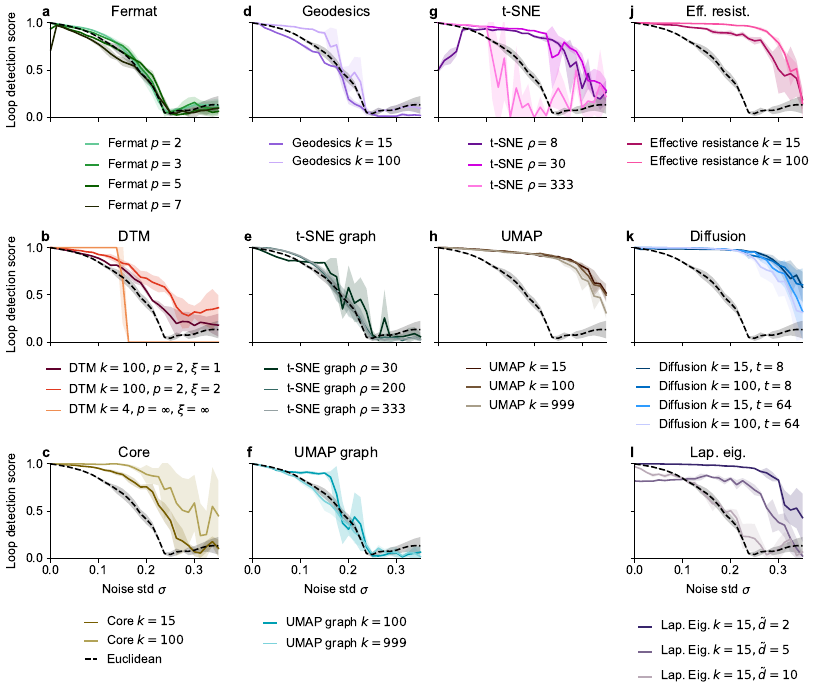}
    \caption{Loop detection score for persistent homology with various distances on a noisy circle in ambient $\mathbb R^{50}$. No thresholding was used for this figure, in contrast to Figure~\ref{fig:circle_many_methods_d_50}. Without thresholding, DTM had better performance, but not much beyond the level of Euclidean distance. Several issues such as high random seed variability for Core $k=100$, $t$-SNE $\rho=333$ and artifactually increasing performance for several methods at very high noise levels can be visible here; this is why we used the thresholding procedure in the main text.}
    \label{fig:circle_many_methods_d_50_no_filter}
\end{figure}

\begin{figure}
    \centering
    \includegraphics[width=\linewidth]{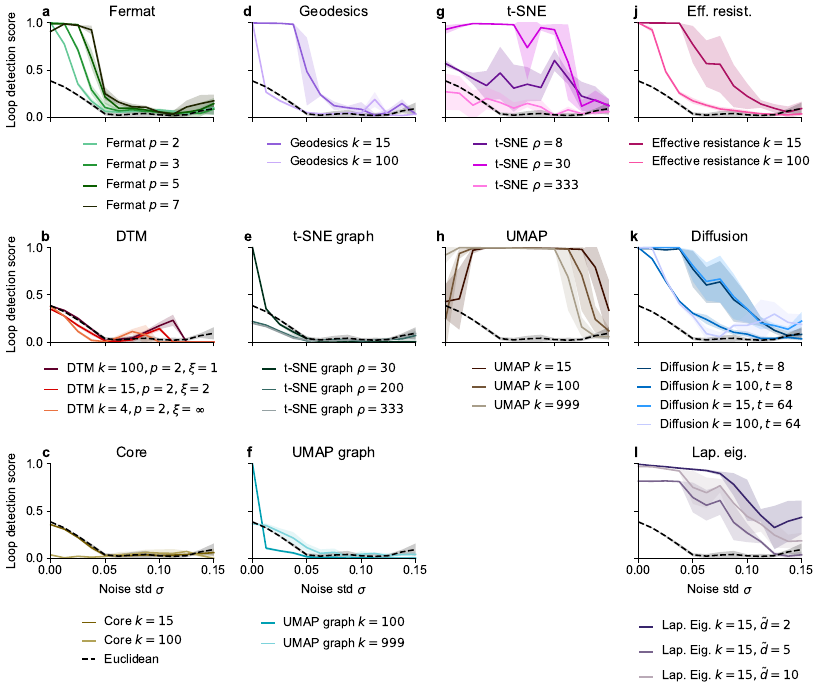}
    \caption{Loop detection score for persistent homology with various distances on the noisy eyeglasses dataset in ambient $\mathbb R^{50}$. Only Fermat distance, geodesics, $t$-SNE and UMAP, and spectral methods outperformed the Euclidean distance, but UMAP struggled in the low noise setting. The reason for the high random seed variability for effective resistance with $k=15$ is depicted in Figure~\ref{fig:eyeglasses_stability}.}
    \label{fig:eyeglasses_many_methods_d_50}
\end{figure}

\begin{figure}
    \centering
    \includegraphics[width=\linewidth]{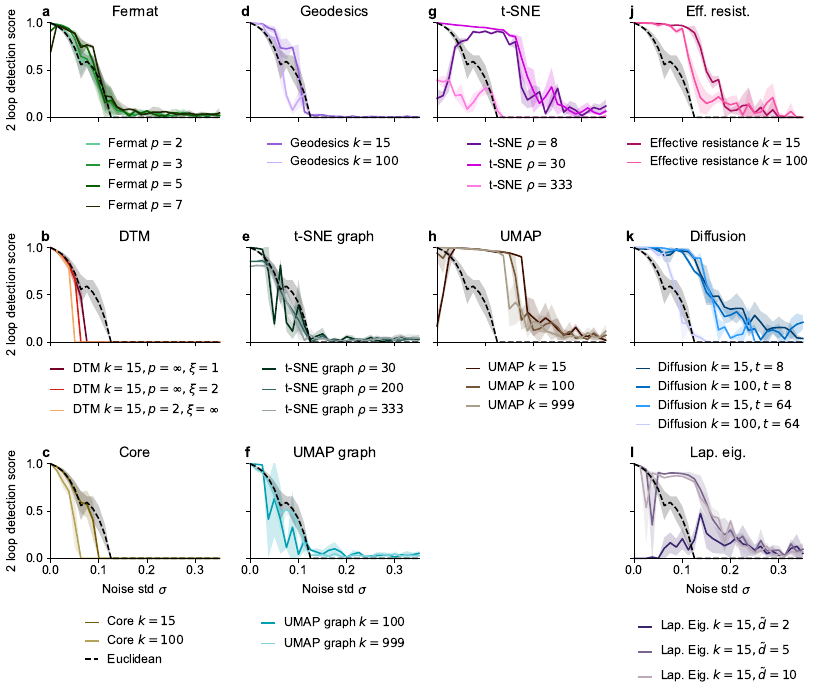}
    \caption{2-loop detection score for persistent homology with various distances on two interlinked circles in ambient $\mathbb R^{50}$. Spectral and embedding methods performed best, but the latter sometimes had issues in the low noise setting.}
    \label{fig:inter_circles_many_methods_d_50}
\end{figure}

\begin{figure}
    \centering
    \includegraphics[width=\linewidth]{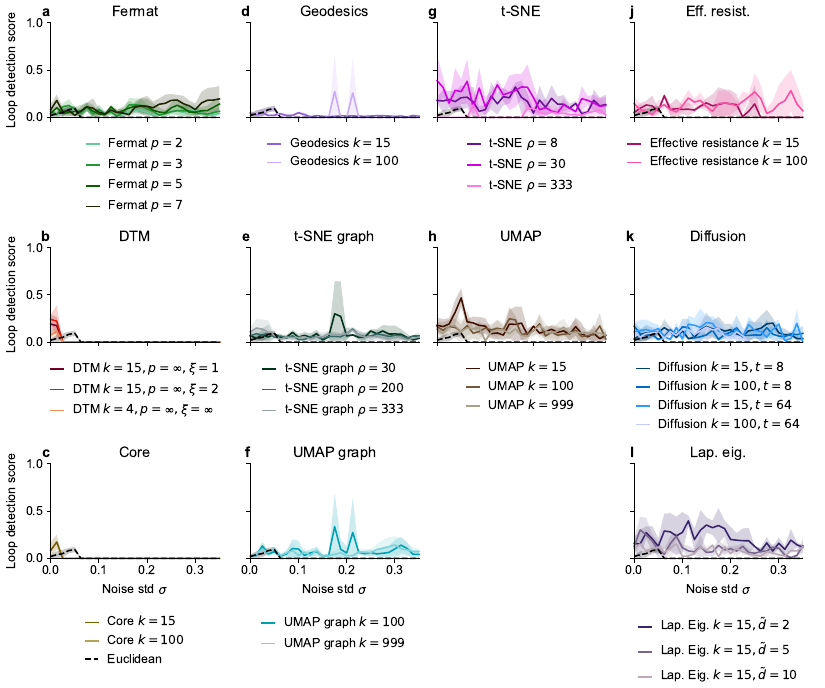}
    \caption{Loop detection score for persistent homology with various distances on a noisy sphere in ambient $\mathbb R^{50}$. Most methods passed this negative control.}
    \label{fig:sphere_many_methods_d_50_1D}
\end{figure}

\begin{figure}
    \centering
    \includegraphics[width=\linewidth]{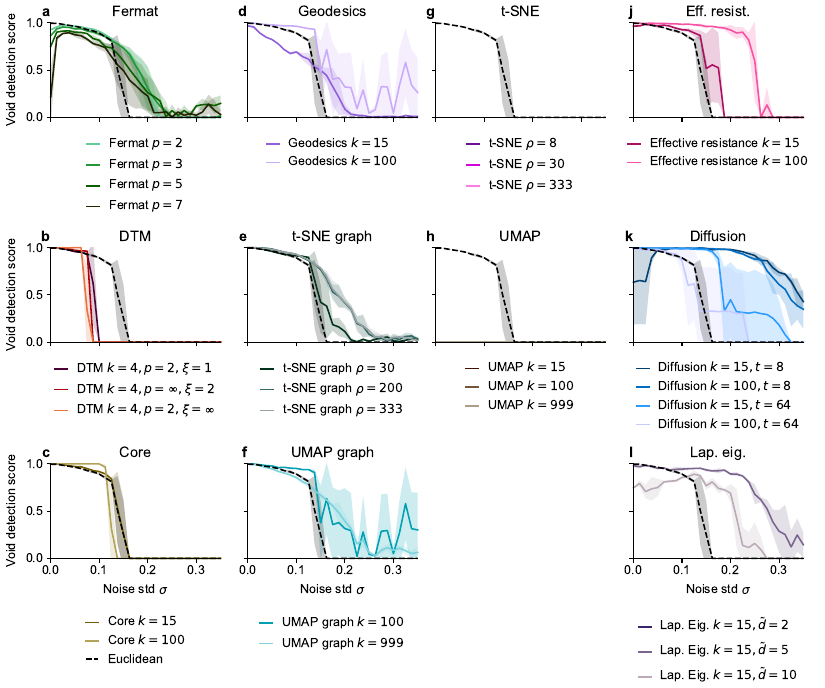}
    \caption{Void detection score for persistent homology with various distances on a noisy sphere in ambient $\mathbb R^{50}$. Methods relying on 2D embeddings did not find the loop for any noise level. Spectral methods performed best.}
    \label{fig:sphere_many_methods_d_50_2D}
\end{figure}

\begin{figure}[tb]
    \centering
    \includegraphics[width=\linewidth]{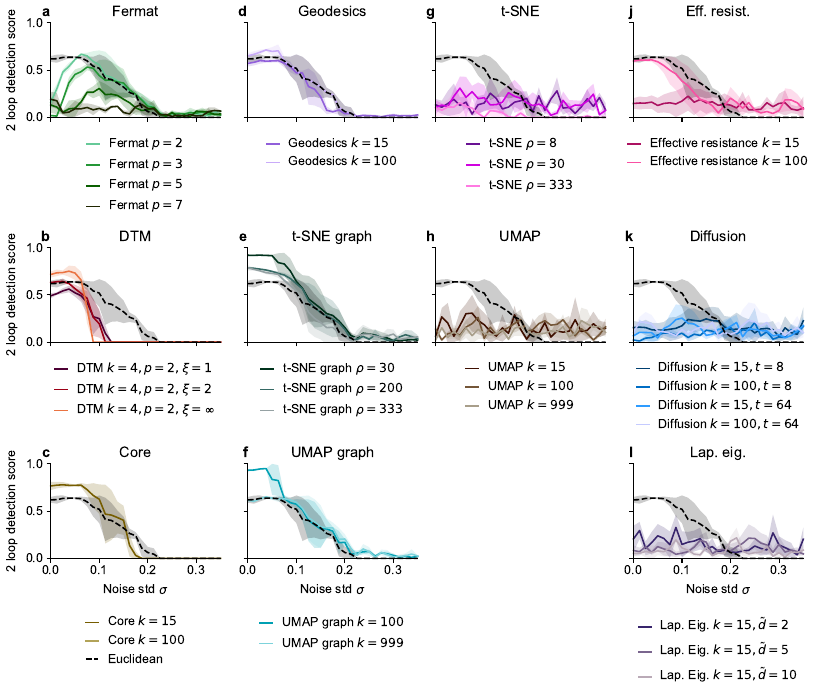}
    \caption{2-loop detection score for persistent homology with various distances on a noisy torus in ambient $\mathbb R^{50}$. All methods struggled here, and only DTM, core, $t$-SNE graph and UMAP graph improved noticeably over the Euclidean distance. On a denser sampled torus effective resistance and diffusion distance outperformed other methods (Figure~\ref{fig:torus_vary_n}). Using fewer diffusion steps improved the performance on the torus (Figure~\ref{fig:torus_diff}).}
    \label{fig:torus_many_methods_d_50_1D}
    \end{figure}

\begin{figure}[tb]
    \centering
    \includegraphics[width=\linewidth]{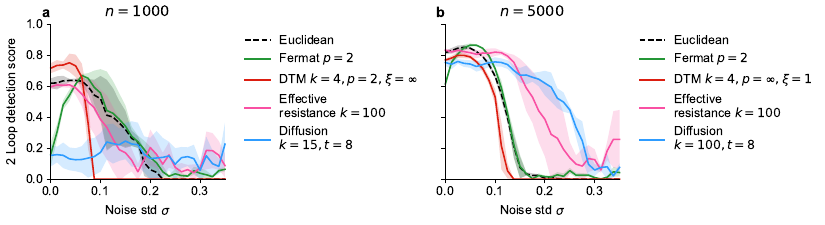}
    \caption{2-loop detection score for persistent homology with various distances on a noisy torus with different sample size $n$. For more points, all methods performed better as the shape of the torus gets sampled more densely. The difference in performance is particularly striking for the spectral methods which outperformed the others for $n=5\,000$ points, but did not for $n=1\,000$.}
    \label{fig:torus_vary_n}
\end{figure}

\begin{figure}[tb]
    \centering
    \includegraphics[width=\linewidth]{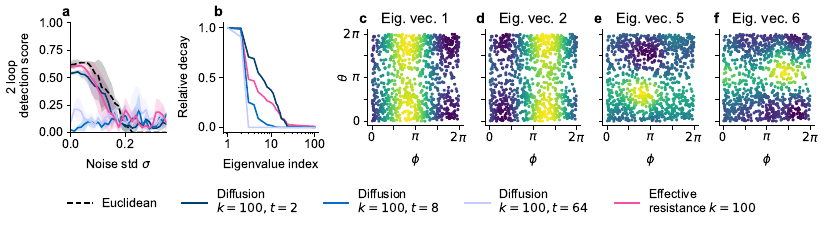}
    \caption{Diffusion distances failed on the torus with $n=1000$ because its eigenvalue decay suppressed the relevant eigenvectors. \textbf{a.} 2-loop detection score for the torus in $d=50$ ambient dimensions. Diffusion distances with $t=2$ diffusion steps were on par with effective resistance and Euclidean distance. \textbf{b.} Decay of eigenvalues in various spectral distances on the noiseless torus. Diffusion distances with $t=8,64$ only had contribution below $0.1$ for the fifth and sixth eigenvectors, while effective resistance and diffusion distance with $t=2$ had substantial contributions from the first ${\sim}10$ eigenvectors. \textbf{c\,--\,f.} Eigenvectors of the symmetric graph Laplacian of a symmetric $100$-nearest-neighbor graph of the noiseless torus. Coordinates are the angles of each point along ($\phi$) and around ($\theta$) the tube of the torus. The loop along the tube is encoded in the first two eigenvectors, the loop around the tube in the fifth and sixth eigenvectors.}
    \label{fig:torus_diff}
\end{figure}

\begin{figure}
    \centering
    \includegraphics[width=\linewidth]{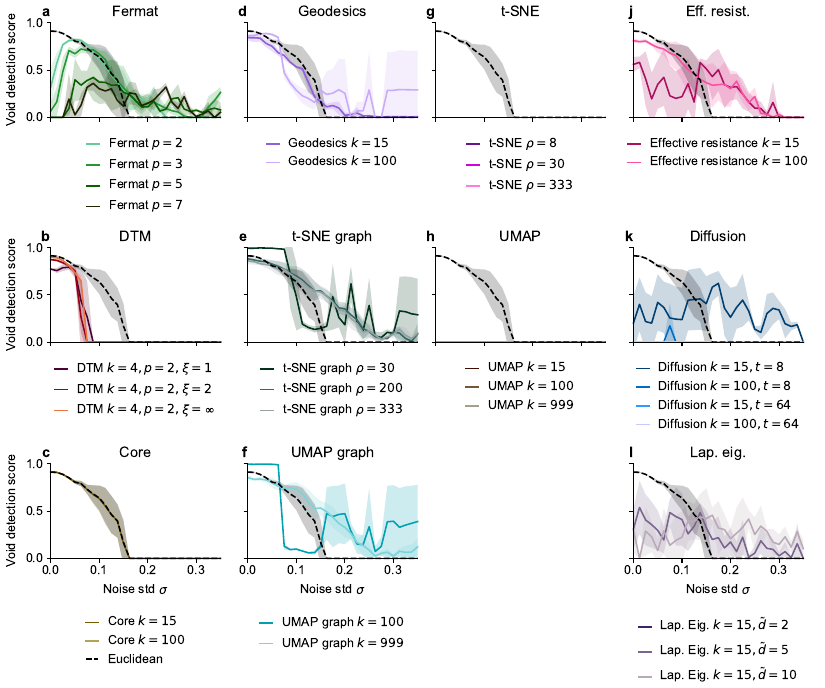}
    \caption{Void detection score for persistent homology with various distances on a noisy torus in ambient $\mathbb R^{50}$. Methods relying on 2D embeddings did not find the void for any noise level. Only $t$-SNE graph and UMAP graph could reliably improve above the Euclidean distance and only for low noise levels. However, they had unstable behavior for higher noise levels, resulting in high uncertainties. We suspect that a higher sampling density would benefit effective resistance and diffusion distance (as we saw for loop detection in Figure~\ref{fig:torus_vary_n}), but the computational complexity of persistent homology makes such experiments difficult.}
    \label{fig:torus_many_methods_d_50_2D}
\end{figure}

\begin{figure}
    \centering
    \includegraphics[width=0.925\linewidth]{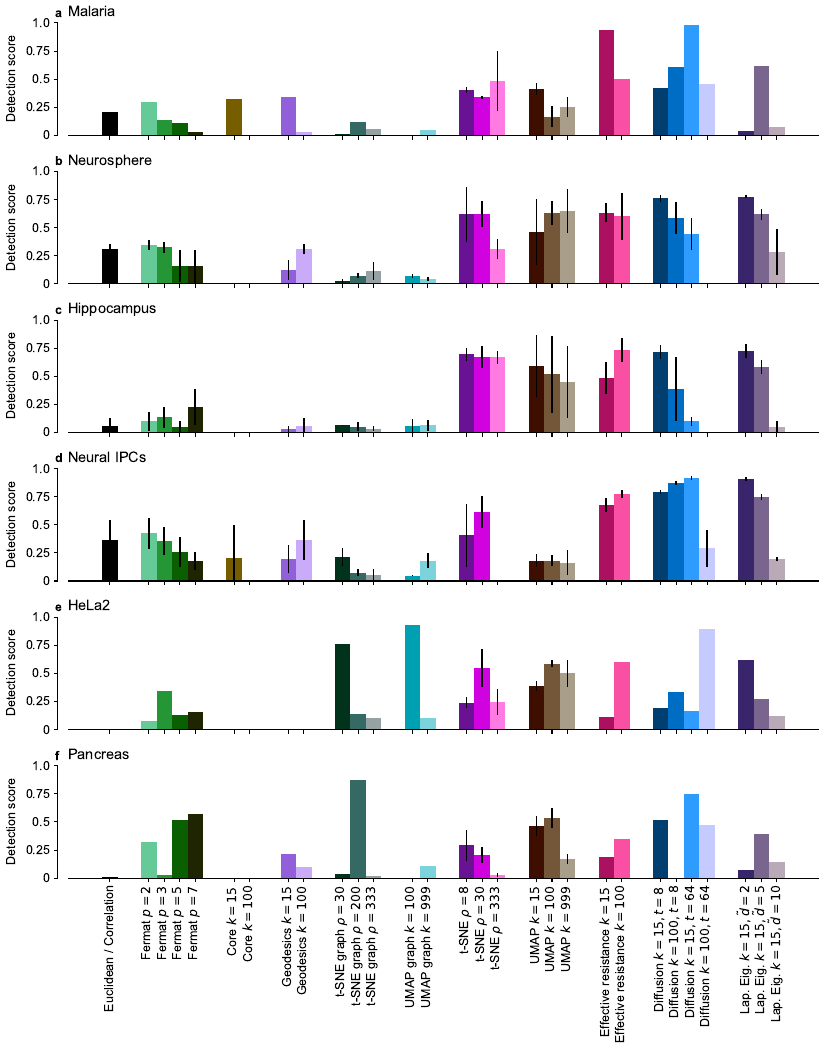}
    \caption{Detection scores for all hyperparameter settings for all six single-cell datasets. We omitted DTM as no setting passed the thresholding on any dataset. The black bar refers to correlation distance on the Malaria dataset and to Euclidean distance on the others. Extension of Figure~\ref{fig:scrnaseq}. $t$-SNE graph and UMAP graph could perform very well, but were very hyperparameter-dependent. Their embedding variants often performed well, but collapsed on some datasets. The spectral methods behaved similarly, but on average performed better.}
    \label{fig:scrnaseq_all}
\end{figure}

\end{document}